\newcommand{\xmark}{\ding{55}}%
\newtheorem{theorem}{Theorem}
\newtheorem{lemma}{Lemma}[section]
\newtheorem{corollary}[lemma]{Corollary}
\newtheorem{proposition}[theorem]{Proposition}
\newtheorem{definition}{Definition}
\newtheorem{remark}[lemma]{Remark}
\newcommand{\dt}{\Delta t}
\newcommand{\dd}{\mathop{}\!d}
\DeclareMathOperator{\hippo}{\mathsf{hippo}}
\DeclareMathOperator*{\diag}{diag}
\newcommand{\F}{\mathbb{F}}
\newcommand{\R}{\mathbb{R}}
\newcommand{\calG}{\mathcal G}
\newcommand{\parens}[1]{\left( {#1}\right)}
\newcommand{\set}[1]{\left\{ #1 \right\}}
\newcommand{\vA}{\mathbf{A}}
\newcommand{\vB}{\mathbf{B}}
\newcommand{\vC}{\mathbf{C}}
\newcommand{\vD}{\mathbf{D}}
\newcommand{\vE}{\mathbf{E}}
\newcommand{\vI}{\mathbf{I}}
\newcommand{\vM}{\mathbf{M}}
\newcommand{\vP}{\mathbf{P}}
\newcommand{\vQ}{\mathbf{Q}}
\newcommand{\vR}{\mathbf{R}}
\newcommand{\vS}{\mathbf{S}}
\newcommand{\vU}{\mathbf{U}}
\newcommand{\vV}{\mathbf{V}}
\newcommand{\vb}{\mathbf{b}}
\newcommand{\vx}{\mathbf{x}}
\newcommand{\vz}{\mathbf{z}}
\newcommand{\vzero}{\mathbf{0}}
\newcommand{\jab}[1]{{J_{#1}^{\alpha,\beta}}}
\newcommand{\japob}[2]{{J_{#1}^{\alpha#2,\beta}}}
\newcommand{\jabpo}[2]{{J_{#1}^{\alpha,\beta#2}}}
\newcommand{\jabp}[2]{{J_{#1}^{\alpha#2,\beta#2}}}
\newcommand{\pab}[1]{{p_{#1}^{\alpha,\beta}}}
\newcommand{\x}{{Y}}
    \newlength{\defbaselineskip}
  \title{Combining Recurrent, Convolutional, and Continuous-time Models with Linear State-Space Layers}
  \author[$\dagger$]{Albert Gu}
  \author[$\dagger$]{Isys Johnson}
  \author[$\dagger$]{Karan Goel}
  \author[$\dagger$]{Khaled Saab}
  \author[$\dagger$]{Tri Dao}
  \author[$\ddagger$]{Atri Rudra}
  \author[$\dagger$]{Christopher R{\'e}}
  \affil[$\dagger$]{Department of Computer Science, Stanford University}
  \affil[$\ddagger$]{Department of Computer Science and Engineering, University at Buffalo, SUNY\vspace{4pt}}
  \affil[ ]{{\texttt{\{albertgu,knrg,ksaab,trid\}@stanford.edu}, \texttt{\{isysjohn,atri\}@buffalo.edu}, \texttt{chrismre@cs.stanford.edu}}}
\title{Combining Recurrent, Convolutional, \\ and Continuous-time Models with \\ Linear State-Space Layers}
\author{%
\hspace{-1.3em}
Albert Gu$^\dagger$,
Isys Johnson$^\ddagger$,
Karan Goel$^\dagger$,
Khaled Saab$^\ast$,
Tri Dao$^\dagger$,
Atri Rudra$^\ddagger$,
Christopher R{\'e}$^\dagger$
\\
\hspace{-2.3em}$^\dagger$ Department of Computer Science, Stanford University\\
\hspace{-2.3em}$^\ast$ Department of Electrical Engineering, Stanford University\\
\hspace{-2.3em}$^\ddagger$ Department of Computer Science and Engineering, University at Buffalo, SUNY\\
{\small\texttt{\{albertgu,knrg,ksaab,trid\}@stanford.edu}, \texttt{chrismre@cs.stanford.edu}}
\\
{\small\texttt{\{isysjohn,atri\}@buffalo.edu}}
}
\begin{document}

\maketitle

\begin{abstract}
  Recurrent neural networks (RNNs), temporal convolutions, and neural differential equations (NDEs) are popular families of deep learning models for time-series data,
  each with unique strengths and tradeoffs in modeling power and computational efficiency.
  We introduce a simple sequence model inspired by control systems that generalizes these approaches while addressing their shortcomings.
  The Linear State-Space Layer (LSSL) maps a sequence $u \mapsto y$ by simply simulating a linear continuous-time state-space representation $\dot{x} = Ax + Bu, y = Cx + Du$.
  Theoretically, we show that LSSL models are closely related to the three aforementioned families of models and inherit their strengths.
  For example, they generalize convolutions to continuous-time, explain common RNN heuristics,
  and share features of NDEs such as time-scale adaptation.
  We then incorporate and generalize recent theory on continuous-time memorization to introduce a trainable subset of structured matrices $A$ that endow LSSLs with long-range memory.
  Empirically, stacking LSSL layers into a simple deep neural network obtains state-of-the-art results across time series benchmarks for long dependencies in sequential image classification, real-world healthcare regression tasks, and speech.
  On a difficult speech classification task with length-16000 sequences, LSSL outperforms prior approaches by 24 accuracy points, and even outperforms
baselines that use hand-crafted features on 100x shorter sequences.
\end{abstract}

\section{Introduction}
\label{sec:intro}

A longstanding challenge in machine learning is efficiently modeling sequential data longer than a few thousand time steps.
The usual paradigms for designing sequence models involve recurrence (e.g. RNNs), convolutions (e.g. CNNs), or differential equations (e.g. NDEs),
which each come with tradeoffs.
For example, RNNs are a natural stateful model for sequential data that require only constant computation/storage per time step, but are slow to train and suffer from optimization difficulties (e.g., the "vanishing gradient problem" \citep{pascanu2013difficulty}), which empirically limits their ability to handle long sequences.
CNNs encode local context and enjoy fast, parallelizable training, but are not sequential, resulting in more expensive inference and an inherent limitation on the context length.
NDEs are a principled mathematical model that can theoretically address continuous-time problems and long-term dependencies \citep{morrill2021neural}, but are
very inefficient.

Ideally, a model family would combine the strengths of these paradigms, providing properties like parallelizable training (convolutional), stateful inference (recurrence) and time-scale adaptation (differential equations), while handling very long sequences in a computationally efficient way.
Several recent works have turned to this question.
These include the CKConv, which models a continuous convolution kernel~\citep{romero2021ckconv}; several ODE-inspired RNNs, such as the UnICORNN~\citep{rusch2021unicornn};
the LMU, which speeds up
a specific linear recurrence using convolutions~\citep{voelker2019legendre,chilkuri2021parallelizing};
and HiPPO~\citep{gu2020hippo}, a generalization of the LMU that introduces a theoretical framework for continuous-time memorization.
However, these model families come at the price of reduced \emph{expressivity}: 
intuitively, a family that is both convolutional and recurrent should be more restrictive than either.

\begin{figure}[!t]
    \centering
    \includegraphics[width=\linewidth]{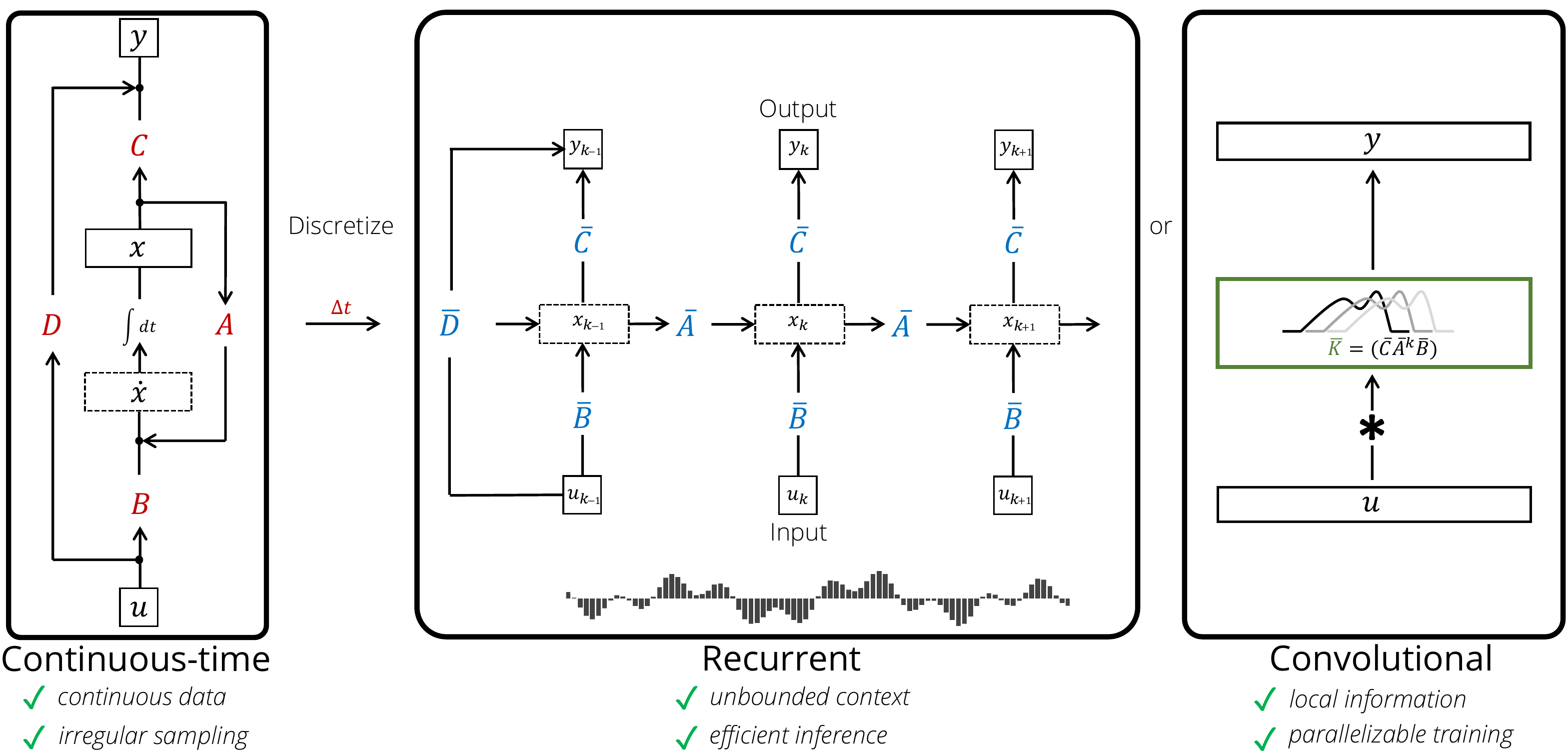}
    \caption{(\textbf{Three views of the LSSL})
      A \textbf{Linear State Space Layer} layer is a map \( u_t \in \mathbb{R} \to y_t \in \mathbb{R} \), where each feature \( u_t \mapsto y_t \) is defined by discretizing a state-space model \( A, B, C, D \) with a parameter \( \dt \). %
  The underlying state space model defines a discrete recurrence through combining the state matrix \( A \) and timescale \( \dt \) into a transition matrix \( \overline{A} \).
  (\textbf{Left})
  As an implicit continuous model, irregularly-spaced data can be handled by discretizing the same matrix \( A \) using a different timescale \( \dt \).
  (\textbf{Center}) As a recurrent model, inference can be performed efficiently by computing the layer \emph{timewise} (i.e., one vertical slice at a time \( (u_t, x_t, y_t), (u_{t+1}, x_{t+1}, y_{t+1}), \dots \)), by unrolling the linear recurrence.
  (\textbf{Right}) As a convolutional model, training can be performed efficiently by computing the layer \emph{depthwise} in parallel (i.e., one horizontal slice at a time \( (u_t)_{t \in [L]}, (y_t)_{t \in [L]}, \dots \)), by convolving with a particular filter.
}
\label{fig:splash}
\end{figure}

Our first goal is to construct an expressive model family that
combines all 3 paradigms while preserving their strengths.
The \textbf{Linear State-Space Layer (LSSL)} is a simple sequence model that maps a 1-dimensional function or sequence $u(t) \mapsto y(t)$ through an implicit state \( x(t) \) by simulating a linear continuous-time state-space representation in discrete-time
\begin{align}
  \dot{x}(t) &= Ax(t) + Bu(t) \label{eq:1} \\
  y(t) &= Cx(t) + Du(t) \label{eq:2}
  ,
\end{align}
where \( A \) controls the evolution of the system and \( B, C, D \) are projection parameters.
The LSSL can be viewed as an instantiation of each family, inheriting their strengths (\cref{fig:splash}):

\begin{itemize}
  \item \textbf{LSSLs are recurrent.}
    If a discrete step-size $\dt$ is specified, the LSSL can be discretized into a linear recurrence using standard techniques, and simulated during inference as a stateful recurrent model with constant memory and computation per time step.
  \item \textbf{LSSLs are convolutional.}
    The linear time-invariant systems defined by \eqref{eq:1}+\eqref{eq:2} are known to be explicitly representable as a continuous convolution.
    Moreover, the discrete-time version can be parallelized during training using convolutions~\citep{chilkuri2021parallelizing,romero2021ckconv}.
  \item \textbf{LSSLs are continuous-time.}
    The LSSL itself is a differential equation. As such, it can perform unique applications of continuous-time models, such as simulating continuous processes, handling missing data~\cite{rubanova2019latent}, and adapting to different timescales.
\end{itemize}

Surprisingly, we show that LSSLs do not sacrifice expressivity, and in fact generalize convolutions and RNNs.
First, classical results from control theory imply that all 1-D convolutional kernels can be approximated by an LSSL~\citep{williams2007linear}.
Additionally, we provide two results relating RNNs and ODEs that may be of broader interest,
e.g.\ showing that some RNN architectural heuristics (such as gating mechanisms) are related to the step-size \( \dt \) and can actually be derived from ODE approximations.
As corollaries of these results, we show that popular RNN methods are special cases of LSSLs.

The generality of LSSLs does come with tradeoffs.
In particular, we describe and address two challenges that naive LSSL instantiations face when handling long sequences:
(i) they inherit the limitations of both RNNs and CNNs at remembering long dependencies, and
(ii) choosing the state matrix \( A \) and timescale \( \dt \) appropriately are critical to their performance, yet learning them is computationally infeasible.
We simultaneously address these challenges by specializing LSSLs using a carefully chosen class of structured matrices \( A \),
such that
(i) these matrices generalize prior work on continuous-time memory \citep{gu2020hippo} and mathematically capture long dependencies with respect to a learnable family of measures, and
(ii) with new algorithms, LSSLs with these matrices \( A \) can be theoretically sped up under certain computation models, even while learning the measure \( A \) and timescale \( \dt \).

We empirically validate that LSSLs are widely effective on benchmark datasets and very long time series from healthcare sensor data, images, and speech.
\begin{itemize}[leftmargin=*]%
  \item
On benchmark datasets, LSSLs obtain
SoTA over recent RNN, CNN, and NDE-based methods
across sequential image classification tasks (e.g., by over 10\% accuracy on sequential CIFAR)
and healthcare regression tasks with length-4000 time series (by up to 80\% reduction in RMSE).

  \item
To showcase the potential of LSSLs to unlock applications with extremely long sequences,
we introduce a new sequential CelebA classification task with length-38000 sequences.
A small LSSL
comes within 2.16 accuracy points of a specialized ResNet-18 vision architecture that has 10x more parameters and is trained directly on images.

  \item
Finally, we test LSSLs on a difficult dataset of high-resolution speech clips,
where usual speech pipelines pre-process the signals to reduce the length by 100x.
When training on the \emph{raw} length-16000 signals,
the LSSL not only (i) outperforms previous methods %
by over 20 accuracy points in 1/5 the training time,
but (ii) outperforms all baselines that use the pre-processed length-160 sequences,
overcoming the limitations of hand-crafted feature engineering.
\end{itemize}

\textbf{Summary of Contributions}
\begin{itemize}[leftmargin=*]%
  \item We introduce Linear State-Space Layers (LSSLs), a simple sequence-to-sequence transformation that shares the modeling advantages of recurrent, convolutional, and continuous-time methods.
    Conversely, we show that RNNs and CNNs can be seen as special cases of LSSLs (\cref{sec:lssl}).
  \item We prove that a structured subclass of LSSLs can learn representations that solve continuous-time memorization, allowing it to adapt its measure and timescale (\cref{sec:mlssl}).
    We also provide new algorithms for these LSSLs, showing that they can be sped up computationally under an arithmetic complexity model \cref{sec:sllssl}.
  \item Empirically, we show that LSSLs stacked into a deep neural network are widely effective on time series data, even (or \emph{especially}) on extremely long sequences (\cref{sec:experiments}).
\end{itemize}

\section{Technical Background}
\label{sec:background}
We summarize the preliminaries on differential equations that are necessary for this work.
We first
introduce two standard approximation schemes for differential equations that we will use to convert continuous-time models to discrete-time, and will be used in our results on understanding RNNs.
We give further context on the step size or timescale \( \dt \), which is a particularly important parameter involved in this approximation process.
Finally, we provide a summary of the HiPPO framework for continuous-time memorization~\citep{gu2020hippo},
which will give us a mathematical tool for constructing LSSLs that can address long-term dependencies.

\paragraph{Approximations of differential equations.}
Any differential equation \( \dot{x}(t) = f(t, x(t)) \) has an equivalent \emph{integral equation} 
$ x(t) = x(t_0) + \int_{t_0}^{t} f(s, x(s)) \dd s$.
This can be numerically solved by storing some approximation for $x$,
and keeping it fixed inside $f(t, x)$ while iterating the equation.
For example, \emph{Picard iteration} is often used to prove the existence of solutions to ODEs by iterating the equation
$  x_{i+1}(t) := x_i(t_0) + \int_{t_0}^{t} f(s, x_i(s)) \dd s$
.
In other words, it finds a sequence of functions $x_0(t), x_1(t), \dots$ that approximate the solution \( x(t) \) of the integral equation.

\paragraph{Discretization.}
On the other hand,
for a desired sequence of discrete times \( t_i \), approximations to \( x(t_0), x(t_1), \dots \) can be found
by iterating the equation
$  x(t_{i+1}) = x(t_i) + \int_{t_i}^{t_{i+1}} f(s, x(s)) \dd s$.
Different ways of approximating the RHS integral lead to different discretization schemes.
We single out a discretization method called the \textbf{generalized bilinear transform (GBT)} which is specialized to linear ODEs of the form \eqref{eq:1}.
Given a \emph{step size} \( \dt \), the GBT update is
\begin{equation}
  \label{eq:gbt}
  x(t + \dt) = (I - \alpha \dt \cdot A)^{-1}(I + (1 - \alpha) \dt \cdot A) x(t) + \dt (I - \alpha \dt \cdot A)^{-1} B \cdot u(t)
  .
\end{equation}
Three important cases are:
\( \alpha=0 \) becomes the classic \emph{Euler method} which is simply the first-order approximation \( x(t+\dt) = x(t) + \dt \cdot x'(t) \);
\( \alpha=1 \) is called the \emph{backward Euler} method;
and \( \alpha=\frac{1}{2} \) is called the \emph{bilinear} method, which preserves the stability of the system~\cite{zhang2007performance}.

In \cref{sec:lssl-expressivity} we will show that the backward Euler method and Picard iteration are actually related to RNNs.
On the other hand, the bilinear discretization will be our main method for computing accurate discrete-time approximations of our continuous-time models.
In particular, define \( \overline{A} \) and \( \overline{B} \) to be the matrices appearing in \eqref{eq:gbt} for \( \alpha = \frac{1}{2} \).
Then the discrete-time state-space model
is
\begin{align}
  x_{t} &= \overline{A} x_{t-1} + \overline{B} u_t \label{eq:1-discrete} \\
  y_t &= C x_t + D u_t \label{eq:2-discrete}
  .
\end{align}

\paragraph{\( \dt \) as a timescale.}
In most models, the length of dependencies they can capture is roughly proportional to \( \frac{1}{\dt} \).
Thus we also refer to the step size \( \dt \) as a \emph{timescale}.
This is an intrinsic part of converting a continuous-time ODE into a discrete-time recurrence,
and most ODE-based RNN models have it as an important and non-trainable hyperparameter~\cite{voelker2019legendre,gu2020hippo,rusch2021unicornn}.
On the other hand, in \cref{sec:lssl-expressivity} we show that the gating mechanism of classical RNNs is a version of learning \( \dt \).
Moreover when viewed as a CNN, the timescale \( \dt \) can be viewed as controlling the width of the convolution kernel (\cref{sec:lssl-expressivity}).
Ideally, all ODE-based sequence models would be able to automatically learn the proper timescales.

\paragraph{Continuous-time memory.}
Consider an input function \( u(t) \), a fixed probability measure $\omega(t)$, and a sequence of \( N \) basis functions such as polynomials.
At every time \( t \), the history of \( u \) before time \( t \) can be projected onto this basis,
which yields a vector of coefficients \( x(t) \in \mathbb{R}^N \) that represents an optimal approximation of the history of $u$ with respect to the provided measure $\omega$.
The map taking the function \( u(t) \in \mathbb{R} \) to coefficients \( x(t) \in \mathbb{R}^N \) is called the \textbf{High-Order Polynomial Projection Operator (HiPPO)} with respect to the measure \( \omega \).
In special cases such as the uniform measure \( \omega = \mathbb{I}\{[0,1]\} \) and the exponentially-decaying measure \( \omega(t) = \exp(-t) \),
\citet{gu2020hippo} showed that \( x(t) \) satisfies a differential equation $\dot{x}(t) = A(t) x(t) + B(t) u(t)$ (i.e., \eqref{eq:1}) and derived closed forms for the matrix \( A \).
Their framework provides a principled way to design memory models handling long dependencies;
however, they prove only these few special cases.

\section{Linear State-Space Layers (LSSL)}
\label{sec:lssl}

We define our main abstraction, a model family that generalizes recurrence and convolutions.
\cref{sec:lssl-interpretation} first formally defines the LSSL, then discusses how to compute it
with multiple views.
Conversely, \cref{sec:lssl-expressivity} shows that LSSLs are related to mechanisms of the most popular RNNs.

\subsection{Different Views of the LSSL}
\label{sec:lssl-interpretation}

Given a fixed state space representation $A, B, C, D$, an LSSL is the sequence-to-sequence mapping defined by discretizing the linear state-space model (1) and (2).

Concretely, an LSSL layer has parameters
$A, B, C, D$, and $\dt$.
It operates on an input \( u \in \mathbb{R}^{L \times H} \) representing a sequence of length \( L \) where each timestep has an \( H \)-dimensional feature vector.
Each feature \( h \in [H] \) defines a sequence \( (u_t^{(h)})_{t \in [L]} \),
which is combined with a timescale \( \dt_h \) to define an output \( y^{(h)} \in \mathbb{R}^L \) via the discretized state-space model \eqref{eq:1-discrete}+\eqref{eq:2-discrete}.

Computationally, the discrete-time LSSL can be viewed in multiple ways (\cref{fig:splash}).

\paragraph{As a recurrence.}
The recurrent state
\( x_{t-1} \in \mathbb{R}^{H \times N} \)
carries the context of all inputs before time \( t \).
The current state \( x_t \) and output \( y_t \) can be computed by simply following equations \eqref{eq:1-discrete}+\eqref{eq:2-discrete}. %
Thus the LSSL is a recurrent model
with efficient and stateful inference,
which can consume a (potentially unbounded) sequence of inputs while requiring fixed computation/storage per time step.

\paragraph{As a convolution.}
For simplicity let the initial state be \( x_{-1} = 0 \). %
Then \eqref{eq:1-discrete}+\eqref{eq:2-discrete} explicitly yields %
\small
\begin{equation}
  \label{eq:convolution}
  y_k = C \left( \overline{A} \right)^k \overline{B} u_0 + C \left( \overline{A} \right)^{k-1} \overline{B} u_1 + \dots + C \overline{A} \overline{B} u_{k-1} + \overline{B} u_k
  + D u_k
  .
\end{equation}
\normalsize
Then \( y \) is simply the (non-circular) convolution \( y = \mathcal{K}_L(\overline{A}, \overline{B}, C) \ast u + D u \),
where
\begin{equation}%
  \label{eq:krylov}
  \mathcal{K}_L(A, B, C) = \left(C A^i B\right)_{i \in [L]} \in \mathbb{R}^L = (CB, CAB, \dots, CA^{L-1}B)
  .
\end{equation}

Thus the LSSL can be viewed as a convolutional model where
the entire output \( y \in \mathbb{R}^{H \times L} \) can be computed at once by a convolution,
which can be efficiently implemented with three FFTs.

\paragraph{The computational bottleneck.}
We make a note that the bottleneck of (i) the recurrence view is \textbf{matrix-vector multiplication (MVM)} by the discretized state matrix \( \overline{A} \) when simulating \eqref{eq:1-discrete},
and (ii) the convolutional view is computing the \textbf{Krylov function} $\mathcal{K}_L$ \eqref{eq:krylov}.
Throughout this section we assumed the LSSL parameters were fixed, which means that \( \overline{A} \) and \( \mathcal{K}_L(A, B, C) \) can be cached for efficiency.
However, learning the parameters \( \overline{A} \) and \( \dt \) would involve repeatedly re-computing these, which is infeasible in practice.
We revisit and solve this problem in \cref{sec:sllssl}.

\subsection{Expressivity of LSSLs}
\label{sec:lssl-expressivity}
For a model to be both recurrent and convolutional, one might expect it to be limited in other ways.
Indeed, while \citep{chilkuri2021parallelizing,romero2021ckconv} also observe that certain recurrences can be replaced with a convolution, they note that it is not obvious if convolutions can be replaced by recurrences.
Moreover, while the LSSL is a linear recurrence, popular RNN models are \emph{nonlinear} sequence models with activation functions between each time step.
We now show that LSSLs surprisingly do not have limited expressivity.

\paragraph{Convolutions are LSSLs.}
A well-known fact about state-space systems \eqref{eq:1}+\eqref{eq:2} is that the output $y$ is related to the input $u$ by a convolution $y(t) = \int h(\tau) u(t-\tau) d \tau$ with the \emph{impulse response} $h$ of the system.
Conversely, a convolutional filter $h$ that is a rational function of degree $N$ can be represented by a state-space model of size \( N \)~\citep{williams2007linear}.
Thus, an arbitrary convolutional filter \( h \) can be approximated by a rational function (e.g., by Pad\'e approximants) and represented by an LSSL.

In the particular case of LSSLs with HiPPO matrices (\cref{sec:background,sec:mlssl}), there is another intuitive interpretation of how LSSL relate to convolutions.
Consider the special case when \( A \) corresponds to a uniform measure (in the literature known as the LMU \citep{voelker2019legendre} or HiPPO-LegT \citep{gu2020hippo} matrix).
Then for a fixed \( dt \), equation \eqref{eq:1} is simply memorizing the input within sliding windows of \( \frac{1}{\dt} \) elements,
and equation \eqref{eq:2} extracts features from this window.
Thus the LSSL can be interpreted as automatically learning convolution filters with a learnable kernel width.

\paragraph{RNNs are LSSLs.}
We show two results about RNNs that may be of broader interest.
Our first result says that the ubiquitous \emph{gating mechanism} of RNNs, commonly perceived as a heuristic to smooth optimization \citep{lstm}, is actually the analog of a step size or timescale \( \dt \).
\begin{lemma}%
  \label{lmm:gates}
  A (1-D) gated recurrence \( x_{t} = (1-\sigma(z)) x_{t-1} + \sigma(z) u_t \), where \( \sigma \) is the sigmoid function and \( z \) is an arbitrary expression,
  can be viewed as the GBT(\( \alpha=1 \)) (i.e., \emph{backwards-Euler}) discretization of a 1-D linear ODE \( \dot{x}(t) = -x(t) + u(t) \).
\end{lemma}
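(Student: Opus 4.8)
The plan is to directly instantiate the GBT update \eqref{eq:gbt} for this particular scalar ODE at the particular value $\alpha = 1$, and then reparametrize the step size so that the resulting coefficients match the gated recurrence on the nose.

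First I would specialize \eqref{eq:gbt} to the $1$-D linear ODE $\dot x(t) = -x(t) + u(t)$, i.e.\ take $A = -1$ and $B = 1$ (both scalars). With $\alpha = 1$ the term $(1-\alpha)\dt\cdot A$ vanishes and $(I - \alpha \dt \cdot A)^{-1}$ becomes $(1+\dt)^{-1}$, so the backward-Euler update reduces to
\[
  x(t+\dt) = \frac{1}{1+\dt}\, x(t) + \frac{\dt}{1+\dt}\, u(t).
\]
Reading this off as a discrete recurrence $x_t = \overline{A}\, x_{t-1} + \overline{B}\, u_t$ gives $\overline{A} = \tfrac{1}{1+\dt}$ and $\overline{B} = \tfrac{\dt}{1+\dt}$. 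The key structural observation is that $\overline{A} + \overline{B} = 1$, so this update is automatically a convex combination of $x_{t-1}$ and $u_t$ — exactly the form of the gated recurrence — and hence is determined by a single scalar in $(0,1)$.

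Next I would match coefficients. It suffices to exhibit a step size $\dt = \dt(z)$ with $\tfrac{\dt}{1+\dt} = \sigma(z)$; the other coefficient then agrees for free, since $\tfrac{1}{1+\dt} = 1 - \tfrac{\dt}{1+\dt} = 1-\sigma(z)$. Solving $\tfrac{\dt}{1+\dt} = \sigma(z)$ yields $\dt = \tfrac{\sigma(z)}{1-\sigma(z)}$, and substituting $\sigma(z) = (1+e^{-z})^{-1}$ collapses this to the clean identity $\dt = e^{z}$. Therefore, with timescale $\dt = \exp(z)$, the backward-Euler discretization of $\dot x = -x + u$ is precisely $x_t = (1-\sigma(z))\,x_{t-1} + \sigma(z)\,u_t$, which is the claim; moreover this makes explicit that the gate value $\sigma(z)$ \emph{is} the (reparametrized) step size.

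There is essentially no analytic obstacle here — the whole content is the two observations above: (i) the backward-Euler coefficients of this normalized ODE are forced to sum to $1$, so they live on a one-parameter family; and (ii) the sigmoid reparametrization $\dt = e^{z}$ is exactly the change of variables carrying "arbitrary positive step size" to "arbitrary gate in $(0,1)$". The only point requiring a little care is bookkeeping around the phrase "$z$ is an arbitrary expression": in a real gated RNN $z = z_t$ depends on the current input (and possibly state), so one should note that the derivation is pointwise in $t$ and simply treats $\dt_t = e^{z_t}$ as a time-varying step size, which the discretization framework of \cref{sec:background} already permits.
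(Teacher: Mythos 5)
Your proposal is correct and follows essentially the same route as the paper: it substitutes $A=-1$, $B=1$, $\alpha=1$ into the GBT update and uses the parameterization $\dt = \exp(z)$, under which $\dt/(1+\dt) = \sigma(z)$ and $1/(1+\dt) = 1-\sigma(z)$, recovering the gated recurrence exactly (the same algebra appears in the paper's \cref{lmm:rnn-dynamics}). Your additional observations — that the coefficients are forced to sum to $1$ and that $z$ may be time-varying — are consistent with the paper's treatment and do not change the argument.
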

\begin{proof}%
  Applying a discretization requires a positive step size \( \dt \).
  The simplest way to parameterize a positive function is via the exponential function \( \dt = \exp(z) \) applied to any expression \( z \).
  Substituting this into \eqref{eq:gbt} with \( A = -1, B = 1, \alpha = 1 \)
  exactly produces the gated recurrence.
\end{proof}

While~\cref{lmm:gates} involves approximating continuous systems using discretization, the second result is about approximating them using Picard iteration (\cref{sec:background}).
Roughly speaking, each layer of a deep \emph{linear} RNN can be viewed as successive Picard iterates \( x_0(t), x_1(t), ... \) approximating a function \( x(t) \) defined by a \emph{non-linear} ODE.
This shows that we do not lose modeling power by using linear instead of non-linear recurrences, and that the nonlinearity can instead be ``moved'' to the depth direction of deep neural networks to improve speed without sacrificing expressivity.
\begin{lemma}%
  \label{lmm:picard}
  (Infinitely) deep stacked LSSL layers of order $N=1$ with \emph{position-wise} non-linear functions can approximate any non-linear ODE $\dot{x}(t) = -x + f(t, x(t))$.
\end{lemma}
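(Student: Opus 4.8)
The plan is to realize the solution $x(t)$ of the nonlinear ODE $\dot x(t) = -x(t) + f(t,x(t))$ as the limit of Picard iterates, and then to identify each Picard iterate with the output of one more LSSL layer of order $N=1$ equipped with a position-wise nonlinearity. Concretely, I would first recall (from the Technical Background) that the Picard iteration for this ODE with initial condition $x(t_0)$ is $x_{i+1}(t) = x(t_0) + \int_{t_0}^{t}\bigl(-x_i(s) + f(s,x_i(s))\bigr)\dd s$, and that $x_i(t) \to x(t)$ uniformly on compact intervals whenever $f$ is, say, Lipschitz in its second argument (the standard Picard--Lindel\"of hypothesis). So the target is: build a stack of $N=1$ LSSL layers whose $i$-th layer computes exactly (or arbitrarily well approximates) $x_i \mapsto x_{i+1}$.

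Next I would exhibit the single-layer building block. An order-$N=1$ LSSL in continuous time is a scalar ODE $\dot w(t) = aw(t) + bv(t)$, $y(t) = cw(t)+dv(t)$, and with $a=-1$, $b=1$, $c=1$, $d=0$ this is the linear integrator $w(t) = w(t_0)e^{-(t-t_0)} + \int_{t_0}^t e^{-(t-s)} v(s)\dd s$ --- not quite the plain integral appearing in Picard iteration, but this is exactly the point: the ODE $\dot x = -x + f(t,x)$ we are approximating \emph{also} has the $-x$ term, so the ``integrating factor'' form $x(t) = x(t_0)e^{-(t-t_0)} + \int_{t_0}^t e^{-(t-s)}f(s,x(s))\dd s$ is the natural fixed-point map, and its Picard iterates $x_{i+1}(t) = x(t_0)e^{-(t-t_0)} + \int_{t_0}^t e^{-(t-s)} f(s, x_i(s))\dd s$ are computed by: (1) apply the position-wise nonlinearity $s,w \mapsto f(s,w)$ to the incoming trajectory $x_i$, producing $v_i(s) = f(s,x_i(s))$; (2) feed $v_i$ through the order-1 linear LSSL with $(a,b,c,d)=(-1,1,1,0)$, which outputs exactly $x_{i+1}$ (carrying the initial condition $x(t_0)$ in $w(t_0)$). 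So one LSSL layer, preceded or followed by a position-wise map realizing $f$, implements one Picard step, and stacking $k$ of them gives $x_k$; letting the depth go to infinity gives $x(t)$ by the convergence of Picard iteration. I would state the conclusion with the same ``(infinitely) deep'' and ``approximate'' hedges as the lemma: finitely many layers approximate $x$ to accuracy controlled by the Picard contraction rate (e.g.\ the Lipschitz constant of $f$ times the interval length), and position-wise nonlinear functions are exactly what is needed to represent the map $w \mapsto f(t,w)$ (uniformly approximable on compacts, e.g.\ by the usual universal-approximation arguments if one insists $f$ be a neural nonlinearity).

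The main obstacle I anticipate is being precise about the role of the position-wise nonlinearity and the initial condition. The nonlinearity must implement $f(t,\cdot)$, which is time-dependent; this is fine if one allows $t$ (or a positional encoding of it) to be an extra input coordinate to the position-wise function, but it is worth stating that assumption cleanly rather than hand-waving it, since a truly time-invariant position-wise map can only handle autonomous $f(x)$. Likewise I would be careful that each layer must \emph{re-inject} the same forcing term: the homogeneous $-x$ part is already baked into the linear LSSL dynamics, but the nonlinear part $f$ has to be recomputed from the current iterate at every layer, so the architecture is a stack where each layer takes the previous layer's output, passes it through the nonlinearity, and then through the fixed linear order-1 system; I should make sure the statement's phrase ``stacked LSSL layers \ldots with position-wise non-linear functions'' is read in exactly this interleaved way. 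Finally, a minor point to handle with care: the lemma as phrased approximates a \emph{fixed} ODE on a \emph{fixed} interval, and the convergence is only uniform on compact time intervals, so I would either restrict to $t$ in a bounded window or note the local-in-time nature of the guarantee, matching the informal ``roughly speaking'' framing in the surrounding text.
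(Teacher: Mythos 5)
Your proposal is correct and follows essentially the same route as the paper: the paper's Lemma C.4 proves convergence of exactly your iteration $x_{i+1}(t) = e^{-(t-t_0)}x_0 + \int_{t_0}^{t} e^{-(t-s)} f(s, x_i(s))\,\dd s$ by the change of variables $z(t) = e^{t}x(t)$, which turns your variation-of-constants fixed-point map into a literal Picard iteration for the tilted ODE $\dot z = e^{t}f(t, e^{-t}z)$, and then specializes to the order-$1$ LSSL with $(A,B,C,D)=(-1,1,1,0)$ interleaved with the position-wise maps $u^{(\ell)}(t)=f(t,x^{(\ell-1)}(t))$. The only difference is presentational: you assert the contraction of the Duhamel operator directly, whereas the paper reduces it to the standard Picard--Lindel\"of statement via the exponential tilt.
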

We note that many of the most popular and effective RNN variants
such as the LSTM~\citep{lstm}, GRU~\citep{chung2014empirical}, QRNN~\citep{bradbury2016quasi}, and SRU~\citep{lei2017simple},
involve a hidden state \( x_t \in \mathbb{R}^H \) that involves independently ``gating'' the \( H \) hidden units.
Applying \cref{lmm:gates}, they actually also approximate an ODE of the form in \cref{lmm:picard}.
Thus LSSLs and these popular RNN models can be seen to all approximate the same type of underlying continuous dynamics,
by using Picard approximations in the depth direction and discretization (gates) in the time direction.
\cref{sec:lssl-proofs} gives precise statements and proofs.

\subsection{Deep LSSLs}

The basic LSSL is defined as a sequence-to-sequence map from \( \mathbbm{R}^L \to \mathbbm{R}^L \) on 1D sequences of length \( L \),
parameterized by parameters \( A \in \mathbbm{R}^{N \times N}, B \in \mathbbm{R}^{N \times 1}, C \in \mathbbm{R}^{1 \times N}, D \in \mathbbm{R}^{1 \times 1}, \dt \in \mathbbm{R} \).
Given an input sequence with hidden dimension \( H \) (in other words a feature dimension greater than \( 1 \)),
we simply broadcast the parameters \( B, C, D, \dt \) with an extra dimension \( H \).
Each of these \( H \) copies is learned independently, so that there are \( H \) different versions of a 1D LSSL processing each of the input features independently.
Overall, the standalone LSSL layer is a sequence-to-sequence map with the same interface as standard sequence model layers such as RNNs, CNNs, and Transformers.

The full LSSL architecture in a deep neural network is defined similarly to standard sequence models such as deep ResNets and Transformers,
involving stacking LSSL layers connected with normalization layers and residual connections.
Full architecture details are described in \cref{sec:model}, including the initialization of \( A \) and \( \dt \),
computational details,
and other architectural details.

\section{Combining LSSLs with Continuous-time Memorization}
\label{sec:structured-lssl}

In \cref{sec:lssl} we introduced the LSSL model and showed that it shares the strengths of convolutions and recurrences while also generalizing them.
We now discuss and address its main limitations,
in particular handling long dependencies (\cref{sec:mlssl}) and efficient computation (\cref{sec:sllssl}).

\subsection{Incorporating Long Dependencies into LSSLs}
\label{sec:mlssl}

The generality of LSSLs means they can inherit the issues of recurrences and convolutions at addressing long dependencies (\cref{sec:intro}). %
For example, viewed as a recurrence, repeated multiplication by \( \overline{A} \) could suffer from the \emph{vanishing gradients} problem \cite{pascanu2013difficulty,romero2021ckconv}.
We confirm empirically that LSSLs with random state matrices \( A \) are actually not effective (\cref{sec:experiments-ablations}) as a generic sequence model.

However, one advantage of these mathematical continuous-time models is that they are theoretically analyzable, and specific \( A \) matrices can be derived to address this issue.
In particular, %
the HiPPO framework (\cref{sec:background}) describes how to memorize a function in continuous time with respect to a measure \( \omega \)~\citep{gu2020hippo}.
This operator mapping a function to a continuous representation of its past is denoted \( \hippo(\omega) \), and was shown to have the form of equation~\eqref{eq:1} in three special cases.
However, these matrices are non-trainable in the sense that no other \( A \) matrices were known to be \( \hippo \) operators.

To address this, we theoretically
resolve the open question from \citep{gu2020hippo}, showing that \( \hippo(\omega) \) for \emph{any measure} \( \omega \)
\footnote{To be precise, the measures that correspond to orthogonal polynomials \citep{szego}.}
results in \eqref{eq:1} with a structured matrix \( A \).
\begin{theorem}[Informal]%
  \label{thm:hippo}
  For an arbitrary measure \( \omega \), the optimal memorization operator \( \operatorname*{hippo}(\omega) \) has the form \( \dot{x}(t) = A x(t) + B u(t) \) \eqref{eq:1} for a \emph{low recurrence-width (LRW)}~\citep{de2018two} state matrix \( A \).
\end{theorem}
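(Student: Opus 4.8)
The plan is to unwind the definition of $\hippo(\omega)$, differentiate the coefficient functions directly in $t$, and show that the resulting linear map is a tridiagonal matrix plus a low-rank correction — which is exactly a low recurrence-width (LRW) matrix. Fix $\omega$ and let $\{p_n\}_{n<N}$ be its orthonormal polynomials. Let $\omega^{(t)}$ be the time-dependent measure obtained from $\omega$ by an affine change of variables $s \mapsto \phi_t(s)$ (linear in $s$, with coefficients depending on $t$) that implements the desired ``window'' — a growing or sliding interval, or an exponential tilt — and write the state as $x_n(t) = \int u(s)\, p_n(\phi_t(s))\, d\omega^{(t)}(s)$, with the $\hippo(\omega)$ output a fixed linear readout of this vector.

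First I would differentiate $x_n(t)$. Two kinds of terms appear: (i) terms where $\tfrac{d}{dt}$ hits the argument $\phi_t(s)$ or the density of $\omega^{(t)}$, and (ii) a boundary term where $\tfrac{d}{dt}$ hits the moving endpoint of the domain, producing $u(t)$ times $p_n$ evaluated at the endpoint — precisely the vector $B$ in \eqref{eq:1}. For the type-(i) terms the key point is that since $\phi_t$ is affine in $s$, $\tfrac{\partial}{\partial t} p_n(\phi_t(s))$ equals $p_n'(\phi_t(s))$ times a polynomial of degree $\le 1$ in $s$; expanding this product back in $\{p_m\}$ uses (a) the three-term recurrence $x\, p_n = a_n p_{n+1} + b_n p_n + c_n p_{n-1}$, which couples index $n$ only to $n\pm 1$, and (b) the expansion of $p_n'$ over the $p_m$ with $m<n$. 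Combining these with orthogonality collapses the $m$-sum: most inner products $\langle p_n', p_m\rangle$ and $\langle x\, p_n', p_m\rangle$ vanish, the survivors form a banded pattern near the diagonal, and the only non-banded contributions come from a rank-one piece (the analogue of the boundary evaluation $\int p_n'\, d\omega$, resp.\ a Christoffel--Darboux kernel term). Assembling the pieces gives $\dot x(t) = A(t) x(t) + B(t) u(t)$ with $A(t) = (\text{tridiagonal from the recurrence coefficients}) + (\text{low-rank})$.

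Finally I would identify this structure with the LRW class of \citep{de2018two}: a matrix is LRW of bounded width when its rows are the coefficient sequences of a polynomial family obeying a short recurrence with bounded-degree coefficients, equivalently a (banded)$\,\cdot\,$(triangular) product up to low rank — exactly what a three-term recurrence produces. A diagonal normalization relating the orthonormal coefficients to a monic or ``recurrence-native'' basis contributes only a diagonal factor, which preserves the class; and the two special cases of \citep{gu2020hippo} (uniform and exponential $\omega$) fall out by substituting the Legendre and Laguerre recurrence coefficients.

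The main obstacle is making the second step uniform over all measures rather than family-by-family. Classical families have $\omega'/\omega$ rational (Pearson's equation), which makes $\langle p_n', p_m\rangle$ transparently sparse; for a general $\omega$ one must argue using only the universal three-term recurrence plus integration by parts, and control the boundary terms and the degree-$\le 1$ multiplier carefully so that the non-banded remainder is genuinely \emph{low-rank} rather than merely sparse. A secondary subtlety is choosing, for an arbitrary $\omega$, a time-reparametrization $\phi_t$ and measure flow $\omega^{(t)}$ such that (a) the scheme actually memorizes the history of $u$ and (b) the induced multiplier stays affine, since a nonlinear $\phi_t$ would increase the bandwidth and break the LRW conclusion.
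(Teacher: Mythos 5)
Your setup follows the same skeleton as the paper's proof: an affine reparameterization ($z = 2(\x-t)/\theta(t)+1$ in the paper's notation), differentiation of the coefficient integral via the Leibniz rule so that the moving endpoint produces the $B$ vector, and reduction of the interior terms to re-expanding $p_n'(z)$ and a degree-$\le 1$ multiple of $p_n'(z)$ back in the basis $\{p_m\}$. The gap is in the structural step that follows. You claim orthogonality makes ``most inner products $\langle p_n', p_m\rangle$ vanish,'' so the survivors are banded near the diagonal plus a rank-one piece, i.e.\ that $A$ is tridiagonal plus low rank. This is false for a general measure, and already for Legendre: $P_n' = (2n-1)P_{n-1} + (2n-5)P_{n-3} + \cdots$, so the derivative-expansion coefficient matrix is \emph{densely} lower triangular (the resulting HiPPO-LegS matrix has $A_{nk} = (2n+1)^{1/2}(2k+1)^{1/2}$ for all $k<n$, which is not tridiagonal plus low rank). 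The three-term recurrence controls multiplication by $z$, not differentiation, so it cannot collapse the $m$-sum. You flag exactly this as your ``main obstacle,'' but the proposed repair (integration by parts to show the non-banded remainder is low rank) cannot succeed: the remainder is a full lower triangle of full rank, and ``tridiagonal plus low rank'' is not the definition of low recurrence width in any case.

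The paper closes this gap by targeting a different structure and never claims bandedness. Writing $\vP$ for the matrix of coefficients of the orthonormal polynomials in the monomial basis, the expansion coefficients $\alpha_{n,j}$ of $p_n'$ (and $\beta_{n,j}$ of $(z-1)p_n'$) assemble into $\vA' = \vP\cdot\bigl(\mathrm{diag}(0,1,\dots,N-1)\cdot\vS^T\bigr)\cdot\vP^{-1}$ (\cref{lmm:pbm1-recurlen}, \cref{lmm:alpha-beta-rw}). The three-term recurrence enters only to show that $\vP$ itself has recurrence width $2$; the middle factor has width $1$; and products of LRW matrices and inverses of LRW matrices still admit $\tilde O(N)$ MVM by the cited results of \citet{de2018two}. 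That factorization---not bandedness---is the precise sense in which $A$ is ``low recurrence width'' (\cref{cor:A_1_structure}). The near-tridiagonal structure you are reaching for (more precisely, diagonal plus inverse-of-tridiagonal up to diagonal scaling, or $3$-quasiseparability) is genuinely special to the classical OP families, where Pearson's equation makes the derivative expansion sparse; that is the content of \cref{thm:jacobi} and \cref{thm:trid}, not of \cref{thm:hippo}.
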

For measures covering the classical orthogonal polynomials (OPs)~\cite{szego} (in particular, corresponding to Jacobi and Laguerre polynomials), there is even more structure.
\begin{corollary}%
  \label{thm:jacobi}
  For \( \omega \) corresponding to the classical OPs,
  \( \hippo(\omega) \) is \( 3 \)-quasiseparable.
\end{corollary}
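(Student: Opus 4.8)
The plan is to establish \Cref{thm:hippo} in a form explicit enough that \Cref{thm:jacobi} follows by specializing to the families of orthogonal polynomials whose three-term recurrence coefficients are known in closed form. First I would recall the HiPPO setup: fix a measure $\omega$ and an orthonormal basis $\{p_n\}_{n<N}$ of polynomials for $L^2(\omega)$, and let $x_n(t) = \langle u|_{(-\infty,t]}, p_n \rangle_{\omega_t}$ be the coefficients of the projected history. Differentiating under the integral sign, $\dot x_n(t)$ picks up two kinds of terms: (i) a boundary term from the moving endpoint of integration, which contributes the $B u(t)$ part and a rank-one-type coupling, and (ii) a term $\int u\, \dot p_n\, d\omega_t$ coming from the time-dependence of the measure/basis normalization. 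The key algebraic fact is that $\dot p_n$, and also $x\cdot p_n$ (which appears when the measure is rescaled in time), both expand in the basis $\{p_m\}$ with coefficients governed by the \emph{three-term recurrence} $x\, p_n = a_n p_{n+1} + b_n p_n + c_n p_{n-1}$ satisfied by any orthogonal polynomial family. Chaining these expansions shows $A$ is built from a tridiagonal "multiplication-by-$x$" operator together with at most a constant number of low-rank corrections from the boundary terms and the derivative-of-normalization terms.

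The second step is to package this structure in the language the paper wants: "low recurrence-width" in the sense of \citet{de2018two}. I would argue that a matrix of the form $A = T + (\text{constant-rank correction})$, where $T$ is tridiagonal, has recurrence width $O(1)$, since tridiagonal matrices are exactly the width-1 case and adding a rank-$r$ perturbation raises the width by $O(r)$; equivalently, the sequence $A^k B$ obeys a short linear recurrence with polynomially-bounded-degree coefficients. This gives \Cref{thm:hippo}. For \Cref{thm:jacobi}, I would then invoke the sharper notion of \emph{quasiseparability}: a matrix is $t$-quasiseparable if every submatrix strictly below (resp.\ above) the diagonal has rank $\le t$. The claim is that for the classical OPs — Jacobi $\pab{n}$ and Laguerre — the explicit HiPPO matrix $A$ is lower-triangular-plus-low-rank with the off-diagonal blocks having rank exactly bounded by $3$. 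I would verify this by writing down $A$ from the closed forms: the Jacobi/Laguerre three-term recurrence coefficients $a_n,b_n,c_n$ are rational in $n$, and the time-scaling contributes a term like $-\frac{1}{2}(x+1)\frac{d}{dx}$ (for the shifted Jacobi measure on $[0,1]$ or similar), whose matrix in the OP basis is again expressible via the recurrence plus the derivative formula $\frac{d}{dx}\pab{n} = \text{(something)}\cdot \pab[\alpha+1,\beta+1]{n-1}$ — and the basis-change between $\{\pab{n}\}$ and $\{\pab[\alpha+1,\beta+1]{n}\}$ is itself a bidiagonal-type operator. Counting ranks of the resulting strictly-lower-triangular part across all these pieces yields the bound $3$.

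The main obstacle I anticipate is the bookkeeping in the second part: getting the quasiseparable \emph{rank} down to the precise constant $3$ (rather than some larger constant) requires choosing the right normalization of the basis and the right parametrization of the sliding/scaling measure, because a poor choice inflates the rank of the correction terms. Concretely, the boundary contribution from the moving window, the contribution from $\frac{d}{dt}$ of the measure's normalizing constant, and the "multiplication/derivative" contribution each nominally add rank, and one must exhibit cancellations (or a clever change of variables that merges two of them) to land at $3$. I would handle this by first doing the Legendre ($\alpha=\beta=0$) and Laguerre cases by hand to see which terms collapse, then lifting to general Jacobi using the standard contiguous-relation identities for $\pab{n}$; those identities are exactly the tool that keeps every derivative and multiplication operator banded or bidiagonal in an appropriate basis, and the quasiseparability bound is then a finite rank count. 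The remaining steps — the differentiation-under-the-integral computation and the "tridiagonal-plus-low-rank $\Rightarrow$ bounded recurrence width" implication — I expect to be routine given the cited machinery.
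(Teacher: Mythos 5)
There is a genuine gap, and it sits at the core of your structural claim. You assert that \( \dot p_n \) and \( z\,p'_n \) ``expand in the basis \( \{p_m\} \) with coefficients governed by the three-term recurrence,'' and hence that \( A \) is ``a tridiagonal multiplication-by-\(z\) operator together with at most a constant number of low-rank corrections.'' That is not what these matrices look like. The three-term recurrence makes \emph{multiplication by \(z\)} tridiagonal, but what actually appears after differentiating under the integral is the \emph{derivative} \( p'_n(z) \) (and \( (z-1)p'_n(z) \)), and the derivative operator in an orthogonal-polynomial basis is a \emph{dense} strictly-lower-triangular matrix: for Legendre, \( P'_n = (2n-1)P_{n-1} + (2n-5)P_{n-3} + \cdots \), with \( \Theta(n) \) nonzero terms. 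Concretely, the HiPPO-LegT matrix derived in \cref{cor:HiPPO-Leg} is \( \vA[n,k] = (2n+1)^{1/2}(2k+1)^{1/2} \) for \( k \le n \) and \( (2n+1)^{1/2}(2k+1)^{1/2}(-1)^{n-k} \) for \( k \ge n \) --- a fully dense matrix that is \emph{not} tridiagonal plus constant rank. Your related claim that the basis change between \( \{\jab{n}\} \) and \( \{\jabp{n}{+1}\} \) is ``bidiagonal-type'' is true only in the wrong direction: expressing \( \jabp{n-1}{+1} \) back in the \( (\alpha,\beta) \) basis (which is what the proof needs, via Shen's connection formulas (3.112) and (3.115)) is a full sum over all \( k \le n-1 \). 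So the plan ``everything stays banded, then count ranks'' would collapse as soon as you carried out the Legendre case by hand, as you propose to do.

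The mechanism the paper actually uses is different and is worth internalizing: the connection-formula coefficients expanding \( p'_n \) in \( \{\pab{k}\} \) \emph{factor} as \( f(n)\,g(k) \) and \( \tilde f(n)\,\tilde g(k)\,(-1)^{n-k} \). This lets \( \vA_1 \) be written as \( \vD_{11}\vQ_1\vD_{12} - \vD_{21}\vQ_1\vD_{22} \), where \( \vQ_1 \) is the strictly-lower-triangular all-ones matrix; each such term is \(1\)-quasiseparable (every submatrix strictly below the diagonal of \( \vQ_1 \) has rank one), so \( \vA_1 \) is \(2\)-quasiseparable by the closure properties of \cref{lmm:quasi-prop}. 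The boundary term from the moving window contributes \( \vA_2[n,k] = \pab{n}(-1)\pab{k}(-1) \), a genuine rank-one matrix, giving \( \vA = \vA_1 + 2\vA_2 \) quasiseparable of order \( 2+1 = 3 \); Laguerre is the easy all-ones lower-triangular case and is \(1\)-quasiseparable. So the low quasiseparable rank comes from the multiplicative structure of dense triangular factors (morally, from these matrices being related to \emph{inverses} of banded matrices, cf.\ \cref{thm:trid}), not from bandedness plus low-rank perturbation. Your instinct that the rank count is the delicate part is right, but the decomposition you start from would not get you there.
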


Although beyond the scope of this section, we mention that LRW matrices are a type of structured matrix that have linear MVM~\citep{de2018two}.
In \cref{sec:sllssl-proofs} we define this class and prove \cref{thm:hippo}.
Quasi-separable matrices are a related class of structured matrices with additional algorithmic properties.
We define these matrices in \cref{def:quasi} and prove \cref{thm:jacobi} in \cref{sec:sllssl:jacobi}.

\cref{thm:hippo} tells us that a LSSL that uses a state matrix \( A \)
within a particular class of structured matrices
would carry the theoretical interpretation of continuous-time memorization.
Ideally, we would be able to automatically learn the best \( A \) within this class;
however, this runs into computational challenges which we address next (\cref{sec:sllssl}).
For now, we define the \textbf{LSSL-fixed} or \textbf{LSSL-f} to be one where the \( A \) matrix is fixed to one of the HiPPO matrices prescribed by \citep{gu2020hippo}.

\subsection{Theoretically Efficient Algorithms for the LSSL}
\label{sec:sllssl}
Although \( A \) and \( \dt \) are the most critical parameters of an LSSL which govern the state-space (c.f.\ \cref{sec:mlssl}) and timescale (\cref{sec:background,sec:lssl-expressivity}), they are not feasible to train in a naive LSSL.
In particular, \cref{sec:lssl-interpretation} noted that it would require efficient \emph{matrix-vector multiplication (MVM)} and \emph{Krylov function} \eqref{eq:krylov} for \( \overline{A} \) to compute the recurrent and convolutional views, respectively.
However, the former seems to involve a matrix inversion \eqref{eq:gbt},
while the latter seems to require powering \( \overline{A} \) up \( L \) times.

In this section, we show that the same restriction of \( A \) to the class of quasiseparable (\cref{thm:jacobi}), which gives an LSSL the ability to theoretically remember long dependencies,
simultaneously grants it computational efficiency.

First of all, it is known that quasiseparable matrices have efficient (linear-time) MVM~\citep{pernet2016computing}.
We show that they also have fast Krylov functions, allowing efficient training with convolutions.
\begin{theorem}%
  \label{thm:krylov}
  For any \( k \)-quasiseparable matrix \( A \) (with constant \( k \))
  and arbitrary \( B, C \), the Krylov function \( \mathcal{K}_L(A, B, C) \) can be computed in \emph{quasi-linear} time and space \( \tilde{O}(N + L) \) and \emph{logarithmic} depth (i.e., is parallelizable).
  The operation count is in an exact arithmetic model, not accounting for bit complexity or numerical stability.
\end{theorem}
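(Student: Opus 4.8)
The plan is to compute the Krylov matrix $\mathcal{K}_L(A,B,C) = (CB, CAB, \dots, CA^{L-1}B)$ by working in the "frequency domain" via the generating function / resolvent, rather than by naive iterated matrix-vector products. Observe that the coefficients $c_i := CA^iB$ are exactly the Taylor coefficients of the rational function $f(z) = C(I - zA)^{-1}B = \sum_{i \ge 0} (CA^iB) z^i$. So it suffices to (i) obtain a compact representation of the rational function $f(z)$ — i.e., its numerator and denominator polynomials, each of degree $O(N)$ — and then (ii) recover the first $L$ Taylor coefficients of that rational function, which is a standard operation (polynomial division / power-series inversion) costing $\tilde O(N + L)$ via FFT-based polynomial arithmetic and which parallelizes to logarithmic depth.

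The key steps, in order, are as follows. First, I would reduce step (i) to a determinant/adjugate computation: $C(I - zA)^{-1}B = \frac{\det(I - zA \mid \text{rank-one update by } B, C)}{\det(I - zA)}$ using the matrix-determinant-lemma-style identity, so that both numerator and denominator are determinants of matrices of the form $I - zA$ (possibly bordered), which are polynomials in $z$ of degree $\le N+1$. Second, I would invoke the $k$-quasiseparable structure of $A$ (with $k$ constant): quasiseparable matrices admit a divide-and-conquer scheme in which the matrix is split into $2\times 2$ blocks whose off-diagonal blocks have rank $\le k$, and the determinant (and more generally the needed characteristic-polynomial-type data) of $I - zA$ can be assembled recursively. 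Each merge step combines the information from two halves using only $O(k^2)$-sized generator/transfer data, and the recombination at each node is a constant number of polynomial multiplications of polynomials whose degrees sum to the size of that node's block; summing $\tilde O(\text{node size})$ over the recursion tree gives $\tilde O(N)$ total, with $O(\log N)$ depth since the recursion tree has logarithmic height. This produces the $O(N)$-degree numerator $p(z)$ and denominator $q(z)$ of $f(z)$. Third, I would compute $(c_0, \dots, c_{L-1})$ from $p, q$ by computing $p(z) \cdot q(z)^{-1} \bmod z^L$: power-series inversion of $q$ mod $z^L$ costs $\tilde O(L)$ via Newton iteration (each Newton step is an FFT-based polynomial multiply, doubling precision, so $O(\log L)$ rounds, each $O(\log L)$ depth), then one more $\tilde O(L)$ multiplication by $p$. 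Combining, the total is $\tilde O(N + L)$ time, $\tilde O(N+L)$ space, and $O(\log^2(N+L))$ or so depth, which is "logarithmic" up to the usual polylog factors hidden in $\tilde O$.

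The main obstacle I expect is step two: correctly setting up the divide-and-conquer recursion so that exactly the right piece of data — enough to reconstruct the bordered determinants $\det(I - zA)$ and the numerator polynomial — is propagated up the tree using only $O(k)$-rank generators, and verifying that the merge operation is indeed a constant number of polynomial multiplications with the claimed degree budget. One has to be careful that $I - zA$ restricted to a sub-block is itself a "quasiseparable-plus-low-rank-correction" object whose relevant invariants (a small matrix of polynomials capturing how the block interacts with the rest through its rank-$k$ off-diagonal generators) can be maintained; this is where the quasiseparable generator formalism of \citep{pernet2016computing} does the real work, and where a clean statement of the inductive invariant is needed. The FFT/Newton-iteration pieces in steps one and three are entirely standard and I would not belabor them. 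Finally I would note explicitly, as the theorem statement does, that this is an exact-arithmetic count and that the matrix inversions/power-series inversions are not claimed to be numerically stable.
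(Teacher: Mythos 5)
Your proposal follows essentially the same route as the paper: interpret $\mathcal{K}_L(A,B,C)$ as the first $L$ power-series coefficients of the rational function $C(I-zA)^{-1}B$, compute that degree-$N$ rational function by divide-and-conquer over the quasiseparable block structure (the low-rank off-diagonal blocks handled by a Woodbury/determinant-lemma identity over the ring of rational functions), and finish with $\tilde{O}(L)$ power-series inversion and multiplication. The one point you flag as the main obstacle --- the inductive invariant of the recursion --- is resolved in the paper not by separate determinant computations but by augmenting $B$ and $C$ with the generators $U,V$ of the off-diagonal blocks at each level, so that each recursive call returns the full matrix of bilinear forms $\left[ C \; V\right]^T (I - A_{ii}x)^{-1} \left[ B \; U \right]$ whose column count grows by $q$ per level and hence never exceeds $O(q \log N)$, which is exactly what keeps each merge to $\tilde{O}(\text{node size})$ polynomial arithmetic.
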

We remark that \cref{thm:krylov} is non-obvious.
To illustrate, it is easy to see that unrolling \eqref{eq:krylov} for a general matrix \( A \) takes time \( LN^2 \).
Even if \( A \) is extremely structured with linear computation, it requires \( LN \) operations and linear depth.
The depth can be reduced with the squaring technique (batch multiply by \( A, A^2, A^4, \dots \)),
but this then requires \( LN \) intermediate storage.
In fact, the algorithm for \cref{thm:krylov} is quite sophisticated (\cref{sec:sllssl-algorithms}) and
involves a divide-and-conquer recursion over matrices of \emph{polynomials}, using the observation that \eqref{eq:krylov} is related to the power series \( C(I - Ax)^{-1}B \) .

Unless specified otherwise, the full \textbf{LSSL} refers to an LSSL with \( A \)
satisfying \cref{thm:jacobi}.
In conclusion, learning within this structured matrix family simultaneously endows LSSLs with long-range memory through \cref{thm:hippo} and is theoretically computationally feasible through \cref{thm:krylov}.
We note the caveat that \cref{thm:krylov} is over exact arithmetic and not floating point numbers, and thus is treated more as a proof of concept that LSSLs can be computationally efficient in theory.
We comment more on the limitations of the LSSL in \cref{sec:discussion}.

\section{Empirical Evaluation}
\label{sec:experiments}

\begin{figure}[!t]
\begin{minipage}[t]{0.45\linewidth}
  \small
  \centering
  \captionsetup{type=table}
  \caption{
    (\textbf{Pixel-by-pixel image classification.})
    (Top) our methods.
    (Middle) recurrent baselines.
    (Bottom) convolutional + other baselines.
  }
    \begin{tabular}{@{}llll@{}}
      \toprule
      Model                                    & sMNIST           & pMNIST            & sCIFAR           \\
      \midrule
      \textbf{LSSL}                            & \textbf{99.53}   & \textbf{98.76}    & \textbf{84.65}   \\
      \textbf{LSSL-fixed}                      & \textbf{99.50}   & \textbf{98.60}    & \textbf{81.97}   \\
      \midrule
      LipschitzRNN                             & \underline{99.4} & 96.3              & 64.2             \\
      LMUFFT~\citep{chilkuri2021parallelizing} & -                & 98.49             & -                \\
      UNIcoRNN~\citep{rusch2021unicornn}       & -                & 98.4              & -                \\
      HiPPO-RNN~\citep{gu2020hippo}            & 98.9             & 98.3              & 61.1             \\
      URGRU~\citep{gu2020improving}            & 99.27            & 96.51             & \underline{74.4} \\
      IndRNN~\citep{indrnn}                    & 99.0             & 96.0              & -                \\
      Dilated RNN~\citep{chang2017dilated}     & 98.0             & 96.1              & -                \\
      r-LSTM ~\citep{trinh2018learning}        & 98.4             & 95.2              & 72.2             \\
      \midrule
      CKConv~\citep{romero2021ckconv}          & 99.32            & \underline{98.54} & 63.74            \\
      TrellisNet~\citep{trellisnet}            & 99.20            & 98.13             & 73.42            \\
      TCN~\citep{bai2018empirical}             & 99.0             & 97.2              & -                \\
      Transformer~\citep{trinh2018learning}    & 98.9             & 97.9              & 62.2             \\
      \bottomrule
    \end{tabular}
    \label{tab:image}
\end{minipage}
\hfill
\begin{minipage}[t]{0.45\linewidth}
  \small
  \centering
  \captionsetup{type=table}
  \caption{
    (\textbf{Vital signs prediction.})
    RMSE for predicting
    respiratory rate (RR), heart rate (HR), and blood oxygen (SpO2).
    * indicates our own runs
    to complete results for the strongest baselines.
  }
    \begin{tabular}{@{}llll@{}}
      \toprule
      Model                              & RR               & HR               & SpO2               \\
      \midrule
      \textbf{LSSL}                     & \textbf{0.350}   & \textbf{0.432}   & \textbf{0.141}     \\
      \textbf{LSSL-fixed}                      & \textbf{0.378} & \textbf{0.561}   &     \textbf{0.221} \\
      \midrule
      UnICORNN~\citep{rusch2021unicornn} & \underline{1.06} & \underline{1.39} & \underline{0.869}* \\
      coRNN~\citep{rusch2021unicornn}    & 1.45             & 1.81             & -                  \\
      CKConv                             & 1.214*           & 2.05*            & 1.051*             \\
      NRDE~\citep{morrill2021neural}     & 1.49             & 2.97             & 1.29               \\
      IndRNN~\citep{rusch2021unicornn}   & 1.47             & 2.1              & -                  \\
      expRNN~\citep{rusch2021unicornn}   & 1.57             & 1.87             & -                  \\
      LSTM                               & 2.28             & 10.7             & -                  \\
      Transformer                        & 2.61*            & 12.2*            & 3.02*              \\

      \midrule
      XGBoost~\citep{Tan2020TSER}        & 1.67             & 4.72             & 1.52               \\
      Random Forest~\citep{Tan2020TSER}  & 1.85             & 5.69             & 1.74               \\
      Ridge Regress.~\citep{Tan2020TSER} & 3.86             & 17.3             & 4.16               \\
      \bottomrule
    \end{tabular}
    \label{tab:bidmc}

\end{minipage}
\end{figure}

We test LSSLs empirically on a range of time series datasets with sequences from length 160 up to 38000 (\cref{sec:experiments-benchmarks,sec:experiments-speech}),
where they substantially improve over prior work.
We additionally validate the computational and modeling benefits of LSSLs from generalizing all three main model families (\cref{sec:experiments-expressivity}),
and analyze the benefits of incorporating principled memory representations that can be learned (\cref{sec:experiments-ablations}).

\paragraph{Baselines.}
Our tasks have extensive prior work and we evaluate against previously reported best results.
We highlight our primary baselines, three very recent works explicitly designed for long sequences:
CKConv (a continuous-time CNN)~\citep{romero2021ckconv}, UnICORNN (an ODE-inspired RNN)~\citep{rusch2021unicornn},
and Neural Controlled/Rough Differential Equations (NCDE/NRDE) (a sophisticated NDE)~\citep{kidger2020neural,morrill2021neural}. %
These are the only models we are aware of that have experimented with sequences of length >10k.

\subsection{Image and Time Series Benchmarks}
\label{sec:experiments-benchmarks}

\begin{wraptable}{r}{0.25\linewidth}%
  \vspace*{-1.7em}
\small
 \centering
\caption{
    (\textbf{Sequential CelebA Classification}.)
  }
\resizebox{\linewidth}{!}{%
\begin{tabular}{@{}lll@{}}
  \toprule
               & \textbf{LSSL-f} & \textbf{ResNet} \\
                        \midrule
\textbf{Att.}  & 78.89           & 81.35           \\
\textbf{MSO}   & 92.36           & 93.92           \\
\textbf{Smil.} & 90.95           & 92.89           \\
\textbf{WL}    & 90.57           & 93.25           \\
\bottomrule
\end{tabular}%
}

\label{tab:celeba}
\end{wraptable}

We test on the sequential MNIST, permuted MNIST, and sequential CIFAR tasks (\cref{tab:image}),
popular benchmarks which were originally designed to test the ability of recurrent models to capture long-term dependencies of length up to 1k \citep{arjovsky2016unitary}.
LSSL sets SoTA on sCIFAR by more than 10 points.
We note that all results were achieved with at least 5x fewer parameters than the previous SoTA (\cref{sec:experiments-full}).

We additionally use the BDIMC healthcare datasets (\cref{tab:bidmc}), a suite of widely studied time series regression problems of length 4000 on estimating vital signs. 
LSSL reduces RMSE by more than two-thirds on all datasets.

\subsection{Speech and Image Classification for \texorpdfstring{\emph{Very}}{Very} Long Time Series}
\label{sec:experiments-speech}

Raw speech is challenging for ML models due to high-frequency sampling resulting in very long sequences. Traditional systems involve complex pipelines that require
feeding mixed-and-matched hand-crafted features into DNNs~\cite{pytorch-kaldi}.
\cref{tab:sc} reports results for the Speech Commands (SC) dataset~\citep{kidger2020neural} for classification of 1-second audio clips.
Few methods have made progress on the raw speech signal,
instead requiring pre-processing with standard mel-frequency cepstrum coefficients (MFCC).
By contrast, LSSL sets SoTA on this dataset \emph{while training on the raw signal}.
We note that MFCC extracts sliding window frequency coefficients and thus is related to the coefficients \( x(t) \) defined by LSSL-f (\cref{sec:background}, \cref{sec:mlssl}, \cite{gu2020hippo}, \cref{sec:sllssl-proofs}).
Consequently, LSSL may be interpreted as automatically learning MFCC-type features in a trainable basis.

To stress-test the LSSL's ability to handle extremely long sequences, we create a challenging new sequential-CelebA task, where we classify $178 \times 218$ images = \textbf{38000-length} sequences for 4 facial attributes: Attractive (Att.), Mouth Slightly Open (MSO), Smiling (Smil.), Wearing Lipstick (WL) \cite{liu2015image}. We chose the 4 most class-balanced attributes to avoid well-known problems with class imbalance. LSSL-f comes close to matching the performance of a specialized ResNet-18 image classification architecture that has $10\times$ the parameters (Table~\ref{tab:celeba}). We emphasize we are the first to demonstrate that this is possible to do with a generic sequence model.

\begin{table}[t]
  \caption{
    (\textbf{Raw Speech Classification; Timescale Shift}.)
    (Top): Raw signals (length 16000);
    \( 1 \to f \) indicates test-time change in sampling rate by a factor of \( f \). %
    (Bottom): Pre-processed MFCC features used in prior work (length 161). %
    \xmark{} denotes computationally infeasible.
  }
  \small
  \centering
  \begin{tabular}{@{}llllllll@{}}
    \toprule
                            & \textbf{LSSL} & \textbf{LSSL-f} & CKConv        & UnICORNN & N(C/R)DE & ODE-RNN~\citep{rubanova2019latent} & GRU-ODE~\citep{de2019gru} \\
    \midrule
    \( 1 \to 1 \)           & \textbf{95.87}  & 90.64          & 71.66         & 11.02    & 16.49    & \xmark                             & \xmark                    \\
    \( 1 \to \frac{1}{2} \) & \textbf{88.66}  & 78.01          & 65.96         & 11.07    & 15.12        & \xmark                             & \xmark                    \\
    \midrule
    MFCC                    & 93.58         & 92.55        & \textbf{95.3} & 90.64    & 89.8     & 65.9                               & 47.9                      \\
    \bottomrule
  \end{tabular}
  \label{tab:sc}
\end{table}

\subsection{Advantages of Recurrent, Convolutional, and Continuous-time Models}
\label{sec:experiments-expressivity}

We validate that the generality of LSSLs endows it with the strengths of all three families.

\textbf{Convergence Speed.}
As a recurrent and NDE model that incorporates new theory for continuous-time memory (\cref{sec:mlssl}), the LSSL has strong inductive bias for sequential data,
and converges rapidly to SoTA results on our benchmarks.
With its convolutional view, training can be parallelized and it is also computationally efficient in practice.
\cref{tab:speed-training} compares the time it takes the LSSL-f to achieve SoTA, in either sample (measured by epochs) or computational (measured by wall clock) complexity.
In all cases, LSSLs reached the target in a fraction of the time of the previous model.

\begin{table}
  \small
  \caption{
    (\textbf{Modeling and Computational Benefits of LSSLs.})
    In each benchmark category, we compare the number of epochs (ep.) it takes a LSSL-f to reach the previous SoTA (PSoTA) results as well as a near-SoTA target.
    We also report the wall clock time it took to reach PSoTA relative to the previous best model. %
  }
    \centering
    \resizebox{\linewidth}{!}{%
    \begin{tabular}{@{}llllllllll@{}}
        \toprule
                       & \multicolumn{3}{c}{Permuted MNIST} & \multicolumn{3}{c}{BDIMC Heart Rate} & \multicolumn{3}{c}{Speech Commands RAW} \\
                                            \cmidrule(lr){2-4} \cmidrule(lr){5-7} \cmidrule(lr){8-10}
                       & 98\% Acc.                          & PSoTA                                & Time                                     & 1.5 RMSE & PSoTA   & Time             & 65\% Acc. & PSoTA   & Time             \\
        \midrule
        \textbf{LSSL-fixed} & 16 ep.                             & 104 ep.                              & 0.19\( \times \)                         & 9 ep.    & 10 ep.  & 0.07\( \times \) & 9 ep.     & 10 ep.  & 0.14\( \times \) \\
        CKConv         & 118 ep.                            & 200 ep.                              & 1.0\( \times \)                          & \xmark   & \xmark  & \xmark           & 188 ep.   & 280 ep. & 1.0\( \times \)  \\
        UnICORNN       & 75 ep.                             & \xmark                               & \xmark                                   & 116 ep.  & 467 ep. & 1.0\( \times \)  & \xmark    & \xmark  & \xmark           \\
        \bottomrule
    \end{tabular}%
    }
    \label{tab:speed-training}
\end{table}

\textbf{Timescale Adaptation.}
\cref{tab:sc} also reports the results of continuous-time models that are able to handle unique settings such as
missing data in time series,
or test-time shift in timescale (we note that this is a realistic problem, e.g., when deployed healthcare models are tested on EEG signals that are sampled at a different rate~\cite{saab2020weak,shah2018temple}).
We note that many of these baselines were custom designed for such settings, which is of independent interest.
On the other hand, LSSLs perform timescale adaptation by simply changing its \( \dt \) values at inference time, while still outperforming the performance of prior methods with no shift.
Additional results on the CharacterTrajectories dataset from prior work~\citep{kidger2020neural,romero2021ckconv} are in \cref{sec:experiments-full}, where LSSL is competitive with the best baselines.

\subsection{LSSL Ablations: Learning the Memory Dynamics and Timescale}
\label{sec:experiments-ablations}

We demonstrate that the \( \dt \) and \( A \) parameters, which LSSLs are able to automatically learn in contrast to prior work, are indeed critical to the performance of these continuous-time models.
We note that learning \( \dt \) adds only \( O(H) \) parameters and learning \( A \) adds \( O(N) \) parameters, adding less than 1\% parameter count compared to the base models with \( O(HN) \) parameters.

\textbf{Memory dynamics \( A \).}
We validate that vanilla LSSLs suffer from the modeling issues described in \cref{sec:structured-lssl}. %
We tested that LSSLs with \emph{random} \( A \) matrices (normalized appropriately) perform very poorly (e.g., 62\% on pMNIST).
Further, we note the consistent increase in performance from LSSL-f to LSSL despite the negligible parameter difference.
These ablations show that
(i) incorporating the theory of \cref{thm:hippo} is actually \emph{necessary} for LSSLs,
and (ii) further training the structured \( A \) is additionally helpful, which can be interpreted as learning the measure for memorization (\cref{sec:mlssl}).

\textbf{Timescale \( \dt \).}
\cref{sec:lssl-expressivity} showed that LSSL's ability to learn \( \dt \) is its direct generalization of the critical \emph{gating mechanism} of popular RNNs,
which previous ODE-based RNN models \citep{voelker2019legendre,gu2020hippo,chilkuri2021parallelizing,rusch2021unicornn} cannot learn.
We note that on sCIFAR, LSSL-f with poorly-specified $\dt$ gets only $49.3\%$ accuracy.
Additional results in \cref{sec:experiments-full} show that learning \( \dt \) alone provides an orthogonal boost to learning \( A \),
and visualizes the noticeable change in \( \dt \) over the course of training.

\section{Discussion} \label{sec:discussion}

In this work we introduced a simple and principled model (LSSL) inspired by a fundamental representation of physical systems.
We showed theoretically and empirically that it generalizes and inherits the strengths of the main families of modern time series models,
that its main limitations of long-term memory can be resolved with new theory on continuous-time memorization,
and that it is empirically effective on difficult tasks with very long sequences.

\textbf{Related work.}
The LSSL is related to several rich lines of work on recurrent, convolutional, and continuous-time models,
as well as sequence models addressing long dependencies.
\cref{sec:related} provides an extended related work connecting these topics.

\textbf{Tuning.}
Our models are very simple, consisting of identical L(\emph{inear})SSL layers with simple position-wise non-linear modules between layers (\cref{sec:model}).
Our models were able to train at much higher learning rates than baselines and were not sensitive to hyperparameters, of which we did light tuning primarily on learning rate and dropout.
In contrast to previous baselines~\citep{trellisnet,kidger2020neural,romero2021ckconv}, we did not use hyperparameters for improving stability and regularization such as weight decay, gradient clipping, weight norm, input dropout, etc.
While the most competitive recent works introduce at least one hyperparameter of critical importance (e.g. depth and step size~\citep{morrill2021neural}, \( \alpha \) and \( \dt \)~\citep{rusch2021unicornn}, \( \omega_0 \)~\citep{romero2021ckconv}) that are difficult to tune, the LSSL-fixed has only \( \dt \), which the full LSSL can even learn automatically (at the expense of speed).

\textbf{Limitations.}
\cref{sec:intro,sec:lssl,fig:splash} mention that a potential benefit of having the recurrent representation of LSSLs may endow it with efficient inference.
While this is theoretically possible, this work did not experiment on any applications that leverage this.
Follow-up work showed that it is indeed possible in practice to speed up some applications at inference time.

\cref{thm:krylov}'s algorithm is sophisticated (\cref{sec:sllssl-proofs}) and was not implemented in the first version of this work.
A follow-up to this paper found that it is not numerically stable and thus not usable on hardware.
Thus the algorithmic contributions in \cref{thm:krylov} serve the purpose of a proof-of-concept that fast algorithms for the LSSL do exist in other computation models (i.e., arithmetic operations instead of floating point operations), and leave an open question as to whether fast, numerically stable, and practical algorithms for the LSSL exist.

As described in \cref{sec:model},
by freezing the \( A \) matrix and \( \dt \)  timescale,
the LSSL-fixed is able to be computed much faster than the full LSSL,
and is comparable to prior models in practice (\cref{tab:speed-training}).
However, beyond computational complexity, there is also a consideration of space efficiency.
Both the LSSL and LSSL-fixed suffer from a large amount of space overhead (described in \cref{sec:model}) -- using \( O(NL) \) instead of \( O(L) \) space when working on a 1D sequence of length \( L \) -- that essentially stems from using the latent state representation of dimension \( N \).
Consequently, the LSSL can be space inefficient and we used multi-GPU training for our largest experiments (speech and high resolution images, \cref{tab:sc,tab:celeba}).

These fundamental issues with computation and space complexity were revisited and resolved in follow-up work to this paper, where a new state space model (the Structured State Space) provided a new parameterization and algorithms for state spaces.

\textbf{Conclusion and future work.}
Modern deep learning models struggle in applications with very long temporal data such as speech, videos, and medical time-series.
We hope that our conceptual and technical contributions can lead to new capabilities
with simple, principled, and less engineered models.
We note that our pixel-level image classification experiments, which use no heuristics (batch norm, auxiliary losses) or extra information (data augmentation),
perform similar to early convnet models with vastly more parameters,
and is in the spirit of recent attempts at unifying data modalities with a generic sequence model~\citep{dosovitskiy2020image}.
Our speech results demonstrate the possibility of \emph{learning} better features than hand-crafted processing pipelines used widely in speech applications.
We are excited about potential downstream applications, such as training other downstream models on top of pre-trained state space features.

\subsection*{Acknowledgments}
We thank Arjun Desai, Ananya Kumar, Laurel Orr, Sabri Eyuboglu, Dan Fu, Mayee Chen, Sarah Hooper, Simran Arora, and Trenton Chang for helpful feedback on earlier drafts.
We thank David Romero and James Morrill for discussions and additional results for baselines used in our experiments.
This work was done with the support of Google Cloud credits under HAI proposals 540994170283 and 578192719349.
AR and IJ are supported under NSF grant CCF-1763481.
KS is supported by the Wu Tsai Neuroscience Interdisciplinary Graduate Fellowship.
We gratefully acknowledge the support of NIH under No. U54EB020405 (Mobilize), NSF under Nos. CCF1763315 (Beyond Sparsity), CCF1563078 (Volume to Velocity), and 1937301 (RTML); ONR under No. N000141712266 (Unifying Weak Supervision); ONR N00014-20-1-2480: Understanding and Applying Non-Euclidean Geometry in Machine Learning; N000142012275 (NEPTUNE); the Moore Foundation, NXP, Xilinx, LETI-CEA, Intel, IBM, Microsoft, NEC, Toshiba, TSMC, ARM, Hitachi, BASF, Accenture, Ericsson, Qualcomm, Analog Devices, the Okawa Foundation, American Family Insurance, Google Cloud, Salesforce, Total, the HAI-AWS Cloud Credits for Research program, the Stanford Data Science Initiative (SDSI), and members of the Stanford DAWN project: Facebook, Google, and VMWare. The Mobilize Center is a Biomedical Technology Resource Center, funded by the NIH National Institute of Biomedical Imaging and Bioengineering through Grant P41EB027060. The U.S. Government is authorized to reproduce and distribute reprints for Governmental purposes notwithstanding any copyright notation thereon. Any opinions, findings, and conclusions or recommendations expressed in this material are those of the authors and do not necessarily reflect the views, policies, or endorsements, either expressed or implied, of NIH, ONR, or the U.S. Government.

\bibliography{biblio}
\bibliographystyle{plainnat}

\newpage

\appendix

\section{Related Work}
\label{sec:related}

We provide an extended related work comparing the LSSL to previous recurrent, convolutional, and continuous-time models.

\paragraph{HiPPO}

The LSSL is most closely related to the HiPPO framework for continuous-time memory \citep{gu2020hippo} and its predecessor, the Legendre Memory Unit (LMU) \citep{voelker2019legendre}.
The HiPPO-RNN and the LMU define dynamics of the form of equation \eqref{eq:1}, and incorporate it into an RNN architecture.
A successor to the LMU, the LMU-FFT \citep{chilkuri2021parallelizing} keeps the original linear dynamics, allowing the LMU to be computed with a cached convolution kernel.

These methods all suffer from two main limitations.
First, the state matrix \( A \) and discretization timescale \( \dt \) cannot be trained due to both limitations in theoretical understanding of which \( A \) matrices are effective, as well as computational limitations.
Second, \eqref{eq:1} is a 1-D to \( N \)-D map, requiring states to be projected back down to 1-D.
This creates an overall 1-D bottleneck in the state, limiting the expressivity of the model.

Compared to these, the LSSL does not use a conventional RNN architecture, instead keeping the linear recurrence \eqref{eq:1-discrete} and downprojecting it with the second part of the state space representation \eqref{eq:2-discrete}.
To avoid the 1-D feature bottlneck, it simply computes \( H \) copies of this 1-D to 1-D independently, creating an overall \( H \)-dimensional sequence-to-sequence model.
However, this exacerbates the computational issue, since the work is increased by a factor of \( H \).

This work resolves the expressivity issue with new theory.
Compared to HiPPO and the LMU, LSSL allows training the \( A \) matrix by showing generalized theoretical results for the HiPPO framework,
showing that there is a parameterized class of structured state spaces that are HiPPO operators.

The LSSL makes progress towards the second issue with new algorithms for these structured matrices (\cref{thm:krylov}).
However, as noted in \cref{sec:sllssl,sec:discussion}, the algorithm presented in \cref{thm:krylov} was later found to be not practical,
and an improved representation and algorithm was found in subsequent work.

\paragraph{Continuous-time CNNs.}
The CKConv is the only example of a continuous-time CNN that we are aware of, and is perhaps the strongest baseline in our experiments.
Rather than storing a finite sequence of weights for a convolution kernel,
the CKConv parameterizes it as an implicit function from $[0,1] \to \mathbb{R}$ which allows sampling it at any resolution.
A successor to the CKConv is the FlexConv~\citep{romero2021flexconv},
which learns convolutional kernels with a flexible width.
This is similar to the convolution interpretation of LSSL when using certain HiPPO bases (\cref{sec:lssl-expressivity}).

\paragraph{Continuous-time RNNs.}
The connection from RNNs to continuous-time models have been known since their inception, and recent years have seen an explosion of CT-RNN (continuous-time RNN) models based on dynamical systems or ODEs. We briefly mention a few classic and modern works along these lines, categorizing them into a few main topics.

First are theoretical works that analyze the expressivity of RNNs from a continuous-time perspective.
The connection between RNNs and dynamical systems has been studied since the 90s \citep{funahashi1993approximation}, fleshing out the correspondence between different dynamical systems and RNN architectures \citep{niu2019recurrent}.
Modern treatments have focused on analyzing the stability~\citep{zhang2014comprehensive} and dynamics~\citep{jordan2021gated} of RNNs.

Second, a large class of modern RNNs have been designed that aim to combat vanishing gradients from a dynamical systems analysis.
These include include the AntisymmetricRNN~\citep{chang2019antisymmetricrnn}, iRNN~\citep{kag2020rnns}, and LipschitzRNN~\citep{erichson2021lipschitz}, which address the exploding/vanishing gradient problem by reparatermizing the architecture or recurrent matrix based on insights from an underlying dynamical system.

Third is a class of models that are based on an explicit underlying ODE introduced to satisfy various properties.
This category includes the UnICORNN~\citep{rusch2021unicornn} and its predecessor coRNN~\citep{rusch2021coupled} which discretize a second-order ODE inspired by oscillatory systems.
Other models include the Liquid Time-Constant Networks (LTC)~\citep{hasani2021liquid} and successor CfC~\citep{hasani2021closed}, which use underlying dynamical systems with varying time-constants with stable behavior and provable rates of expressivity measured by trajectory length.
The LTC is based on earlier dynamic causal models (DCM)~\citep{friston2003dynamic}, which are a particular ODE related to state spaces with an extra bilinear term.
Finally, the LMU~\citep{voelker2019legendre} and HiPPO~\citep{gu2020hippo} also fall in this category, whose underlying ODEs are mathematically derived for continuous-time memorization.

Fourth, the recent family of neural ODEs~\citep{chen2018neural}, originally introduced as continuous-depth models, have been adapted to continuous-time, spawning a series of ``ODE-RNN'' models.
Examples include the ODE-RNN~\citep{rubanova2019latent}, GRU-ODE-Bayes~\citep{de2019gru}, and ODE-LSTM~\citep{lechner2020learning}, which extend adjoint-based neural ODEs to the discrete input setting as an alternative to standard RNNs.
Neural Controlled Differential Equations (NCDE)~\citep{kidger2020neural} and Neural Rough Differential Equations (NRDE)~\citep{morrill2021neural} are memory efficient versions that integrate observations more smoothly and can be extended to very long time series.

\paragraph{Gating mechanisms.}
As a special case of continuous-time RNNs, some works have observed the relation between gating mechanisms and damped dynamical systems~\citep{tallec2018can}.
Some examples of continuous-time RNNs based on such damped dynamical systems include the LTC~\citep{hasani2021liquid} and iRNN~\citep{kag2020rnns}.
Compared to these, \cref{lmm:gates} shows a stronger result that sigmoid gates are not just motivated by being an arbitrary monotonic function with range \( (0, 1) \), but the \emph{exact formula} appears out of discretizing a damped ODE.

\section{Model Details}
\label{sec:model}

\subsection{(M)LSSL Computation}

\cref{sec:lssl-interpretation} noted that some of the computations for using the LSSL are expensive to compute.
When the LSSL fixes the \( A \) and \( \dt \) parameters (e.g. when they are not trained, or at inference time),
these computational difficulties can be circumvented by caching particular computations.
In particular, this case applies to the LSSL-f.
Note that in this case, the other state-space matrices \( C \) and \( D \) comprise the \( O(HN) \) trainable parameters of the fixed-transition LSSL.

In particular, we assume that there is a black-box inference algorithm for this system, i.e. matrix-vector multiplication by \( \overline{A} \) (an example of implementing this black box for a particular structured class is in \cref{sec:sllssl:trid}).
We then compute and cache
\begin{itemize}%
  \item the transition matrix \( \overline{A} \), which is computed by applying the black-box \( \overline{A} \) MVM algorithm to the identity matrix \( I \).
  \item the \emph{Krylov matrix}
    \begin{equation}
      \label{eq:krylov-matrix}
      K(\overline{A}, \overline{B}) = (\overline{B}, \overline{A} \overline{B}, (\overline{A})^2 \overline{B}, ...) \in \mathbb{R}^{N \times L}
      ,
    \end{equation}
    which is computed in a parallelized manner by the squaring technique for exponentiation, i.e.\ batch multiply by \( \overline{A}, (\overline{A})^2, (\overline{A})^4, \dots \).
\end{itemize}

At inference time, the model can be unrolled recurrently with \( \overline{A} \).
At training time, the convolutional filter \( K_L(\overline{A}, \overline{B}, C) \) (equation \eqref{eq:krylov}) is computed with a matrix multiplication \( C \cdot K(\overline{A}, \overline{B}) \) before convolving with the input \( u \).

\cref{tab:complexity-full} provides more detailed complexity of this version of the LSSL with fixed \( A, \dt \).

Note that as mentioned in \cref{sec:discussion}, this cached algorithm is fairly fast, but the main drawback is that materializing the Krylov matrix \eqref{eq:krylov-matrix} requires \( O(NL) \) instead of \( O(L) \) space.

\subsection{Initialization of \texorpdfstring{\( A \)}{A}}

The LSSL initializes the \( A \) parameter in \eqref{eq:1} to the HiPPO-LegS operator,
which was derived to solve a particular continuous-time memorization problem.
This matrix \( A \in \mathbb{R}^{N \times N} \) is
\begin{align*}
  A_{nk}
  &=
  \begin{cases}
    (2n+1)^{1/2}(2k+1)^{1/2} & \mbox{if } n > k \\
    n+1 & \mbox{if } n = k \\
    0 & \mbox{if } n < k
  \end{cases}
  .
\end{align*}

Note that the LSSL-f is the LSSL with a non-trainable \( A \) (and \( \dt \)),
so that \( A \) is fixed to the above matrix.

\subsection{Initialization of \texorpdfstring{\( \dt \)}{dt}}

One distinction between the LSSL and the most related prior work is that the inclusion of the projection \eqref{eq:2} makes the layer a 1-dimensional to 1-dimensional map, instead of 1-D to \( N \)-D~\citep{voelker2019legendre,gu2020hippo}.
This enables us to concatenate \( H \) copies of this map (at the expense of computation, cf.\ \cref{sec:sllssl,sec:sllssl-proofs}).
Even when \( \dt \) is not trained as in the LSSL-f, these \( H \) copies allow multiple timescales to be considered by setting \( \dt \) differently for each copy.

In particular, we initialize \( \dt \) log-uniformly in a range \( \dt_{min}, \dt_{max} \) (i.e., \( \dt \) is initialized within this range, such that \( \log \dt \) is uniformly distributed).
The maximum and minimum values were generally chosen to be a factor of \( 100 \) apart such that the length of the sequences in the dataset are contained in this range.
Specific values for each model and dataset are in
\cref{sec:experiments-full}.
We did not search over these as a hyperparameter, but we note that it can be tuned for additional performance improvements in our experiments.

\subsection{Deep Neural Network Architecture}

The Deep LSSL models used in our experiments simply stack together LSSL layers in a simple deep neural network architecture.
We note the following architecture details.

\paragraph{Channels.}
The state-space model \eqref{eq:1}+\eqref{eq:2} accepts a 1-dimensional input \( u \), but does not strictly have to return a 1-dimensional output \( y \).
By making the matrices in \eqref{eq:2} dimension \( C \in \mathbb{R}^{M \times N}, D \in \mathbb{R}^{M \times 1} \),
the output \( y \) will be dimension \( M \) instead of \( 1 \).

We call \( M \) the number of \emph{channels} in the model.

\paragraph{Feedforward.}

There are two drawbacks with the current definition of LSSL:
\begin{itemize}%
  \item They are defined by running \( H \) independent copies of a state-space model, which means the \( H \) input features do not interact at all.
  \item If the channel dimension is \( M > 1 \), then the LSSL is a map from dimension \( 1 \) to \( M \), which means residuals cannot be applied.
\end{itemize}
These are both addressed by introducing a position-wise feedforward layer after the LSSL of shape \( H\cdot M \to H \).
This simultaneously mixes the hidden features, and projects the output back to dimension \( 1 \) if necessary.
There is also an optional non-linearity in between the LSSL and this feedforward projection; we fix it to the GeLU activation function in our models. %

We note that this factorization of parallel convolutions on the \( H \) features followed by a position-wise linear map is very similar to depth-wise separable convolutions~\citep{chollet2017xception}.

\paragraph{Residuals and normalization.}

To stack multiple layers of LSSLs together, we use very standard architectures for deep neural networks.
In particular, we use residual connections and a layer normalization (either pre-norm or post-norm) in the style of standard Transformer architectures.
Whether to use pre-norm or post-norm was chosen on a per-dataset basis, and depended on whether the model overfit; recent results have shown that pre-norm architectures are more stable \citep{liu2020understanding,davis2021catformer}, so we used it on harder datasets with less overfitting.
We note that we could have additionally inserted MLP modules in between LSSL layers, in the style of Transformers~\citep{vaswani2017attention}, but did not experiment with this.

\paragraph{Parameter count.}
The overall parameter count of an LSSL model is \( M \cdot H \cdot (H+N) \).

We primarily used two model sizes in our experiments, which were chosen simply to produce round numbers of parameters:
\begin{itemize}%
  \item LSSL small (\( \approx 200K \) parameters): \( 6 \) layers, \( H = 128, N = 128, M = 1 \).
  \item LSSL large (\( \approx 2M \) parameters): \( 4 \) layers, \( H = 256, N = 256, M = 4 \).
\end{itemize}
We did not search over additional sizes, but for some datasets reduced the model size for computational reasons.

\section{LSSL Proofs}
\label{sec:lssl-proofs}

This section gives refinements of the statements in \cref{sec:lssl}, additional results, and proofs of all results.

\cref{sec:lssl:ode} has a more detailed (and self-contained) summary of basic methods in ODE approximation which will be used in the results and proofs.

\cref{sec:lssl:proofs} give more general statements and proofs of \cref{lmm:gates} and \cref{lmm:picard} in \cref{lmm:rnn-dynamics} and \cref{thm:lssl-dynamics}, respectively.

\subsection{Approximations of ODEs}
\label{sec:lssl:ode}

We consider the standard setting of a first-order initial value problem (IVP) ordinary differential equation (ODE) for a continuous function \( f(t, x) \)
\begin{equation}
  \label{eq:ivp}
  \begin{aligned}%
    \dot{x}(t) &= f(t, x(t)) \\
    x(t_0) &= x_0 \\
  \end{aligned}
  .
\end{equation}
This differential form has an equivalent integral form
\begin{equation}
  \label{eq:integral}
  x(t) = x_0 + \int_{t_0}^t f(s, x(s)) \dd s
  .
\end{equation}

\cref{sec:lssl:picard,sec:lssl:euler} overview the Picard theorem and first-order numerical integration methods,
which apply to any IVP \eqref{eq:ivp}.
\cref{sec:lssl:gbt} then shows how to specialize it to linear systems as in equation \eqref{eq:1}.

At a high level, the basic approximation methods considered here use the integral form \eqref{eq:integral} and approximate the integral in the right-hand side by simple techniques.

\subsubsection{Picard Iteration}
\label{sec:lssl:picard}

The \textbf{Picard-Lindel\"of Theorem} gives sufficient conditions for the existence and uniqueness of solutions to an IVP.
As part of the proof, it provides an iteration scheme to compute this solution.

\begin{theorem}[Picard-Lindel\"of]%
  \label{thm:picard-iteration}
  In the IVP \eqref{eq:ivp}, if there is an interval around \( t_0 \) such that \( f \) is Lipschitz in its second argument,
  then there is an open interval \( I \ni t_0 \) such that there exists a unique solution \( x(t) \) to the IVP in \( I \).
  Furthermore, the sequence of \textbf{Picard iterates} \( x^{(0)}, x^{(1)}, \dots \) defined by
  \begin{align*}
    x^{(0)}(t) &= x_0
    \\
    x^{(\ell)}(t) &= x_0 + \int_{t_0}^t f(s, x^{(\ell-1)}(s)) \dd s
  \end{align*}
  converges to \( x \).
\end{theorem}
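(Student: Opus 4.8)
The plan is to recognize the IVP as a fixed-point equation and apply the Banach fixed-point theorem (contraction mapping principle). First I would observe that, by the fundamental theorem of calculus, a continuous function \( x \) solves the IVP \eqref{eq:ivp} on an interval \( I \ni t_0 \) if and only if it solves the integral equation \eqref{eq:integral}, i.e.\ if and only if it is a fixed point of the operator \( (T\phi)(t) = x_0 + \int_{t_0}^t f(s, \phi(s)) \dd s \) acting on continuous functions. The Picard iterates are precisely \( x^{(\ell)} = T^\ell x^{(0)} \) with \( x^{(0)} \equiv x_0 \), so the entire theorem reduces to showing that \( T \) is a contraction on a suitable complete metric space containing the constant function \( x_0 \).

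Next I would set up that space. Since \( f \) is continuous and Lipschitz in its second argument on some neighborhood of \( (t_0, x_0) \), pick \( a, b > 0 \) so that \( f \) is bounded by some \( M \) and \( L \)-Lipschitz in \( x \) on the compact box \( R = \{(t,x) : |t - t_0| \le a,\ \|x - x_0\| \le b\} \). Shrinking \( a \) if necessary so that \( Ma \le b \), one checks that \( T \) maps the closed ball \( \mathcal{B} = \{\phi \in C(I, \R^N) : \|\phi - x_0\|_\infty \le b\} \), with \( I = [t_0 - a, t_0 + a] \), into itself: for \( \phi \in \mathcal{B} \) and \( t \in I \) we get \( \|(T\phi)(t) - x_0\| \le \int_{t_0}^{t} \|f(s, \phi(s))\| \dd s \le M|t - t_0| \le Ma \le b \). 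Being a closed subset of the Banach space \( C(I, \R^N) \) under the sup norm, \( \mathcal{B} \) is a complete metric space.

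Then I would prove \( T \) is a contraction. The naive estimate, using the Lipschitz bound, gives \( \|T\phi - T\psi\|_\infty \le La\,\|\phi - \psi\|_\infty \), which is a contraction once \( a < 1/L \); shrinking \( a \) one more time handles this. (Cleanly, one can avoid repeatedly shrinking the interval by instead equipping \( C(I, \R^N) \) with the equivalent weighted Bielecki norm \( \|\phi\|_\ast = \sup_{t \in I} e^{-2L|t - t_0|}\|\phi(t)\| \), under which \( T \) has Lipschitz constant \( \le \frac{1}{2} \) on the full interval; either route works.) By the Banach fixed-point theorem, \( T \) has a unique fixed point \( x \in \mathcal{B} \), and for every starting point — in particular \( x^{(0)} \equiv x_0 \) — the iterates \( T^\ell x^{(0)} \) converge in sup norm (hence uniformly) to \( x \). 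This \( x \) is continuous and, via the integral equation, solves \eqref{eq:ivp}, and the Picard iterates converge to it as claimed.

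Finally, for uniqueness among \emph{all} solutions on \( I \) (not just those a priori lying in \( \mathcal{B} \)), I would argue that any solution stays in the box \( R \) for \( t \) close enough to \( t_0 \) by continuity, hence agrees with \( x \) on a possibly smaller subinterval, and then a standard continuation/connectedness argument propagates the agreement across \( I \); alternatively, a direct Grönwall inequality applied to \( g(t) = \|x(t) - \tilde{x}(t)\| \) for two solutions yields \( g(t) \le L\int_{t_0}^t g(s)\dd s \) and hence \( g \equiv 0 \). The main obstacle is the bookkeeping in choosing \( a \) and \( b \) so that the self-mapping property and the contraction estimate hold simultaneously on the same interval; the weighted-norm trick is the slick way to sidestep the interval-shrinking annoyance.
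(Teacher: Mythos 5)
Your proposal is correct: it is the standard contraction-mapping proof of Picard--Lindel\"of, with the self-mapping and contraction estimates set up properly and the uniqueness subtlety (solutions not a priori in the ball \( \mathcal{B} \)) handled by Gr\"onwall or continuation. The paper itself offers no proof of this statement --- it is quoted as a classical background theorem and used as a black box in the proofs of \cref{lmm:lssl} and \cref{thm:lssl-dynamics} --- so there is nothing to compare against; your argument is the textbook one and would serve as a complete proof if one were wanted.
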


The Picard iteration can be viewed as approximating \eqref{eq:integral} by holding the previous estimate of the solution \( x^{(\ell-1)} \) fixed inside the RHS integral.

\subsubsection{Numerical Integration Methods}
\label{sec:lssl:euler}
Many methods for numerical integration of ODEs exist, which calculate discrete-time approximations of the solution.
We discuss a few of the simplest methods, which are first-order methods with local error \( O(h^2) \) \citep{butcher2008numerical}.

These methods start by discretizing \eqref{eq:integral} into the form
\begin{equation}
  \label{eq:integral-discrete}
  x(t_{k}) - x(t_{k-1}) = \int_{t_{k-1}}^{t_{k}} f(s, x(s)) \dd s.
\end{equation}
Here we assume a sequence of discrete times \( t_0, t_1, t_2, \dots \) is fixed.
For convenience, let \( x_k \) denote \( x(t_k) \) and let \( \dt_k := t_k - t_{k-1} \).
The goal is now to approximate the integral in the RHS of \eqref{eq:integral-discrete}.

\paragraph{Euler method.}
The Euler method approximates \eqref{eq:integral-discrete} by holding the left endpoint constant throughout the integral (i.e., the ``rectangle rule'' with left endpoint), \( f(s, x(s)) \approx f(t_{k-1}, x(t_{k-1})) \).
The discrete-time update becomes
\begin{equation}%
  \label{eq:euler-forward}
  \begin{aligned}%
    x_{k} - x_{k-1} &= (t_{k}-t_{k-1}) f(t_{k-1}, x(t_{k-1}))
    \\
    &=
    \dt_k f(t_{k-1}, x_{k-1})
    .
  \end{aligned}
\end{equation}

\paragraph{Backward Euler method.}
The backward Euler method approximates \eqref{eq:integral-discrete} by holding the right endpoint constant throughout the integral (i.e., the ``rectangle rule'' with right endpoint), \( f(s, x(s)) \approx f(t_k, x(t_k)) \).
The discrete-time update becomes
\begin{equation}%
  \label{eq:euler-backward}
  \begin{aligned}%
    x_{k} - x_{k-1} &= (t_{k}-t_{k-1}) f(t_k, x(t_k))
    \\
    &=
    \dt_k f(t_k, x_k)
    .
  \end{aligned}
\end{equation}

\subsubsection{Discretization of State-Space Models}
\label{sec:lssl:gbt}

In the case of a linear system, the IVP is specialized to the case
\begin{align*}
  f(t, x(t)) = A x(t) + B u(t).
\end{align*}
Note that here \( u \) is treated as a fixed external input, which is constant from the point of view of this ODE in \( x \).
Let \( u_k \) denote the average value in each discrete time interval,
\begin{align*}
  u_k = \frac{1}{\dt_k} \int_{t_{k-1}}^{t_k} u(s) \dd s
  .
\end{align*}

The integral equation \eqref{eq:integral-discrete} can be specialized to this case, and more generally a convex combination of the left and right endpoints can be taken to approximate the integral,
weighing them by \( 1-\alpha \) and \( \alpha \) respectively.
Note that the case \( \alpha=0, 1 \) are specializations of the forward and backward Euler method,
and the case \( \alpha = \frac{1}{2} \) is the classic ``trapezoid rule'' for numerical integration.
\begin{align*}
  x(t_{k}) - x(t_{k-1}) &= \int_{t_{k-1}}^{t_k} A x(s) \dd s + \int_{t_{k-1}}^{t_k} B u(s) \dd s
  \\ &=
  \int_{t_{k-1}}^{t_k} A x(s) \dd s + \dt_k B u_k
  \\ &\approx
  \dt_k \left[ (1-\alpha) A x_{k-1} + \alpha A x_k \right] + \dt_k B u_k
  .
\end{align*}
Rearranging yields
\begin{align*}
  (I - \alpha \dt_k \cdot A) x_k
  &= (I + (1-\alpha) \dt_k \cdot A) x_{k-1} + \dt_k \cdot B u_k
  \\
  x_k
  &= (I - \alpha \dt_k \cdot A)^{-1} (I + (1-\alpha) \dt_k \cdot A) x_{k-1} + (I - \alpha \dt_k \cdot A)^{-1} \dt_k \cdot B u_k
\end{align*}

This derives the \textbf{generalized bilinear transform (GBT)}~\citep{zhang2007performance}.
The \textbf{bilinear method} is the case \( \alpha=\frac{1}{2} \) of special significance, and was numerically found to be better than the forward and backward Euler methods \( \alpha = 0, 1 \) both in synthetic function approximation settings and in end-to-end experiments \citep[Figure 4]{gu2020hippo}.

\subsection{RNNs are LSSLs\texorpdfstring{: Proof of Results in \cref{sec:lssl-expressivity}}{}}
\label{sec:lssl:proofs}

We provide more detailed statements of \cref{lmm:gates,lmm:picard} from \cref{sec:lssl-expressivity}.
In summary, LSSLs and popular families of RNN methods all approximate the same continuous-time dynamics
\begin{equation}%
  \label{eq:dynamics}
 \dot{x}(t) = -x + f(t, x(t))
\end{equation}
by viewing them with a combination of two techniques.

We note that these results are about two of the most commonly used architecture modifications for RNNs.
First, the gating mechanism is ubiquitous in RNNs, and usually thought of as a heuristic for smoothing optimization~\citep{lstm}.
Second, many of the effective large-scale RNNs use linear (gated) recurrences and deeper models, which is usually thought of as a heuristic for computational efficiency~\citep{bradbury2016quasi}.
Our results suggest that neither of these are heuristics after all, and arise from standard ways to approximate ODEs.

To be more specific, we show that:
\begin{itemize}%
  \item Non-linear RNNs discretize the dynamics \eqref{eq:dynamics} by applying backwards Euler discretization to the linear term, which arises in the gating mechanism of RNNs (\cref{sec:lssl:proofs:gates}, \cref{lmm:rnn-dynamics}).
  \item A special case of LSSLs approximates the dynamics \eqref{eq:dynamics} (in continuous-time) by applying Picard iteration to the non-linear term (\cref{sec:lssl:proofs:picard}, \cref{thm:lssl-dynamics}).
  \item Deep linear RNNs approximate the dynamics \eqref{eq:dynamics} with both Picard iteration in the depth direction to linearize the non-linear term, and discretization (gates) in the time direction to discretize the equation (\cref{sec:lssl:proofs:qrnn}, \cref{thm:qrnn-dynamics}).
\end{itemize}
A comparison is summarized in \cref{tab:rnn}.

In the remainder of this section, we assume that there is an underlying function \( x(t) \) that satisfies \eqref{eq:dynamics} on some interval for any initial condition,
and that \( f \) is continuous and Lipschitz in its second argument.
Our goal is to show that several families of models approximate this in various ways.

\begin{table}
  \caption{A summary of the characteristics of popular RNN methods and their approximation mechanisms for capturing the dynamics \( \dot{x}(t) = -x(t) + f(t, x(t)) \) (equation \eqref{eq:dynamics}).
  The LSSL entries are for the very specific case with order \( N=1 \) and \( A=-1, B=1, C=1, D=0 \); LSSLs are more general.}
  \small
  \centering
  \begin{tabular}{@{}lllll@{}}
    \toprule
    Method                & RNN                                               & RNN                                                       & LSSL                                                 & LSSL                                 \\
    \emph{Variant}        & Gated                                             & Gated, linear                                             & Discrete \eqref{eq:1-discrete}+\eqref{eq:2-discrete} & Continuous \eqref{eq:1}+\eqref{eq:2} \\
    \emph{Special cases}  & LSTM~\citep{lstm}, GRU~\citep{chung2014empirical} & QRNN~\citep{bradbury2016quasi}, SRU~\citep{lei2017simple} &                                                      &                                      \\
    \midrule
    Deep?                 & Single-layer                                      & Deep                                                      & Deep                                                 & Deep                                 \\

    Continuous?           & Discrete-time                                     & Discrete-time                                             & Discrete-time                                        & Continuous-time                      \\
    Linear?               & Non-linear                                        & Linear                                                    & Linear                                               & Linear                               \\
    \midrule
    \emph{Approximation} \\
    \textbf{Depth-wise}   & -                                                 & Picard iteration                                          & Picard iteration                                     & Picard iteration                     \\
    \textbf{Time-wise}    & Backwards Euler                                   & GBT(\( \alpha=1 \))                                       & GBT(\( \alpha=\frac{1}{2} \)) (i.e.\ Bilinear)       & -                                    \\
    \bottomrule
  \end{tabular}
  \label{tab:rnn}
\end{table}

\subsubsection{Intuition / Proof Sketches}

We sketch the idea of how LSSLs capture popular RNNs. More precisely, we will show how approximating the dynamics \eqref{eq:dynamics} in various ways lead to types of RNNs and LSSLs.

The first step is to look at the simpler dynamics
\begin{align*}
  \dot{x}(t) = -x(t) + u(t)
\end{align*}
where there is some input \( u(t) \) that is independent of \( x \).
(In other words, in \eqref{eq:dynamics}, the function \( f(t, x) \) does not depend on the second argument.)

By directing applying the GBT discretization with \( \alpha=1 \), this leads to a gated recurrence (\cref{lmm:gates}).

The second step is that by applying the backwards Euler discretization more directly to \eqref{eq:dynamics},
this leads to a gated RNN where the input can depend on the state (\cref{lmm:rnn-dynamics}).

Alternatively, we can apply Picard iteration on \eqref{eq:dynamics}, which says that the iteration
\begin{align*}
  x^{(\ell)}(t) = x_0 + \int_{t_0}^t -x^{(\ell-1)}(s) \dd s + \int_{t_0}^t f(s, x^{(\ell-1)}(s)) \dd s
\end{align*}
converges to the solution \( x(t) \).

However, the first integral term is simple and can be tightened.
We can instead try to apply Picard iteration on only the second term, leaving the first integral in terms of \( x^{(\ell)} \).
Intuitively this should still converge to the right solution, since this is a weaker iteration; we're only using the Picard approximation on the second term.

\begin{align*}
  x^{(\ell)}(t) = x_0 + \int_{t_0}^t -x^{(\ell)}(s) \dd s + \int_{t_0}^t f(s, x^{(\ell-1)}(s)) \dd s
\end{align*}

Differentiating, this equation is the ODE
\begin{align*}
  \dot{x}^{(\ell)}(t) = -x^{(\ell)}(t) + f(t, x^{(\ell-1)}(t))
\end{align*}
This implies that alternating point-wise functions with a simple linear ODE \( \dot{x}^{(\ell)}(t) = -x^{(\ell)}(t) + u^{(\ell)}(t) \)
also captures the dynamics \eqref{eq:dynamics}.
But this is essentially what an LSSL is.

To move to discrete-time, this continuous-time layer can be discretized with gates as in \cref{lmm:gates}, leading to deep linear RNNs such as the QRNN,
or with the bilinear discretization, leading to the discrete-time LSSL.
We note again that in the discrete-time LSSL, \( \overline{A} \) and \( \overline{B} \) play the role of the gates \( \sigma, 1-\sigma \).

\subsubsection{Capturing gates through discretization}
\label{sec:lssl:proofs:gates}

\begin{lemma}%
  \label{lmm:rnn-dynamics}
  Consider an RNN of the form
  \begin{equation}
    \label{eq:rnn}
    x_k = (1-\sigma(z_k)) x_{k-1} + \sigma(z_k) \overline{f}(k, x_{k-1}),
  \end{equation}
  where \( \overline{f}(k, x) \) is an arbitrary function that is Lipschitz in its second argument (e.g., it may depend on an external input \( u_k \)).

  Then equation \eqref{eq:rnn} is a discretization of the dynamics \eqref{eq:dynamics}
  with step sizes \( \dt_k = \exp(z_k) \), i.e. \( x_k \approx x(t_k) \) where \( t_k = \sum_{i=1}^k \dt_i \).
\end{lemma}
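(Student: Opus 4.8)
The plan is to reuse the algebra behind \cref{lmm:gates} almost verbatim, now carrying the nonlinear term \( \overline f \) along instead of an external input \( u \). First I would reparameterize the gate exactly as in the proof of \cref{lmm:gates}: writing \( \dt_k = \exp(z_k) \) (the canonical way to produce a positive step size from an arbitrary expression \( z_k \)) gives \( \sigma(z_k) = \frac{e^{z_k}}{1+e^{z_k}} = \frac{\dt_k}{1+\dt_k} \) and \( 1-\sigma(z_k) = \frac{1}{1+\dt_k} \). Substituting these into \eqref{eq:rnn} and clearing the denominator \( 1+\dt_k \) yields, after rearrangement,
\begin{align*}
  (1+\dt_k)x_k &= x_{k-1} + \dt_k\,\overline f(k, x_{k-1}), \\
  x_k - x_{k-1} &= \dt_k\bigl(-x_k + \overline f(k, x_{k-1})\bigr).
\end{align*}

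Second, I would identify the right-hand side as a discretization of \eqref{eq:dynamics}. Comparing with the discrete integral form used throughout \cref{sec:lssl:ode}, this is exactly the rule obtained by treating the linear damping term \( -x(t) \) with the backward (implicit) endpoint and the forcing term \( f(t,x(t)) \) with the forward (explicit) endpoint using the previous iterate: with \( \overline f(k,\cdot) = f(t_k,\cdot) \) it is the GBT(\(\alpha=1\)) update \eqref{eq:gbt} for the linear ODE \( \dot x = -x + u \) with input \( u_k := \overline f(k,x_{k-1}) \), i.e.\ precisely the situation of \cref{lmm:gates} with the input taken to be the nonlinear feedback evaluated at the current state. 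Note that because the implicit part (\( -x_k \)) is linear, the update is in closed form, \( x_k = (x_{k-1} + \dt_k\,\overline f(k,x_{k-1}))/(1+\dt_k) \), so the scheme is well-defined.

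Third, I would verify consistency and convergence. Plugging the exact solution \( x(t) \) of \eqref{eq:dynamics} into the scheme and Taylor expanding (using \( \dot x(t_{k-1}) = -x(t_{k-1}) + f(t_{k-1}, x(t_{k-1})) \) and continuity of \( f \) in \( t \)), the residual \( (1+\dt_k)x(t_k) - x(t_{k-1}) - \dt_k f(t_k, x(t_{k-1})) \) is \( O(\dt_k^2) \); the damping contribution \( \dt_k x(t_{k-1}) \) cancels against the \( -\dt_k x \) term in \( \dt_k\dot x \) exactly as in the standard first-order analysis. Since \( f \) is Lipschitz in its second argument and the map \( x_{k-1}\mapsto x_k \) therefore has Lipschitz constant at most \( (1+L\dt_k)/(1+\dt_k) \le 1 + O(\dt_k) \), the standard convergence theorem for one-step methods on a bounded interval \citep{butcher2008numerical} gives \( x_k \to x(t_k) \) with global error \( O(\max_k \dt_k) \), which is the claimed \( x_k \approx x(t_k) \) for \( t_k = \sum_{i=1}^k \dt_i \).

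The main obstacle is conceptual rather than computational: the forcing term is evaluated at \( x_{k-1} \), not \( x_k \), so this is not literally the backward Euler scheme \eqref{eq:euler-backward} applied to the whole right-hand side but an implicit--explicit hybrid, and I would need to carry out the consistency check directly rather than quote \eqref{eq:euler-backward} as a black box — together with spelling out the mild regularity on \( f \) (Lipschitz in \( x \), continuous in \( t \)) that makes the \( O(\dt^2) \) local error, and hence the convergence, go through.
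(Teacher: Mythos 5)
Your proposal is correct and follows essentially the same route as the paper: the paper applies backward Euler to \eqref{eq:dynamics}, rearranges to \( x_k = \frac{1}{1+\dt_k}x_{k-1} + \frac{\dt_k}{1+\dt_k}f(t_k,\cdot) \), and identifies \( \frac{\dt_k}{1+\dt_k} = \sigma(z_k) \) under \( \dt_k = \exp(z_k) \), which is exactly your first step run in the opposite direction. In fact you are more careful than the paper, which silently replaces \( f(t_k, x_k) \) by \( \overline{f}(k, x_{k-1}) \) in its final line without comment; your observation that the scheme is an implicit--explicit hybrid (implicit on the damping, explicit on the forcing) and your accompanying \( O(\dt_k^2) \) consistency and Lipschitz-based convergence check address precisely the step the paper glosses over.
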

\begin{proof}%
  Apply the backwards Euler discretization \eqref{eq:euler-backward} to equation \eqref{eq:dynamics} to get
  \begin{align*}
    x_k - x_{k-1} &= \dt_k \left[ -x_k + f(t_k, x_k) \right] \\
    (1 + \dt_k) x_k &= x_{k-1} + \dt_k f(t_k, x_k) \\
    x_k &= \frac{1}{1 + \dt_k} x_{k-1} + \frac{\dt_k}{1+\dt_k} f(t_k, x_k)
    .
  \end{align*}
  Note that \( \frac{\dt_k}{1+\dt_k} = \frac{e^{z_k}}{1+e^{z_k}} = \frac{1}{1+e^{-z_k}} \)
  and \( \frac{1}{1+\dt_k} = 1 - \frac{\dt_k}{1+\dt_k} \), thus
  \begin{align*}
    x_k &= (1-\sigma(z_k)) x_{k-1} + \sigma(z_k) \overline{f}(k, x_{k-1})
    .
  \end{align*}
  Here we are denoting \( \overline{f}(k, x) = f(t_k, x) \) to be a discrete-time version of \( f \) evaluatable at the given timesteps \( t_k \).
\end{proof}

Note that a potential external input function \( u(t) \) or sequence \( u_k \) is captured through the abstraction \( f(t, x) \).
For example, a basic RNN could define \( \overline{f}(k, x) = f(t_k, x) = \text{tanh}(W x + U u_k) \).

\subsubsection{Capturing non-linearities through Picard iteration}
\label{sec:lssl:proofs:picard}

The main result of this section is \cref{thm:lssl-dynamics} showing that LSSLs can approximate the same dynamics as the RNNs in the previous section.
This follows from a technical lemma.

\begin{lemma}%
  \label{lmm:lssl}
  Let \( f(t, x) \) be any function that satisfies the conditions of the Picard-Lindel\"of Theorem (\cref{thm:picard-iteration}).

  Define a sequence of functions \( x^{(\ell)} \) by alternating the (point-wise) function \( f \) with solving an ODE
  \begin{align*}
    x^{(0)}(t) &= x_0 \\
    u^{(\ell)}(t) &= f(t, x^{(\ell-1)}(t)) \\
    \dot{x}^{(\ell)}(t) &= A x^{(\ell)}(t) + u^{(\ell)}(t)
    .
  \end{align*}
  Then \( x^{(\ell)} \) converges to a solution \( x^{(\ell)}(t) \to x(t) \) of the IVP
  \begin{align*}
    \dot{x}(t) &= A x(t) + f(t, x(t))
    \\
    x(t_0) &= x_0.
  \end{align*}
\end{lemma}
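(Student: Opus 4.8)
The plan is to recognize that each iterate $x^{(\ell)}$ is obtained from $x^{(\ell-1)}$ by one application of a single fixed integral operator, and then to invoke the Banach fixed point theorem in the form valid when some power of the operator is a contraction. The key observation is that the linear ODE $\dot{x}^{(\ell)}(t) = A x^{(\ell)}(t) + u^{(\ell)}(t)$ with $x^{(\ell)}(t_0) = x_0$ has the closed form given by variation of parameters (Duhamel's formula)
\[
  x^{(\ell)}(t) = e^{A(t-t_0)} x_0 + \int_{t_0}^t e^{A(t-s)} f\parens{s, x^{(\ell-1)}(s)} \dd s .
\]
So, on a closed interval $I \ni t_0$, define the operator $T$ on $C(I,\R^N)$ (continuous functions with the sup norm) by
\[
  (T\phi)(t) = e^{A(t-t_0)} x_0 + \int_{t_0}^t e^{A(t-s)} f\parens{s, \phi(s)} \dd s .
\]
Then $x^{(\ell)} = T x^{(\ell-1)}$, hence $x^{(\ell)} = T^\ell x^{(0)}$ where $x^{(0)}$ is the constant function $x_0$. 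Moreover $x$ is a fixed point of $T$ iff $x(t) = e^{A(t-t_0)}x_0 + \int_{t_0}^t e^{A(t-s)} f(s,x(s)) \dd s$, which — differentiating the integral and using $x(t_0)=x_0$ — is exactly equivalent to the IVP $\dot{x} = Ax + f(t,x)$, $x(t_0)=x_0$.

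Next I would establish the iterated-contraction estimate. Let $L$ be a Lipschitz constant for $f$ in its second argument on the relevant region, and let $M := \sup_{0 \le \tau \le |I|} \|e^{A\tau}\|$, which is finite and bounded by $e^{\|A\|\,|I|}$. For $\phi,\psi \in C(I,\R^N)$,
\[
  \|(T\phi)(t) - (T\psi)(t)\| \le ML \int_{t_0}^t \|\phi(s) - \psi(s)\| \dd s ,
\]
and iterating this bound $n$ times gives the standard factorial decay $\|(T^n\phi)(t) - (T^n\psi)(t)\| \le \frac{(ML\,|t-t_0|)^n}{n!}\,\|\phi-\psi\|_\infty$. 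Choosing $n$ large enough that $\frac{(ML\,|I|)^n}{n!} < 1$, the map $T^n$ is a contraction on the complete metric space $C(I,\R^N)$. By the Banach fixed point theorem applied to $T^n$ (whose unique fixed point is also the unique fixed point of $T$), $T$ has a unique fixed point $x^\ast$, and $T^\ell\phi \to x^\ast$ in sup norm for every $\phi$; in particular $x^{(\ell)} = T^\ell x^{(0)} \to x^\ast$. By the paragraph above, $x^\ast$ solves the stated IVP, which finishes the argument. To avoid assuming $f$ is globally Lipschitz, one restricts to a small enough $I$ and a tube around the constant function $x_0$, exactly as in the classical Picard–Lindel\"of proof (\cref{thm:picard-iteration}), so that all iterates stay in a region where $f$ is Lipschitz.

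I expect the only real subtlety — rather than a genuine obstacle — to be noticing that this is \emph{not} the textbook Picard iteration: here the linear part $Ax$ is integrated exactly and only the nonlinear forcing $f(t,x)$ is ``frozen'' at each step, so the correct operator to iterate is the Duhamel operator with kernel $e^{A(t-s)}$ rather than the plain $x_0 + \int f$. Once that representation is in hand the rest is the familiar iterated-contraction argument, and the matrix exponential enters only through the harmless uniform bound $\|e^{A\tau}\| \le e^{\|A\|\tau}$ on a bounded interval (in the special case $A = -1$ used elsewhere in the paper, $\|e^{A\tau}\| \le 1$ and the bookkeeping is even lighter).
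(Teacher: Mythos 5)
Your argument is correct, but it takes a different route from the paper's. The paper proves this lemma by the change of variables $z(t) = e^{-At}x(t)$: under this "tilting," the ODE becomes $\dot z = g(t,z)$ with $g(t,z) = e^{-At}f(t, e^{At}z)$, the lemma's iteration becomes exactly the textbook Picard iteration for $g$, and convergence then follows by citing the Picard--Lindel\"of theorem (\cref{thm:picard-iteration}) as a black box, after checking that $g$ inherits the Lipschitz property from $f$ on a bounded interval. You instead work directly with the iteration in the original variables, identify it as repeated application of the Duhamel operator $(T\phi)(t) = e^{A(t-t_0)}x_0 + \int_{t_0}^t e^{A(t-s)} f(s,\phi(s))\,\mathrm{d}s$ (which is indeed what the transformed iterates reduce to, so the two proofs are analyzing the same sequence), and re-prove convergence from scratch via the iterated-contraction estimate and the Banach fixed point theorem applied to $T^n$. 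Your version is self-contained and yields an explicit rate $\tfrac{(ML|t-t_0|)^n}{n!}$, at the cost of redoing the contraction bookkeeping; the paper's version is a two-line reduction to the theorem it has already stated, with the one nontrivial step being the observation that the exponential conjugation turns the "partial" Picard iteration (freezing only the nonlinear term) into the standard one. The only detail you leave implicit is that convergence of the full sequence $T^\ell\phi$ (not just the subsequence $T^{nk}\phi$) follows by splitting indices modulo $n$, but that is routine and does not constitute a gap.
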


\begin{theorem}%
  \label{thm:lssl-dynamics}
  A (continuous-time) deep LSSL with order \( N=1 \) and \( A=-1, B=1, C=1, D=0 \) approximates the non-linear dynamics \eqref{eq:dynamics}.
\end{theorem}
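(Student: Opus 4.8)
The plan is to observe that a deep LSSL with the stated parameters \emph{is} — essentially by unwinding definitions — the iteration scheme of \cref{lmm:lssl} specialized to $A=-1$, and then to invoke that lemma. First I would unpack a single layer: with $N=1$ and $A=-1,\ B=1,\ C=1,\ D=0$, equations \eqref{eq:1}--\eqref{eq:2} say that one continuous-time LSSL layer maps its input signal $u(\cdot)$ to the output $y(\cdot)=x(\cdot)$, where $\dot x(t)=-x(t)+u(t)$; i.e.\ the layer output equals its state, and the state solves the damped linear ODE driven by the layer's input. In a deep LSSL, consecutive layers are separated by a position-wise nonlinearity, so the input to layer $\ell$ is a fixed position-wise map applied to the output of layer $\ell-1$. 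Carrying the external input stream along, that map at time $t$ is $x\mapsto f(t,x)$, where the explicit $t$-dependence is nothing but the value $u(t)$ of the external input at that position — exactly the reading used in the remark after \cref{lmm:rnn-dynamics}. Writing $x^{(\ell)}$ for the state/output of the $\ell$-th layer, the stack therefore computes precisely the recursion
\begin{align*}
  u^{(\ell)}(t) &= f(t,\, x^{(\ell-1)}(t)), \\
  \dot x^{(\ell)}(t) &= -x^{(\ell)}(t) + u^{(\ell)}(t),
\end{align*}
with $x^{(0)}$ the constant function $x_0$ (the initial iterate).

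Next I would apply \cref{lmm:lssl} with $A=-1$: under the Picard--Lindel\"of hypotheses on $f$ (continuous and Lipschitz in its second argument, which we assume of $f$ throughout this subsection), the iterates $x^{(\ell)}$ converge, on a neighborhood $I\ni t_0$, to the solution $x$ of $\dot x(t)=-x(t)+f(t,x(t))$, which is exactly the target dynamics \eqref{eq:dynamics}. Hence for any target accuracy $\varepsilon$ there is a finite depth $L$ so that the $L$-layer LSSL's output is within $\varepsilon$ of the true solution of \eqref{eq:dynamics} on $I$, and an infinitely deep LSSL represents it exactly. This is the claimed statement; I would emphasize that it is an approximation result \emph{in the depth direction} (finitely many Picard-type sweeps), dual to the way discretization (\cref{lmm:gates}, \cref{lmm:rnn-dynamics}) gives approximation in the time direction.

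The only care needed in \cref{thm:lssl-dynamics} itself is the identification of the inter-layer nonlinearity with the map $x\mapsto f(t,x)$: a position-wise module does not literally see $t$, so the $t$-dependence of $f$ must be carried entirely by the external input, and the hypothesis on $f$ should be stated accordingly. The genuinely substantive work — the convergence of the iterates — lives in \cref{lmm:lssl}, which I take as given; I would only remark on how one proves it, namely a contraction/Gr\"onwall argument: writing each sweep by variation of parameters as $x^{(\ell)}(t)=e^{A(t-t_0)}x_0+\int_{t_0}^{t}e^{A(t-s)}f(s,x^{(\ell-1)}(s))\,\dd s$, the map $x^{(\ell-1)}\mapsto x^{(\ell)}$ is a contraction on a sufficiently short interval (using the Lipschitz bound on $f$ and the boundedness of $e^{A(t-s)}$ there), so Banach's fixed-point theorem gives the limit, which differentiation identifies with the solution of the IVP. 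Thus I expect no real obstacle for \cref{thm:lssl-dynamics} beyond bookkeeping — the analytic content has been isolated into \cref{lmm:lssl}.
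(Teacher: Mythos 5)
Your proposal matches the paper's proof of \cref{thm:lssl-dynamics} essentially verbatim: unpack the single layer as the damped ODE $\dot y = -y + u$, identify the depth-wise stack with the iteration of \cref{lmm:lssl}, and invoke that lemma to get convergence to the solution of \eqref{eq:dynamics}. Your side remark on how \cref{lmm:lssl} itself would be proved (direct contraction via variation of parameters) differs from the paper's route (the substitution $z(t)=e^{-At}x(t)$ reducing the modified iteration to standard Picard iteration), but since you take that lemma as given this does not affect the argument for the theorem.
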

\begin{proof}%
  Applying the definition of an LSSL (equations \eqref{eq:1}+\eqref{eq:2}) with these parameters results in a layer mapping \( u(t) \mapsto y(t) \) where \( y \) is defined implicitly through the ODE
  \begin{align*}
    \dot{y}(t) = -y(t) + u(t)
    .
  \end{align*}
  This can be seen since the choice of \( C, D \) implies \( y(t) = x(t) \) and the choice of \( A, B \) gives the above equation.

  Consider the deep LSSL defined by alternating this LSSL with position-wise (in time) non-linear functions
  \begin{align*}
    u^{(\ell)}(t) &= f(t, y^{(\ell-1)}(t))
    \\
    \dot{y}^{(\ell)}(t) &= -y^{(\ell)}(t) + u^{(\ell)}(t)
    .
  \end{align*}
  But this is exactly a special case of \cref{lmm:lssl},
  so that we know \( y^{(\ell)}(t) \to y(t) \) such that \( y(t) \) satisfies
  \begin{align*}
    \dot{y}(t) = -y(t) + f(t, y(t))
  \end{align*}
  as desired.
\end{proof}

\begin{proof}[Proof of \cref{lmm:lssl}]%
  Let
  \begin{align*}
    z(t) = e^{-At} x(t)
  \end{align*}
  (and \( z_0 = z(t_0) = x(t_0) = x_0 \)).
  Note that
  \begin{align*}
    \dot{z}(t) &= e^{-At}\left[ \dot{x}(t) - A x(t) \right]
    \\&= e^{-At} f(t, x(t))
    \\&= e^{-At} f(t, e^{At} z(t))
    .
  \end{align*}
  Since \( f \) satisfies the conditions of the Picard Theorem (i.e., is continuous in the first argument and Lipschitz in the second),
  so does the function \( g \) where \( g(t, x) := e^{-At} f(t, e^{At} x) \) for some interval around the initial time.

  By \cref{thm:picard-iteration}, the iterates \( z^{(\ell)} \) defined by
  \begin{equation}
    \label{eq:tilted-picard}
    z^{(\ell)}(t) = z_0 + \int_{t_0}^t e^{-As} f(s, e^{As} z^{(\ell-1)}(s)) \dd s
  \end{equation}
  converges to \( z \).

  Define \( x^{(\ell)}(t) = e^{At} z^{(\ell)}(t) \).
  Differentiate \eqref{eq:tilted-picard} to get
  \begin{align*}
    \dot{z}^{(\ell)}(t)
    &= e^{-At} f(t, e^{At} z^{(\ell-1)}(t))
    \\
    &= e^{-At} f(t, x^{(\ell-1)}(t))
    \\
    &= e^{-At} u^{(\ell)}(t)
    .
  \end{align*}
  But
  \begin{align*}
    \dot{z}^{(\ell)}(t) = e^{-At}\left[ \dot{x}^{(\ell)}(t) - A x^{(\ell)}(t) \right],
  \end{align*}
  so
  \begin{align*}
    \dot{x}^{(\ell)}(t) = A x^{(\ell)}(t) + u^{(\ell)}(t).
  \end{align*}

  Since \( z^{(\ell)} \to z \) and \( x^{(\ell)}(t) = e^{At}z^{(\ell)}(t) \) and \( x(t) = e^{At} z(t) \),
  we have \( x^{(\ell)} \to x \).
\end{proof}

\subsubsection{Capturing Deep, Linear, Gated RNNs}
\label{sec:lssl:proofs:qrnn}

We finally note that several types of RNNs exist which were originally motivated by approximating linearizing gated RNNs for speed.
Although these were treated as a heuristic for efficiency reasons,
they are explained by combining our two main technical results.

\cref{lmm:rnn-dynamics} shows that a single-layer, discrete-time, non-linear RNN approximates the dynamics \eqref{eq:dynamics} through discretization, which arises in the gating mechanism.

\cref{thm:lssl-dynamics} shows that a deep, continuous-time, linear RNN approximates \eqref{eq:dynamics} through Picard iteration, where the non-linearity is moved to the depth direction.

Combining these two results leads to \cref{thm:qrnn-dynamics}, which says that a deep, discrete-time, linear RNN can also approximate the same dynamics \eqref{eq:dynamics}.

\begin{corollary}%
  \label{thm:qrnn-dynamics}
  Consider a deep, linear RNN of the form
  \begin{align*}
    x^{(\ell)}_k &= (1-\sigma(z_k)) x^{(\ell)}_{k-1} + \sigma(z_k) u^{(\ell)}_k
    \\
    u^{(\ell)}_k &= \overline{f}(k, x^{(\ell-1)}_{k})
    .
  \end{align*}
  This is a discretization of the dynamics \eqref{eq:dynamics}
  with step sizes \( \dt_k = \exp(z_k) \), i.e. \( x_k \approx x(t_k) \) where \( t_k = \sum_{i=1}^k \dt_i \).
\end{corollary}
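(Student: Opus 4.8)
The plan is to obtain \cref{thm:qrnn-dynamics} by \emph{composing} the two mechanisms already isolated in \cref{lmm:gates,lmm:rnn-dynamics} and in \cref{lmm:lssl,thm:lssl-dynamics}: Picard iteration in the depth direction and backward-Euler discretization in the time direction. The starting point is the continuous-time deep linear model underlying \cref{thm:lssl-dynamics}, namely the layers
\begin{align*}
  u^{(\ell)}(t) &= f(t, x^{(\ell-1)}(t)) \\
  \dot{x}^{(\ell)}(t) &= -x^{(\ell)}(t) + u^{(\ell)}(t)
\end{align*}
which, by \cref{lmm:lssl} with \( A = -1 \), satisfy \( x^{(\ell)} \to x \), the solution of the target dynamics \eqref{eq:dynamics}. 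It therefore suffices to exhibit the recurrence in the statement as the layerwise discretization of this continuous-time deep model.

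First I would fix the layer index \( \ell \) and read the linear ODE \( \dot{x}^{(\ell)}(t) = -x^{(\ell)}(t) + u^{(\ell)}(t) \) as a state-space equation \eqref{eq:1} with \( A = -1 \), \( B = 1 \), and external input \( u^{(\ell)} \). Applying GBT(\( \alpha = 1 \)) (backward Euler) exactly as in the proof of \cref{lmm:gates} — setting \( \dt_k = \exp(z_k) \) and using \( \frac{\dt_k}{1+\dt_k} = \sigma(z_k) \), \( \frac{1}{1+\dt_k} = 1 - \sigma(z_k) \) — turns \eqref{eq:gbt} into \( x^{(\ell)}_k = (1-\sigma(z_k)) x^{(\ell)}_{k-1} + \sigma(z_k)\, u^{(\ell)}_k \), where \( u^{(\ell)}_k \) is the sampled (or interval-averaged) value of \( u^{(\ell)}(t) \) on \( [t_{k-1}, t_k] \). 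Second, the inter-layer coupling \( u^{(\ell)}_k = \overline{f}(k, x^{(\ell-1)}_k) \) is precisely the discrete-time evaluation of the position-wise map \( u^{(\ell)}(t) = f(t, x^{(\ell-1)}(t)) \), with \( \overline{f}(k, \cdot) := f(t_k, \cdot) \) in the sense of \cref{lmm:rnn-dynamics}. Stacking these observations over all layers shows the deep linear RNN in the statement is the layerwise backward-Euler discretization of the deep continuous linear model above, on the common grid \( t_k = \sum_{i=1}^k \dt_i \).

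Finally I would combine the two convergence statements. For a \emph{fixed} depth \( \ell \), the recurrence yields \( x^{(\ell)}_k \approx x^{(\ell)}(t_k) \), with error bounded by the usual first-order local truncation estimate \( O(\dt_k^2) \) summed along the grid, together with the Lipschitz stability of the linear flow and the Lipschitz constant of \( f \) propagated through the earlier layers. Letting depth grow, \cref{lmm:lssl} gives \( x^{(\ell)}(t) \to x(t) \). Choosing the mesh fine relative to \( \ell \) then gives \( x^{(\ell)}_k \to x(t_k) \), which is the claimed approximation.

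The main obstacle I anticipate is the \emph{interchange of the two limits}: one is simultaneously refining the time mesh (\( \dt_k \to 0 \)) and increasing depth (\( \ell \to \infty \)), and each layer's discretization error can be amplified when passed through the non-linear map \( f \) into subsequent layers. Making this rigorous needs a uniform-in-\( \ell \) stability estimate — showing that composing \( \ell \) contractive backward-Euler steps with the position-wise maps does not blow up the accumulated error — which is the one place the argument goes beyond merely citing \cref{lmm:gates} and \cref{lmm:lssl}. Everything else is bookkeeping identical to the proofs of those two results.
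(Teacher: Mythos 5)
Your proposal is correct and follows essentially the same route as the paper: the paper's proof likewise applies \cref{lmm:rnn-dynamics} layerwise to identify the gated recurrence as a backward-Euler discretization of \( \dot{x}^{(\ell)} = -x^{(\ell)} + u^{(\ell)} \) with \( u^{(\ell)}(t) = f(t, x^{(\ell-1)}(t)) \), and then invokes \cref{lmm:lssl} to conclude the depth-wise Picard convergence. Your closing discussion of the interchange of the mesh-refinement and depth limits is a reasonable concern, but the paper's proof is informal on exactly this point and does not carry out a uniform-in-\( \ell \) error estimate either.
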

\begin{proof}%
  By \cref{lmm:rnn-dynamics}, the first equation is a discretization of the continuous-time equation
  \begin{align*}
    \dot{x}^{(\ell)}(t) = -x^{(\ell)}(t) + u^{(\ell)}(t)
  \end{align*}
  where
  \begin{align*}
    u^{(\ell)}(t) = f(t, x^{(\ell-1)}(t))
  \end{align*}
  uses the continuous-time version \( f \) of \( \overline{f} \).
  But by \cref{lmm:lssl}, this is an approximation of the dynamics \eqref{eq:dynamics} using Picard iteration.
\end{proof}

Notable examples of this type of model include the Quasi-RNN or QRNN \citep{bradbury2016quasi} and the Simple Recurrent Unit (SRU) \citep{lei2017simple},
which are among the most effective models in practice.
We remark that these are the closest models to the LSSL and suggest that their efficacy is a consequence of the results of this section,
which shows that they are not heuristics.

We note that there are many more RNN variants that use a combination of these gating and linearization techniques that were not mentioned in this section, and can be explained similarly.

\section{LSSL Proofs and Algorithms}
\label{sec:sllssl-proofs}

This section proves the results in \cref{sec:mlssl}, and is organized as follows:
\begin{itemize}%
  \item \cref{sec:sllssl:hippo} gives a self-contained synopsis of the HiPPO framework~\citep{gu2020hippo}.
  \item \cref{sec:sllssl:general} proves \cref{thm:hippo}, which shows that the \( \hippo \) operators for any measure lead to a simple linear ODE of the form of equation \eqref{eq:1}.
  \item \cref{sec:sllssl:jacobi} proves \cref{thm:jacobi}, including a formal definition of quasiseparable matrices (i.e., how LSSL matrices are defined) in \cref{def:quasi}.
\end{itemize}

\paragraph{Notation}
This section is technically involved and we adopt notation to simplify reasoning about the shapes of objects.
In particular, we use bold capitals (e.g.\ \( \vA \)) to denote matrices and bold lowercase (e.g.\ \( \vb \)) to denote vectors.
For example, equation \eqref{eq:1} becomes \( \dot{x} = \vA x + \vb u \).
These conventions are adopted throughout \cref{sec:sllssl-proofs,sec:sllssl-algorithms}.

\subsection{Preliminaries: HiPPO Framework and Recurrence Width}
\label{sec:sllssl:hippo}

This section summarizes technical preliminaries taken directly from prior work.
We include this section so that this work is self-contained and uses consistent notation, which may deviate from prior work.
For example, we use modified notation from \citet{gu2020hippo} in order to follow conventions in control theory
(e.g., we denote input by \( u \) and state by \( x \) as in \eqref{eq:1}).

\cref{sec:hippo-definition} formally defines the HiPPO operator mathematically as in \citep[Section 2.2]{gu2020hippo},
and \cref{sec:hippo-framework} overviews the steps to derive the HiPPO operator as in \citep[Appendix C]{gu2020hippo}.
\cref{sec:rw} defines the class of Low Recurrence Width (LRW) matrices, which is the class of matrices that our generalization of the HiPPO results (\cref{thm:hippo}) uses.

\subsubsection{Definition of HiPPO Operator}
\label{sec:hippo-definition}

\begin{definition}[\citep{gu2020hippo}, Definition 1]
  Given a time-varying measure $\mu^{(t)}$ supported on $(-\infty,t]$, an N-dimensional subspace $\calG$ of polynomials, and a continuous function $u:\R_{\geq 0} \rightarrow \R$, HiPPO defines a \emph{projection} operator $\mathsf{proj}_t$ and a \emph{coefficient extraction} operator $\mathsf{coef}_t$ at every time $t$, with the following properties:
  \begin{enumerate}
    \item $\mathsf{proj}_t$ takes a function $u$ restricted up to time $t$, $u_{\leq t} := u(x)|_{x \leq t}$, and maps it to a polynomial $g^{(t)} \in \calG$, that minimizes the approximation error $\|u_{\leq t} - g^{(t)}\|_{L_2(\mu^{(t)})}$.
    \item $\mathsf{coef}_t:\calG \rightarrow \R^N$ maps the polynomial $g^{(t)}$ to the coefficients $c(t) \in \R^N$ of the basis of orthogonal polynomials defined with respect to the measure $\mu^{(t)}$.
  \end{enumerate}

  The composition $\mathsf{coef}_t \circ \mathsf{proj}_t$ is called \( \hippo \), which is an operator mapping a function $u: \R_{\geq 0} \rightarrow \R$ to the optimal projection coefficients $c:\R_{\geq 0} \rightarrow \R^N$ (i.e $(\hippo(u))(t) = \mathsf{coef}_t(\mathsf{proj}_t(f))$.
  \label{def:hippo}
\end{definition}

\subsubsection{HiPPO Framework for Deriving the HiPPO Operator}
\label{sec:hippo-framework}

The main ingredients of HiPPO consists of an approximation measure and an orthogonal polynomial basis. We recall how they are defined in \cite{gu2020hippo} (we note that compared to \citet{gu2020hippo}, our notation has changed from input \( f(t) \) coefficients (state) \( c(t) \) to input \( u(t) \) and coefficients (state) \( x(t) \), following conventions in controls).

\paragraph{Approximation Measures} At every $t$, the approximation quality is defined with respect to a measure $\mu^{(t)}$ supported on $(-\infty,t]$. We assume that the measures $\mu^{(t)}$ have densities $\omega(t,Y) := \frac{\dd \mu^{(t)}}{ \dd Y}$. Note that this implies that integrating with respect to $\dd \mu^{(t)}$ is the same as integrating with respect to $\omega(t,Y) \dd Y$.
\paragraph{Orthogonal Polynomial basis} Let $\{P_n^{(t)}\}_{n \in \mathbb{N}}$ denote a sequence of orthogonal polynomials with respect to some time-varying measure $\mu^{(t)}$. Let $p_n^{(t)}$ be the normalized version of of orthogonal $P_n^{(t)}$, and define \[p_n(t,Y) = p_n^{(t)}(Y).\]
In particular, the above implies that
\[\int_{-\infty}^t p_n^{(t)}(Y)\cdot p_m^{(t)}(Y) \omega(t,Y) \dd Y =\delta_{m,n}.\]

In the general framework, HiPPO does not require an orthogonal polynomial basis as the selected basis.
The choice of basis is generalized by tilting with $\chi$.

\paragraph{Tilted measure and basis} For any scaling function $\chi(t,Y)$, the functions $p_n(t,Y)\chi(t,Y)$ are orthogonal with respect to the density $\frac{\omega}{\chi^2}$ at every time $t$.
Define $\nu(t)$ to be the normalized measure with density proportional to $\frac{\omega}{\chi^2}$, with normalization constant $\zeta(t) = \int_0^t \frac{\omega(t,Y)}{\chi(t,Y)^2}\mathrm{d}x$.

We express the coefficients $x_n(t)$ calculated by the HiPPO framework as:

\begin{equation}
  x_n(t) = \frac{1}{\sqrt{\zeta(t)}}\int_0^t u(Y)p_n(t,Y) \frac{\omega(t,Y)}{\chi(t,Y)} \dd Y.
  \label{eq:gen-cnt}
\end{equation}

To use this to derive $\dot x_n(t)$, let $h(t,Y) = u(Y)p_n(t,Y)\omega(t,Y)$. We see that

\begin{align*}
  \dot x_n(t) &=
  \frac{\mathrm{d}}{\mathrm{d}t} \int_0^t u(Y)p_n(t,Y)\omega(t,Y)\mathrm{d}Y \\
  &= \frac{\mathrm{d}}{\mathrm{d}t} \int_0^t h(t,Y)\mathrm{d}Y \\
  &= \int_{0}^{t} \frac{\partial}{\partial t} h(t,Y)\mathrm{d}Y  + h(t,t) \\
  &= \int_0^t u(Y)  \parens{\frac{\partial}{\partial t}p_n\parens{t,Y}}\omega\parens{t,Y}\mathrm{d}Y + \int_0^t f(Y) p_n\parens{t,Y}\parens{\frac{\partial}{\partial t}\omega\parens{t,Y}}\mathrm{d}Y \\&\ \ \ \ + u(t)p_n(t,t)\omega(t,t).
\end{align*}

This allows $\dot x_n(t)$ to be written as

\begin{align}
    \dot x_n(t) &= u(t)p_n(t,t)\omega(t,t) + \int_0^t u(Y)  \parens{\frac{\partial}{\partial t}p_n\parens{t,Y}}\omega\parens{t,Y}\mathrm{d}Y \nonumber \\&\ \ \ \ +  \int_0^t f(Y) p_n\parens{t,Y}\parens{\frac{\partial}{\partial t}\omega\parens{t,Y}}\mathrm{d}Y.
  \label{eq:ddt-cnt}
\end{align}

Although \citet{gu2020hippo} describe the framework in the full generality above and use \( \chi \) as another degree of freedom, in their concrete derivations they always fix \( \chi = \omega \).
Our general results also use this setting.
For the remainder of this section, we assume the ``full tilting'' case \( \chi = \omega \).
In particular, this means that in \cref{eq:ddt-cnt}, we essentially substitute $\omega$ above with $1$ and divide each term by the inverse square root of our normalization constant, $\zeta$, to get the coefficient dynamics that we will use in our arguments:

\begin{equation}
  \dot x_n(t) = \frac{1}{\sqrt{\zeta(t)}}u(t)p_n(t,t) + \frac{1}{\sqrt{\zeta(t)}}\int_0^t u(Y)  \parens{\frac{\partial}{\partial t}p_n\parens{t,Y}}\mathrm{d}Y 
  \label{eq:gen-ddt-cnt}
\end{equation}

Now, if we can show that each of the integrated terms in (\ref{eq:ddt-cnt}) are linear combinations of $x_n(t)$, this would be the same as saying that $\dot x_n(t) = \vA(t) x(t) + \vb(t)u(t)$ for some $\vA(t)$. Therefore, the incremental update operation would be bounded by the runtime of the matrix-vector operation $\vA(t) x(t)$.

\subsubsection{Recurrence Width}
\label{sec:rw}

Our final goal is to show that
 $\dot x_n(t) = \vA(t) x(t) + \vb(t)u(t)$ for some $\vA(t)$ with constant recurrence width (see Definition \ref{def:recur-width}).
This will show \cref{thm:hippo}, and also imply that the MVM $\vA(t)x(t)$ can be computed in $\tilde{O}(N)$ time.
To build this argument, we borrow the fact that OPs all have recurrence width 2 and results regarding matrix-vector multiplication of matrices with constant recurrence width along with their inverses.

 \begin{definition}[\cite{de2018two}]
An $N \times N$ matrix $\vA$ has recurrence width $t$ if the polynomials $a_i(X) = \sum_{j=0}^{N-1}\vA[i,j]X^j$ satisfy $\deg(a_i) \leq i$ for $i < t$, and
\[a_i(X) = \sum_{j=1}^t g_{i,j}(X)a_{i-j}(X)\]
for $i \geq t$, where the polynomials $g_{i,j} \in \R[X]$ have degree at most $j$.
\label{def:recur-width}
\end{definition}

\begin{theorem}[\cite{de2018two}, Theorem 4.4]
For any $N \times N$ matrix $\vA$ with constant recurrence width, any vector $\vx \in \R^n$, $\vA\vx$ can be computed with $\tilde{O}(N)$ operations over $\R$.
 \label{thm:nlin-matrixvec}
\end{theorem}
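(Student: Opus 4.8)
The plan is to prove \cref{thm:nlin-matrixvec} along the standard route for superfast structured matrix--vector products (this is exactly the argument of \citet{de2018two}): encode the recurrence of \cref{def:recur-width} as a product of small matrices of \emph{polynomials}, and organize the resulting polynomial arithmetic as a two-pass divide-and-conquer so that every level of the recursion costs quasi-linear time once polynomial multiplication is done by FFT. It is cleanest to compute the transposed map first: writing $a_i(X)=\sum_j \vA[i,j]X^j$ for the row polynomials, one has $(\vA^\top \vz)_j = \sum_i \vA[i,j] z_i = [X^j]P(X)$ where $P(X):=\sum_i z_i a_i(X)$, so a quasi-linear algorithm for $\vz\mapsto P(X)$ yields one of the same complexity for $\vx\mapsto \vA\vx$ by the transposition (Tellegen) principle. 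Working with the transpose is also \emph{why} the algorithm can be quasi-linear: each $a_i$ has degree $\Theta(i)$, so writing them all out already costs $\Theta(N^2)$, whereas the single combination $P$ has degree $<N$.

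Next, package the recurrence $a_i=\sum_{j=1}^{t} g_{i,j}a_{i-j}$ (with $\deg g_{i,j}\le j$ and $t=O(1)$) by setting $\mathbf a_i := (a_i,\dots,a_{i-t+1})^\top$, so that $\mathbf a_i = C_i(X)\,\mathbf a_{i-1}$ for an explicit $t\times t$ companion-type matrix $C_i(X)$ whose entries have degree $O(t)=O(1)$. For a dyadic interval $[\ell,r)$ put $T_{\ell,r}(X):=C_{r-1}(X)\cdots C_{\ell}(X)$, a $t\times t$ matrix of polynomials of degree $O(r-\ell)$ with $T_{\ell,r}=T_{m,r}\,T_{\ell,m}$; these are computed for all nodes of a balanced binary tree on $[t,N)$ bottom-up, an internal node of width $L$ costing $O(t^2)$ polynomial products of degree $O(L)$, i.e.\ $\tilde{O}(L)$, hence $\tilde{O}(N)$ total. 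In a second, post-order pass, at node $[\ell,r)$ with midpoint $m$ we maintain the length-$t$ row of polynomials $\mathbf S_{[\ell,r)}(X)$ for which $\sum_{\ell\le i<r} z_i a_i = \mathbf S_{[\ell,r)}(X)\cdot \mathbf a_{\ell-1}$ (it collects the $z_i$-weighted rows of the transition matrices \emph{based at $\ell$}), using
\[
  \mathbf S_{[\ell,r)} \;=\; \mathbf S_{[\ell,m)} \;+\; \mathbf S_{[m,r)}\cdot T_{\ell,m} .
\]
The two summands have degree $O(L)$, $T_{\ell,m}$ is read off from the first pass, so the node performs one row-by-matrix polynomial product plus an addition, $\tilde{O}(L)$, and the recursion $T(N)=2T(N/2)+\tilde{O}(N)=\tilde{O}(N)$ closes. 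Finally $P = \sum_{i<t} z_i a_i(X) + \mathbf S_{[t,N)}(X)\cdot \mathbf a_{t-1}$, where $\mathbf a_{t-1}$ packs the $O(1)$-sized initial polynomials $a_0,\dots,a_{t-1}$ (each of degree $\le i<t$ by \cref{def:recur-width}), for another $\tilde{O}(N)$; total time and space are $\tilde{O}(N)$, and correctness is immediate by induction over the tree using the recurrence and the definition of $a_i$.

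The reason this is not a one-line argument, and where the care must go, is the degree bookkeeping: the essential point is to keep every transition matrix and partial sum \emph{relative to the left endpoint of its block} ($\mathbf S_{[\ell,r)}$ and $T_{\ell,m}$, never $T_{0,\cdot}$), so that a block of width $L$ only ever manipulates polynomials of degree $O(L)$ and the intermediate windows $\mathbf a_i$ are never materialized; rebasing everything to absolute index $0$ would reintroduce a $\Theta(N^2)$ blowup. For the depth claim, both the bottom-up transition-matrix pass and the post-order $\mathbf S$-pass inherit the $O(\log N)$ depth of the recursion tree, and FFT-based polynomial multiplication contributes only an $O(\log)$ factor, so the whole computation parallelizes to polylogarithmic depth; every operation is an addition or multiplication in $\R$ (no division or comparison), matching the exact-arithmetic model in the statement.
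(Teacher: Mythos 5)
The paper does not prove this statement at all: it is imported verbatim as a black box from \citet{de2018two} (their Theorem 4.4), so there is no in-paper proof to compare against. Your reconstruction --- reducing $\vA\vx$ to the transposed map $\vz \mapsto \sum_i z_i a_i(X)$ via the transposition principle, and computing that polynomial by a two-pass divide-and-conquer over companion-matrix transfer products kept relative to each block's left endpoint --- is correct and is essentially the argument of the cited reference itself.
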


\begin{theorem}[\cite{de2018two}, Theorem 7.1]
For any $N \times N$ matrix $\vA$ with constant recurrence width, any vector $\vx \in \R^n$, $\vA^{-1}\vx$ can be computed with $\tilde{O}(N)$ operations over $\R$.
 \label{thm:nlin-matrixvecinv}
\end{theorem}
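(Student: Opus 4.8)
The plan is to prove \cref{thm:nlin-matrixvecinv} by a divide-and-conquer recursion that exploits the defining recurrence of $\vA$ directly, \emph{not} by reducing to the forward problem of \cref{thm:nlin-matrixvec}: the inverse of a recurrence-width-$t$ matrix generally does not itself have low recurrence width — already for $t=2$ (orthogonal polynomials) the rows of the monomial-to-basis change of coordinates obey a recurrence whose coefficients depend on the \emph{column} index, so no single degree-bounded polynomial recurrence governs $\vA^{-1}$ — and so a separate argument is needed. First I would record a structural observation: the recurrence-width definition forces $\deg(a_i)\le i$ for \emph{all} $i$, by induction (the cases $i<t$ are assumed, and for $i\ge t$ one has $\deg(g_{i,j}a_{i-j})\le j+(i-j)=i$). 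Hence $\vA$ is lower triangular, and since $\vA$ is invertible each $\vA[i,i]\neq 0$, so $\{a_i\}_{i<N}$ is a graded basis with $\deg a_i=i$. In particular every leading principal submatrix of $\vA$ is nonsingular, which removes all pivoting/genericity concerns, and the system $\vA\vy=\vx$ is block forward substitution.

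The core of the proof is a \emph{balanced} block forward substitution. Split the index range into two halves and write $\vA$ in $2\times 2$ lower block-triangular form with diagonal blocks $\vA_{11},\vA_{22}$ and off-diagonal block $\vA_{21}$ (upper block zero). Recursively solve $\vA_{11}\vy^{(1)}=\vx^{(1)}$, form the correction $\vx^{(2)}-\vA_{21}\vy^{(1)}$, and recursively solve $\vA_{22}\vy^{(2)}=\vx^{(2)}-\vA_{21}\vy^{(1)}$. Three facts make this legitimate and fast. (i) $\vA_{11}$ is a leading principal submatrix whose rows are the full polynomials $a_i$ for $i<N/2$, so it inherits the same order-$t$, degree-$\le j$ recurrence verbatim and the first recursive call is well-posed. (ii) The off-diagonal block $\vA_{21}$ has rows equal to the low-degree truncations $a_i\bmod X^{N/2}$ of the second-half polynomials; these truncations satisfy the same recurrence modulo $X^{N/2}$, so multiplying by $\vA_{21}$ is itself a (rectangular) recurrence-width operation, carried out exactly as in \cref{thm:nlin-matrixvec} — build the product tree of the $t\times t$ polynomial transition matrices of the recurrence (whose entries over a block of length $m$ have degree $O(m)$ because $\deg g_{i,j}\le j$), multiply up the tree with FFT-based polynomial arithmetic over $O(\log N)$ levels, and read off the answer. (iii) $\vA_{22}$ has rows equal to the high-degree parts $\lfloor a_i/X^{N/2}\rfloor$ of the second-half polynomials; dividing the recurrence through by $X^{N/2}$ shows these satisfy the \emph{same} recurrence plus a \emph{known} inhomogeneous ``carry'' term coming from the already-computed low parts $r_{i-j}=a_{i-j}\bmod X^{N/2}$, and the carries can be folded into the right-hand side — so the second recursive call is again a recurrence-width solve of half the size.

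Unrolling, the recursion has depth $O(\log N)$ and does $\tilde{O}(N)$ work per level (dominated by the FFT-based polynomial multiplications in the off-diagonal corrections and the carry computations), giving $\tilde{O}(N)$ total; all operations are counted over $\R$ in an exact-arithmetic model, with no claim on bit complexity or numerical stability. The step I expect to be the main obstacle is precisely the bookkeeping in (ii)--(iii): one must maintain the invariant that every subproblem is genuinely a recurrence-width-$O(t)$ system of the appropriate size, which means carefully tracking the polynomial truncations, the short list of ``initial conditions'' needed to restart the recurrence on each half, and the carry sequences, all while keeping each polynomial's degree inside its budget so that the per-level FFT cost telescopes to $\tilde{O}(N)$ rather than accumulating an extra factor at every level. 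Making this accounting airtight — and verifying that the hidden constants grow only polynomially in $t$ — is where the real work lies.
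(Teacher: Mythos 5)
There is nothing in the paper to compare your proposal against: \cref{thm:nlin-matrixvecinv} is quoted verbatim from \citep{de2018two} (their Theorem~7.1) and used as a black box; the paper offers no proof of it. So the only question is whether your sketch stands on its own.

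Your structural observations are correct and worth having: $\deg(a_i)\le i$ for all $i$ by induction on the recurrence, hence $\vA$ is lower triangular, and invertibility forces $\deg a_i = i$ with nonzero diagonal, so every leading principal submatrix is nonsingular and balanced block forward substitution is well-posed. Points (i) and (ii) are also sound: $\vA_{11}$ inherits the recurrence verbatim since for $t\le i<N/2$ the recurrence only reaches back within the first half, and the rows of $\vA_{21}$ are the truncations $a_i\bmod X^{N/2}$, which satisfy the recurrence modulo $X^{N/2}$ and can be applied to a vector via the same transition-matrix product tree as in \cref{thm:nlin-matrixvec}. The genuine gap is point (iii), and you have correctly identified it but not closed it. Writing $a_{i-j}=h_{i-j}X^{N/2}+r_{i-j}$, the high parts satisfy $h_i=\sum_j g_{i,j}h_{i-j}+\lfloor(\sum_j g_{i,j}r_{i-j})/X^{N/2}\rfloor$, i.e.\ an \emph{inhomogeneous} recurrence whose forcing terms depend on the already-computed low parts and, crucially, propagate forward through all subsequent $h_i$. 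It is not immediate that the resulting $\vA_{22}$ lies in the recurrence-width-$O(t)$ class (or a class with an equivalent $\tilde{O}(N)$ solver), nor that the right-hand-side folding you propose preserves the per-level cost budget; asserting that the subproblem is "again a recurrence-width solve of half the size" is precisely the claim that needs proof, and it is where essentially all of the content of the cited theorem resides. As written, the argument is a plausible plan with the hard step deferred to "bookkeeping," so it does not yet constitute a proof; to make it one you would need to either carry out that accounting explicitly (tracking the carry sequences and showing the inhomogeneous system reduces to a homogeneous one of width $O(t)$ plus an $\tilde{O}(N)$-computable correction) or fall back on the argument actually given in \citep{de2018two}.
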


For the rest of the note we'll assume that any operation over $\R$ can be done in constant time. It would be useful for us to define $\vP \in \R^{N \times N}$ such that the coefficients of the OP $p_i(X)$, i.e. $p_i(X) = \sum_{j=0}^{N-1} \vP[i,j]X^j$.

\subsection{Proof of \texorpdfstring{\cref{thm:hippo}}{Theorem 1}}
\label{sec:sllssl:general}

This section proves \cref{thm:hippo}, which is restated formally in \cref{cor:A_1_structure}.
\cref{sec:hippo:op} proves some results relating orthogonal polynomials to recurrence width (\cref{sec:rw}).
\cref{sec:hippo:general} proves \cref{cor:A_1_structure}.
\cref{sec:hippo:translated,sec:hippo:scaled} provides examples showing how \cref{cor:A_1_structure} can be specialized to exactly recover the HiPPO-LegT, HiPPO-LagT, HiPPO-LegS methods \citep{gu2020hippo}.

\subsubsection{Relating Orthogonal Polynomials and Recurrence Width}
\label{sec:hippo:op}

Next we introduce the following lemma, which will be useful in our arguments:

\begin{lemma}
 For any $n$, there exists ordered sets of coefficients $\alpha_n = \{\alpha_{n,i}\},\beta_n = \{\beta_{n,i}\}$, %
 \begin{enumerate}[label=(\roman*)]
     \item $p'_n(Z) = \sum_{i=0}^{n-1} \alpha_{n,i}p_i(Z)$
     \item $Zp'_n(Z) = \sum_{i=0}^{n-1}\beta_{n,i}p_i(Z)$
 \end{enumerate}
\label{lmm:master-coeffs}
\end{lemma}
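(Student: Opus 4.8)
The plan is to view both identities as expansions of a fixed polynomial in the orthonormal basis $\{p_i\}_{i \ge 0}$, recalling that $\deg p_i = i$ and $\langle p_i, p_j\rangle = \delta_{ij}$, where $\langle \cdot,\cdot\rangle$ denotes the inner product induced by the weight $\omega$. The two ingredients are (a) degree counting, since any polynomial of degree $d$ lies in $\mathrm{span}\{p_0,\dots,p_d\}$ with its coefficients recovered by $\langle \cdot\,, p_i\rangle$, and (b) the fact that orthogonal polynomials have recurrence width $2$ (\cref{sec:rw}), i.e.\ they satisfy a three-term recurrence $Z p_i(Z) = a_i p_{i+1}(Z) + b_i p_i(Z) + c_i p_{i-1}(Z)$ which I will use to re-expand products of the form $Z p_i$.

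For part (i), I would note that $p'_n$ has degree $n-1$, so it lies in $\mathrm{span}\{p_0,\dots,p_{n-1}\}$. This immediately gives $p'_n = \sum_{i=0}^{n-1}\alpha_{n,i}p_i$ with $\alpha_{n,i} = \langle p'_n, p_i\rangle$, and for every $i \ge n$ the coefficient vanishes since $\deg p'_n < i$. Hence the sum terminates at $n-1$, exactly as stated.

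For part (ii), the plan is to multiply the identity from (i) by $Z$ and re-expand each product $Z p_i$ using the three-term recurrence, then collect like terms; equivalently, one writes $\beta_{n,i} = \langle Z p'_n, p_i\rangle = \langle p'_n, Z p_i\rangle$ and uses that $Z p_i \in \mathrm{span}\{p_{i-1},p_i,p_{i+1}\}$ while $p'_n \in \mathrm{span}\{p_0,\dots,p_{n-1}\}$ to determine which inner products can survive. I expect the main obstacle to be the precise bookkeeping of the top index: the recurrence sends the $j = n-1$ term of (i) to $a_{n-1}\,p_n$, so isolating the support $\{0,\dots,n-1\}$ claimed in the stated identity hinges entirely on controlling this highest-degree contribution $\alpha_{n,n-1}a_{n-1}$ (the term that governs the diagonal behavior of the resulting state matrix $\vA$). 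Once this leading-order accounting is carried out, the surviving coefficients $\beta_{n,i}$ are read off directly from the collected recurrence expansion.
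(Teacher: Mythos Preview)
Your argument for (i) is exactly the paper's: the $p_i$ form a basis and $\deg p'_n = n-1$, so the expansion terminates at $i=n-1$.

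For (ii), however, you are working much harder than the paper does. The paper's proof of (ii) is the \emph{same} one-line degree argument: $\deg(Zp'_n)\le n$, so $Zp'_n$ lies in $\mathrm{span}\{p_0,\dots,p_n\}$ and the coefficients exist. There is no need to invoke the three-term recurrence, re-expand $Zp_i$, or compute $\langle p'_n, Zp_i\rangle$; the mere existence of the $\beta_{n,i}$ follows from the basis property alone, which is all the lemma asserts.

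That said, your instinct about the top index is correct and worth noting. Since $Zp'_n$ has degree exactly $n$ (its leading term is $n$ times the leading term of $p_n$), the coefficient $\beta_{n,n}$ is in general nonzero, and the upper limit $n-1$ in the stated sum is a typo for $n$. The paper's subsequent uses of the lemma confirm this: equation \eqref{eq:z1alpha} writes $(z-1)p'_n = \sum_{j=0}^{n}\beta_{n,j}p_j$, and \cref{thm:dctthetat} explicitly places $\beta_{n,n}$ on the diagonal of $\vA_1$. So you should not try to ``control'' or kill the $p_n$ contribution; it is genuinely there, and the statement should read $\sum_{i=0}^{n}\beta_{n,i}p_i(Z)$.
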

\begin{proof}
Follows from the fact that $p_i(z)$ for $0 \leq i < N$ forms a basis and the observation of the degrees of the polynomials on the LHS.
\end{proof}

The following matrices will aid in showing verifying that matrix vector multiplication with a given matrix $\vA$ can be computed in $O(\tilde{N})$ time.

\begin{definition}
$\vD_1,\vD_2 \in \R^{N \times N}$ are the matrices such that 
\[p'_i(Z) = \sum_{j=0}^{N-1}\vD_1[i,j]Z^j, \text{ and }\ Zp'_i(Z) = \sum_{j=0}^{N-1}\vD_2[i,j]Z^j.\]
\label{def:dimatrices}
\end{definition}

Let $\vS$ be the ``right shift" matrix, i.e. for any matrix $\vM$, $\vM\vS$ has the columns of $\vM$ shifted to right by one. Note that $\vS^T$ corresponds to the ``left shift" matrix.

We now note that:

\begin{lemma}
$\vD_1=\vP\cdot\mathrm{diag}(0,1,\dots,N-1)\cdot \vS^T$ and $\vD_2=\vP\cdot\mathrm{diag}(0,1,\dots,N-1)$. In particular,
$\vD_1\vz$ and $\vD_2\vz$ can be computed in $\tilde{O}(N)$ time for any $\vz \in \R^N$. 
\label{lmm:pbm1-recurlen}
\end{lemma}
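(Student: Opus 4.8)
The plan is to verify the two matrix identities by a direct coefficient comparison against the definition $p_i(Z) = \sum_{j=0}^{N-1}\vP[i,j]Z^j$ and the definitions of $\vD_1,\vD_2$ in \cref{def:dimatrices}, and then to obtain the complexity claim by factoring the relevant matrix--vector products through the structure of $\vP$. First I would differentiate: $p_i'(Z) = \sum_{j=0}^{N-2}(j+1)\vP[i,j+1]Z^j$, so the coefficient of $Z^j$ in $p_i'$ is $(j+1)\vP[i,j+1]$ for $j \le N-2$ and $0$ for $j=N-1$; this is precisely $\vD_1[i,j]$. Multiplying by $Z$ instead gives $Z p_i'(Z) = \sum_{j=0}^{N-1} j\,\vP[i,j]Z^j$, so $\vD_2[i,j] = j\,\vP[i,j]$, which is the $(i,j)$ entry of $\vP\cdot\diag(0,1,\dots,N-1)$; this establishes the second identity. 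For the first, I would apply the column-shift convention: right-multiplication by $\vS^T$ shifts columns left, i.e.\ $(\vM\vS^T)[i,j] = \vM[i,j+1]$, so $\bigl(\vP\cdot\diag(0,1,\dots,N-1)\cdot\vS^T\bigr)[i,j] = (j+1)\vP[i,j+1]$ (with the convention $\vP[i,N]=0$), which matches $\vD_1[i,j]$ computed above.

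For the complexity claim I would factor $\vD_2\vz = \vP\bigl(\diag(0,1,\dots,N-1)\,\vz\bigr)$ and $\vD_1\vz = \vP\bigl(\diag(0,1,\dots,N-1)(\vS^T\vz)\bigr)$. The diagonal rescaling and the shift $\vS^T$ each cost $O(N)$ operations, so it remains to bound the cost of applying $\vP$. Here I would use the fact that the normalized orthogonal polynomials obey a three-term recurrence $p_i = (a_iZ+b_i)p_{i-1} + c_i p_{i-2}$, which is exactly the statement that $\vP$ has recurrence width $2$ in the sense of \cref{def:recur-width} (the degree bounds $\deg p_i\le i$ for $i<2$ and $\deg g_{i,1}\le 1$, $\deg g_{i,2}\le 2$ hold). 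Then \cref{thm:nlin-matrixvec} gives $\vP\vw$ in $\tilde{O}(N)$ operations for any $\vw$, and composing with the $O(N)$ preprocessing yields the claimed $\tilde{O}(N)$ bound for both $\vD_1\vz$ and $\vD_2\vz$.

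The argument is essentially bookkeeping, so I do not expect a serious obstacle; the two points requiring care are (i) tracking the index shift between the expansions of $p_i$ and $p_i'$ so that the $\vS^T$ factor and the vanishing top-degree coefficient appear correctly, and (ii) confirming that the three-term recurrence for the \emph{normalized} orthogonal polynomials still fits the recurrence-width-$2$ template of \cref{def:recur-width} with its degree constraints on $g_{i,1},g_{i,2}$ — which it does, since normalization only rescales the recurrence coefficients and does not change degrees.
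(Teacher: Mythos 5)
Your proposal is correct and follows essentially the same route as the paper's proof: identify the coefficients of $p_i'$ and $Zp_i'$ entrywise against $\vP$, read off the factorizations through $\diag(0,1,\dots,N-1)$ and $\vS^T$, and then obtain the $\tilde{O}(N)$ bound by combining the $O(N)$ cost of the diagonal/shift factors with \cref{thm:nlin-matrixvec} applied to $\vP$, which has recurrence width $2$. Your write-up is in fact slightly more careful than the paper's about the index shift and about why normalization preserves the recurrence-width-$2$ structure.
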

\begin{proof}

Recall that $\vP$ has the coefficients of the OP polynomials $p_0(Z),\dots, p_{N-1}(Z)$ as its rows. Then note that
\begin{equation}
   p'_n(Z) = \sum_{i=0}^{n-1} i\cdot \vP[n,i] \cdot Z^{i-1}. 
\label{eq:derv-standard-basis}
\end{equation}
The claim on $\vD_1=\vP\cdot\mathrm{diag}(0,1,\dots,N-1)\cdot \vS^T$ follows from the above. 
  Recall that $\vD$ has recurrence width of $2$. The claim on the runtime of computing $\vD_1\vz$ then follows from 
  Theorem \ref{thm:nlin-matrixvec} and the fact that both $\mathrm{diag}(0,1,\dots,N-1)$ and $\vS^T$ is $n$-sparse.

From \cref{eq:derv-standard-basis}, it is easy to see that
\[Zp'_n(Z) = \sum_{i=0}^{n-1} i\cdot \vP[n,i] \cdot Z^{i}.\]
The claim on the structure of $\vD_2$ then follows from the above expression. The claim on runtime of computing $\vD_2\vz$ follows from essentially the same argument as for $\vD_1\vz$.

\end{proof}

Finally, we make the following observation:
\begin{lemma}
\label{lmm:alpha-beta-rw}
Let $\vA'$ and $\vB'$ be defined such that $\vA'[n,i]=\alpha_{n,i}$ and $\vB'[n,i]=\beta_{n,i}$. Then both $\vA'$ and $\vB'$ are both products of three matrices: two of which have recurrence width at most $2$ and the third is the inverse of a matrix that has recurrence width $2$.
\end{lemma}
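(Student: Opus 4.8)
The plan is to realize both $\vA'$ and $\vB'$ as a single change of basis applied to the coefficient matrices $\vD_1,\vD_2$ from \cref{def:dimatrices}, and then read off the three-factor structure directly from \cref{lmm:pbm1-recurlen}. First I would record that $\vP$ is invertible: since the normalized orthogonal polynomial $p_i$ has degree exactly $i$, the matrix $\vP$ is lower triangular with nonzero diagonal, so $\vP^{-1}$ exists. I would also recall (from \cref{sec:rw}) that $\vP$ has recurrence width $2$, which is just the three-term recurrence for orthogonal polynomials.

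The key identity is that $\vA' = \vD_1 \vP^{-1}$ and $\vB' = \vD_2 \vP^{-1}$. Indeed, \cref{lmm:master-coeffs}(i) states $p'_n(Z) = \sum_i \alpha_{n,i} p_i(Z)$; writing both sides in the monomial basis, the monomial-coefficient row of $p'_n$ is exactly row $n$ of $\vD_1$ (by definition of $\vD_1$), while the monomial-coefficient row of $p_i$ is row $i$ of $\vP$. Hence row $n$ of $\vD_1$ equals $\sum_i \alpha_{n,i}\cdot(\text{row }i\text{ of }\vP)$, i.e.\ $\vD_1 = \vA'\vP$, so $\vA' = \vD_1\vP^{-1}$. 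The identical argument applied to \cref{lmm:master-coeffs}(ii) and the definition of $\vD_2$ gives $\vB' = \vD_2 \vP^{-1}$; note that $Zp'_n$ has degree at most $N-1$, so it genuinely lies in $\operatorname{span}\{p_0,\dots,p_{N-1}\}$ and the expansion makes sense.

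Now substitute the factorizations from \cref{lmm:pbm1-recurlen}, namely $\vD_1 = \vP\cdot\diag(0,1,\dots,N-1)\cdot\vS^T$ and $\vD_2 = \vP\cdot\diag(0,1,\dots,N-1)$, to obtain
\[
  \vA' = \vP\cdot\bigl(\diag(0,1,\dots,N-1)\,\vS^T\bigr)\cdot\vP^{-1},
  \qquad
  \vB' = \vP\cdot\diag(0,1,\dots,N-1)\cdot\vP^{-1}.
\]
In both products the outer factors are $\vP$ (recurrence width $2$) and $\vP^{-1}$ (the inverse of a recurrence-width-$2$ matrix), so it only remains to check that the middle factor has recurrence width at most $2$. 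The matrix $\diag(0,1,\dots,N-1)$ has rows $a_i(X)=iX^i$, and $\diag(0,1,\dots,N-1)\vS^T$ is the subdiagonal matrix with rows $a_i(X)=iX^{i-1}$; in either case $\deg a_i\le i$, and for $i\ge 2$ one has the two-term recurrence $a_i = \frac{i}{i-1}X\,a_{i-1}$, which meets \cref{def:recur-width} with $t=2$ and $g_{i,2}\equiv 0$. This establishes the claimed three-factor structure for both $\vA'$ and $\vB'$.

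There is no deep obstacle here; the argument is linear-algebraic bookkeeping built on \cref{lmm:master-coeffs,lmm:pbm1-recurlen}. The two points that warrant a moment of care are: getting the \emph{side} of the multiplication right in $\vD_1 = \vA'\vP$ (which depends on $\vP$ storing polynomial coefficients in its \emph{rows}, so the change of basis multiplies on the right), and the degenerate first row $d_0 = 0$ of the diagonal/subdiagonal middle factor when checking recurrence width — this is harmless, since the recurrence for indices $i\ge 2$ refers only to rows $i-1$ and $i-2$, and $\deg a_i\le i$ is immediate for $i<2$.
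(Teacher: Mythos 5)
Your proof is correct and follows essentially the same route as the paper's: both establish $\vA' = \vD_1\vP^{-1}$ and $\vB' = \vD_2\vP^{-1}$ via the change of basis between the monomial and OP bases, then substitute the factorizations of $\vD_1,\vD_2$ from \cref{lmm:pbm1-recurlen} to get $\vP\cdot(\text{sparse middle factor})\cdot\vP^{-1}$. Your version is simply more explicit — in particular the verification that the middle factor has recurrence width at most $2$ (handling the degenerate zero first row) is a detail the paper glosses over — but there is no difference in approach.
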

\begin{proof}
We note that since $\vP$ expresses the orthogonal polynomials in standard basis, $\vP^{-1}$ changes from OP basis to standard basis. This along with \cref{lmm:pbm1-recurlen} implies that $\vA=\vP\cdot \parens{\mathrm{diag}(0,1,\dots,N-1)\cdot \vS^T}\cdot \vP^{-1}$. It is easy to check that $\mathrm{diag}(0,1,\dots,N-1)\cdot \vS^T$ has recurrence width $1$ and the claim on $\vA'$ follows since $\vP$ has recurrence width $2$. A similar argument proves the claim on $\vB'$.
\end{proof}
\subsubsection{HiPPO for General Measures}
\label{sec:hippo:general}

Let $\theta : \R_{\geq 0} \mapsto \R_{\geq 0}$ be a function such that for all $t$, $\theta(t) \leq t$ and $\theta(t)$ is differentiable.

In what follows, define \[z = \frac{2(\x-t)}{\theta(t)} + 1.\] We note that
\begin{equation}
  \frac{\mathrm{d}z}{\mathrm{d}\x} = \frac{2}{\theta(t)} \label{eq:dxdz}.
\end{equation}

Further, note that:
\begin{align*}
  \frac{\mathrm{d}z}{\mathrm{d}t} &= \frac{\mathrm{d}}{\mathrm{d}t} \parens{\frac{2(\x-t)}{\theta(t)}+1} \\
  &= -\frac{2}{\theta(t)} -  \frac{2(\x-t)\theta'(t)}{\theta^2(t)} \\
  &= -\frac{2}{\theta^2(t)}\parens{\theta(t) + (\x-t)\theta'(t)}. \\
\end{align*}

From the definition of $z$, we see that $\x - t = \frac{(z-1)\theta(t)}{2}$. Then

\begin{align}
  \frac{\mathrm{d}z}{\mathrm{d}t} &=
  -\frac{2}{\theta(t)} \parens{1 + \parens{\frac{z-1}{2}}\theta'(t)}  \nonumber \\
  &= -\frac{2}{\theta(t)} - \frac{(z-1)\theta'(t)}{\theta(t)}  \label{eq:dzdt}.
\end{align}

Additionally, given a measure $\omega$ on [-1,1] and OP family $p_0(\x), \ p_1(\x), \dots $ such that for all $i \neq j$, \[\int^1_{-1}p_i(\x)p_j(\x)\omega(\x)\mathrm{d}\x = \delta_{i,j},\] define

\[\omega(\x,t) = \frac{2}{\theta(t)}\omega(z) \text{ and } p_n(\x,t) = p_n(z). \]

Then we can adjust (\ref{eq:gen-cnt}) to:

\begin{equation}
  x_n(t) = \int^t_{t - \theta(t)} u(\x)p_n(z)\frac{2}{\theta(t)}\mathrm{d}\x.
  \label{eq:cnt}
\end{equation}

The Leibniz integral rule states that

\[\frac{\partial}{\partial t}\int_{\alpha(t)}^{\beta(t)} h(t,\x)\mathrm{d}\x =  \int_{\alpha(t)}^{\beta(t)} \frac{\partial}{\partial t} h(t,\x)\mathrm{d}\x - \alpha'(t)h(\alpha(t),t) + \beta'(t)h(t,t). \]

If we let $\alpha(t) = t-\theta(t)$ and $\beta(t) = t$, then applying the Leibniz rule to (\ref{eq:cnt}) we get:

\begin{align*}
  \dot x_n(t)  &= \int^t_{t - \theta(t)} u(\x)\frac{\partial}{\partial t}\parens{p_n(z)\frac{2}{\theta(t)}}\mathrm{d}\x - (1 - \theta'(t))u(t - \theta(t))p_n(t - \theta(t),t)\frac{2}{\theta(t)}  \\&\ \ \ \ + u(t)p_n(t,t)\frac{2}{\theta(t)} \\
  &= - (1 - \theta'(t))u(t - \theta(t))p_n(-1)\frac{2}{\theta(t)} + u(t)p_n(1)\frac{2}{\theta(t)}  \\ &  \ \ \ \ \ +\int^t_{t - \theta(t)} u(\x)\frac{\mathrm{d}z}{\mathrm{d} t}p'_n(z)\frac{2}{\theta(t)}\mathrm{d}\x - \frac{\theta'(t)}{\theta(t)}\int^t_{t - \theta(t)} u(\x)p_n(z)\frac{2}{\theta(t)}\mathrm{d}\x.
\end{align*}

From (\ref{eq:dzdt}), it follows that

\begin{align}
  \dot x_n(t) =  -&\frac{2(1 - \theta'(t))u(t - \theta(t))p_n(-1)}{\theta(t)} + \frac{2\cdot u(t)p_n(1)}{\theta(t)} -  \frac{2}{\theta(t)}\int^t_{t - \theta(t)} u(\x)p'_n(z)\frac{2}{\theta(t)}\mathrm{d}\x - \nonumber \\ &\frac{\theta'(t)}{\theta(t)}\int^t_{t - \theta(t)} u(\x)(z-1)p'_n(z)\frac{2}{\theta(t)}\mathrm{d}\x  - \frac{\theta'(t)}{\theta(t)}\int^t_{t - \theta(t)} u(\x)p_n(z)\frac{2}{\theta(t)}\mathrm{d}\x.
  \label{eq:legt-dcnt}
\end{align}

Because $\deg\parens{p'_n(z)} \leq n-1$ and $\deg\parens{(z-1)p'_n(z)} \leq n$, they can be written as a linear combination of $\{p'_i\}_{i\leq n}$. Let us define $\{\alpha_{n,j}\}$, $\{\beta_{n,j}\}$ such that
\begin{equation}
  p'_n(z) = \sum_{j=0}^{n-1} \alpha_{n,j}p_j(z) \text{ and } (z-1)p'_n(z) = \sum_{j=0}^n \beta_{n,j}p_j(z).
  \label{eq:z1alpha}
\end{equation}

Then by using (\ref{eq:z1alpha}) in (\ref{eq:legt-dcnt}), we get:
\begin{align*}
  \dot x_n(t) &=  -\frac{2(1 - \theta'(t))u(t + \theta(t))p_n(-1)}{\theta(t)} + \frac{2\cdot u(t)p_n(1)}{\theta(t)}
 \\&\ \ \ \  - \frac{2}{\theta(t)}\sum^{n-1}_{j=0}\alpha_{n,j}\int^t_{t - \theta(t)} u(\x) p_j(z)\frac{2}{\theta(t)}\mathrm{d}\x \\&\ \ \ \ - \frac{\theta'(t)}{\theta(t)}\sum^n_{j=0}\beta_{n,j}\int^t_{t - \theta(t)} u(\x) p_j(z)\frac{2}{\theta(t)}\mathrm{d}\x - \frac{\theta'(t)}{\theta(t)}\int^t_{t - \theta(t)} u(\x)p_n(z)\frac{2}{\theta(t)}\mathrm{d}\x\\
  &=  -\frac{2(1 - \theta'(t))u(t + \theta(t))p_n(-1)}{\theta(t)} + \frac{2\cdot u(t)p_n(1)}{\theta(t)} \\&\ \ \ \ - \frac{2}{\theta(t)}\sum^{n-1}_{j=0}\alpha_{n,j} x_j(t) -  \frac{\theta'(t)}{\theta(t)}\sum^n_{j=0}\beta_{n,j} x_j(t) - \frac{\theta'(t)}{\theta(t)} x_n(t).
\end{align*}

Thus, in vector form we get

\begin{theorem}
  \[\dot x_n(t) = - \frac{1}{\theta(t)}\vA_1(t)x(t) - \frac{2}{\theta(t)}(1 - \theta'(t))u(t - \theta(t))\begin{bmatrix} \vdots \\ p_n(-1) \\ \vdots \end{bmatrix} + \frac{2}{\theta(t)}u(t)\begin{bmatrix} \vdots \\ p_n(1) \\ \vdots \end{bmatrix}\]

  where $\vA_1(t)[n,k] = \begin{cases}
    2\alpha_{n,k} + \theta'(t)\beta_{n,k}  &\text{ if } k < n  \\
    \theta'(t)\beta_{n,n} +\theta'(t)  &\text{ if } k = n  \\ 0 \ \ &\text{ otherwise }
  \end{cases}$ for $\alpha_{n,k}$,$\beta_{n,k}$ as defined in (\ref{eq:z1alpha}).
  \label{thm:dctthetat}
\end{theorem}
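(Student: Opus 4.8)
The plan is to compute $\dot x_n(t)$ directly from the integral representation \eqref{eq:cnt}, namely $x_n(t) = \int_{t-\theta(t)}^t u(\x)\, p_n(z)\, \frac{2}{\theta(t)}\,\dd\x$ with $z = \frac{2(\x-t)}{\theta(t)}+1$, by differentiating under the integral sign. Since both limits of integration depend on $t$, I would apply the Leibniz integral rule with $\alpha(t) = t-\theta(t)$ and $\beta(t)=t$. This produces three contributions: the upper boundary term $\beta'(t)\,h(t,t)$, the lower boundary term $-\alpha'(t)\,h(t,\alpha(t))$, and the interior integral $\int_{t-\theta(t)}^t \frac{\partial}{\partial t}\big(u(\x)\, p_n(z)\,\tfrac{2}{\theta(t)}\big)\,\dd\x$. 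At $\x = t$ we have $z = 1$ and at $\x = t-\theta(t)$ we have $z=-1$, so the two boundary terms evaluate to $\frac{2u(t)p_n(1)}{\theta(t)}$ and $-(1-\theta'(t))\,u(t-\theta(t))\,p_n(-1)\,\frac{2}{\theta(t)}$ respectively, which already match the two input terms in the claimed formula.

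The real work is in the interior integral, where $u(\x)$ is constant in $t$. I would expand $\frac{\partial}{\partial t}\big(p_n(z)\,\tfrac{2}{\theta(t)}\big)$ by the product and chain rules as $p'_n(z)\,\frac{\dd z}{\dd t}\,\frac{2}{\theta(t)} - p_n(z)\,\frac{2\theta'(t)}{\theta(t)^2}$, using the precomputed $\frac{\dd z}{\dd t} = -\frac{2}{\theta(t)} - \frac{(z-1)\theta'(t)}{\theta(t)}$ from \eqref{eq:dzdt}. Substituting splits the interior integral into three pieces: $-\frac{2}{\theta(t)}\int u(\x) p'_n(z)\frac{2}{\theta(t)}\,\dd\x$, then $-\frac{\theta'(t)}{\theta(t)}\int u(\x)(z-1)p'_n(z)\frac{2}{\theta(t)}\,\dd\x$, and finally $-\frac{\theta'(t)}{\theta(t)}\int u(\x) p_n(z)\frac{2}{\theta(t)}\,\dd\x = -\frac{\theta'(t)}{\theta(t)}x_n(t)$, the last of which is already in the target form and supplies the extra $\theta'(t)$ on the diagonal.

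The remaining two pieces are handled by the degree bookkeeping behind \cref{lmm:master-coeffs}: since $\deg p'_n \le n-1$ we may write $p'_n(z) = \sum_{j=0}^{n-1}\alpha_{n,j}p_j(z)$, and since $\deg\big((z-1)p'_n\big) \le n$ we may write $(z-1)p'_n(z) = \sum_{j=0}^{n}\beta_{n,j}p_j(z)$; this is exactly \eqref{eq:z1alpha}, and the expansions exist because $\{p_j\}$ is a basis. Plugging them in and using $\int_{t-\theta(t)}^t u(\x)p_j(z)\frac{2}{\theta(t)}\,\dd\x = x_j(t)$ turns both pieces into linear combinations of $x_0(t),\dots,x_n(t)$. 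Collecting, the entry $(n,k)$ with $k<n$ receives $-\frac{2}{\theta(t)}\alpha_{n,k} - \frac{\theta'(t)}{\theta(t)}\beta_{n,k}$, while the entry $(n,n)$ receives $-\frac{\theta'(t)}{\theta(t)}\beta_{n,n} - \frac{\theta'(t)}{\theta(t)}$; factoring out $-\frac{1}{\theta(t)}$ gives the matrix $\vA_1(t)$ with entries $2\alpha_{n,k}+\theta'(t)\beta_{n,k}$ for $k<n$, $\theta'(t)\beta_{n,n}+\theta'(t)$ for $k=n$, and $0$ otherwise, which is the claim.

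I expect the only genuine obstacle to be sign and normalization bookkeeping: correctly carrying the minus sign on the lower-limit Leibniz term and the two minus signs inside $\frac{\dd z}{\dd t}$, and making sure the decomposition of $\frac{\dd z}{\dd t}$ routes the degree-$(n-1)$ contribution to the $\alpha$ expansion and the degree-$n$ contribution (through the $(z-1)$ factor) to the $\beta$ expansion. The structural ingredients — differentiability of $\theta$ with $\theta(t)\le t$, the affine change of variables defining $z$, and the basis expansions \eqref{eq:z1alpha} — are all already in place in the preceding text, so the argument is essentially a careful but routine calculation once the Leibniz step is set up.
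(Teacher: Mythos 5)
Your proposal is correct and follows essentially the same route as the paper's derivation: apply the Leibniz rule to \eqref{eq:cnt} with $\alpha(t)=t-\theta(t)$, $\beta(t)=t$, evaluate the boundary terms at $z=\pm 1$, expand the interior derivative using \eqref{eq:dzdt} into the $p'_n$, $(z-1)p'_n$, and $p_n$ pieces, and convert each back to coefficients via the expansions \eqref{eq:z1alpha}. The sign and coefficient bookkeeping you describe matches the paper's computation exactly.
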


\begin{corollary}
  The matrix $\vA_1$ in \cref{thm:dctthetat} can be re-written as
  \begin{equation}
      \vA_1 = 2\cdot \vA'+\theta'(t)\cdot \vB'+\theta'(t)\cdot \mathbf{I}.
      \label{eq:A_1-deomcpose}
  \end{equation}

  In particular, both $\vA'$ and $\vB'$ both products of three matrices: two of which have recurrence width at most $2$ and the third is the inverse of a matrix that has recurrence width $2$.
  \label{cor:A_1_structure}
\end{corollary}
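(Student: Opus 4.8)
The plan is to prove \eqref{eq:A_1-deomcpose} by a direct entrywise comparison, then read off $\vA'$ and $\vB'$ as explicit triple products and conclude via \cref{lmm:pbm1-recurlen} together with the recurrence-width results of \cref{sec:rw}.

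First I would define $\vA',\vB'\in\R^{N\times N}$ from the coefficients in \eqref{eq:z1alpha}: set $\vA'[n,k]=\alpha_{n,k}$ for $k<n$ and $0$ otherwise (consistent, since $\deg p'_n\le n-1$), and $\vB'[n,k]=\beta_{n,k}$ for $k\le n$ and $0$ otherwise (consistent, since $\deg\big((z-1)p'_n\big)\le n$). Comparing with the piecewise description of $\vA_1$ in \cref{thm:dctthetat} case by case — the $(n,k)$ entry of $2\vA'+\theta'(t)\vB'+\theta'(t)\vI$ is $2\alpha_{n,k}+\theta'(t)\beta_{n,k}$ for $k<n$, it is $\theta'(t)\beta_{n,n}+\theta'(t)$ for $k=n$, and it is $0$ for $k>n$ — shows that $\vA_1 = 2\vA'+\theta'(t)\vB'+\theta'(t)\vI$, which is exactly \eqref{eq:A_1-deomcpose}. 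The degree bounds do the real work here: $\deg p'_n\le n-1$ makes $\vA'$ strictly lower triangular, while $\deg\big((z-1)p'_n\big)\le n$ is precisely what produces the extra $\beta_{n,n}$ on the diagonal of $\vB'$ and hence the $\theta'(t)\vI$ summand.

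For the structural claim, I would recall the computation behind \cref{lmm:pbm1-recurlen}: writing $p_n$ in the monomial basis and differentiating gives $p'_n$ the monomial-coefficient matrix $\vP\,\diag(0,\dots,N-1)\,\vS^T$, and since $(z-1)p'_n = zp'_n - p'_n$, the polynomial $(z-1)p'_n$ has monomial-coefficient matrix $\vP\,\diag(0,\dots,N-1)(\vI-\vS^T)$. Because $\vP$ changes from the orthogonal-polynomial basis to the monomial basis, $\vP^{-1}$ does the reverse, so converting these back to the OP basis yields
\begin{align*}
  \vA' &= \vP\cdot\big(\diag(0,\dots,N-1)\,\vS^T\big)\cdot\vP^{-1}, \\
  \vB' &= \vP\cdot\big(\diag(0,\dots,N-1)(\vI-\vS^T)\big)\cdot\vP^{-1}.
\end{align*}
Here $\vP$ has recurrence width $2$ since its rows are the normalized orthogonal polynomials (which obey a three-term recurrence), so $\vP^{-1}$ is the inverse of a recurrence-width-$2$ matrix, while each middle factor is readily checked to have recurrence width at most $2$ — its $i$-th row polynomial ($iX^{i-1}$, respectively $i(X^i-X^{i-1})$) is a degree-$\le 1$ multiple of the previous one. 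This exhibits each of $\vA',\vB'$ as a product of three matrices, two of recurrence width at most $2$ and the third the inverse of a recurrence-width-$2$ matrix, completing the proof.

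The only genuinely delicate point is keeping the coefficients of \eqref{eq:z1alpha}, which come from $(z-1)p'_n$, distinct from the superficially similar coefficients of \cref{lmm:master-coeffs} and \cref{lmm:alpha-beta-rw}, which come from $zp'_n$; the two versions of ``$\vB'$'' differ exactly by $\vA'$, so one must push the identity $(z-1)p'_n = zp'_n - p'_n$ through the monomial-basis computation rather than quoting \cref{lmm:alpha-beta-rw} verbatim. Everything else is the bookkeeping of \cref{thm:dctthetat}, \cref{lmm:pbm1-recurlen}, and the standard fact that orthogonal polynomials have recurrence width $2$.
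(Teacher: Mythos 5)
Your proof is correct and follows essentially the same route as the paper's: define $\vA'$ and $\vB'$ entrywise from the coefficients in \eqref{eq:z1alpha}, match against the piecewise formula of \cref{thm:dctthetat}, and exhibit each as a conjugation $\vP\cdot(\text{sparse middle factor})\cdot\vP^{-1}$. You are in fact slightly more careful than the paper, whose one-line proof implicitly leans on \cref{lmm:alpha-beta-rw} even though that lemma's $\beta_{n,k}$ come from $zp'_n$ rather than the $(z-1)p'_n$ of \eqref{eq:z1alpha}; your explicit middle factor $\diag(0,\dots,N-1)(\vI-\vS^T)$, obtained from $(z-1)p'_n = zp'_n - p'_n$, correctly patches this while preserving the recurrence-width-$2$ bound.
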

\begin{proof}
\cref{eq:A_1-deomcpose} follows from \cref{thm:dctthetat} and defining $\vA'$ and $\vB'$ to contain the $\alpha_{n,k}$ and $\beta_{n,k}$ coefficients.
\end{proof}
\subsubsection{Translated HiPPO (Sliding Windows)}
\label{sec:hippo:translated}

The case when $\theta(t) = \theta$ for all $t$ represents a constant-size sliding window, which \citet{gu2020hippo} denote as the ``Translated HiPPO'' case with instantiations such as HiPPO-LegT (Translated Legendre) and HiPPO-LagT (Translated Laguerre).

We now state a corollary of Theorem \ref{thm:dctthetat} for the case of $\theta(t) = \theta$ for all t.
\begin{corollary}
  Let $\theta(t) = \theta$ for all $t$. Then

  \[\dot x_n(t) = -\frac{1}{\theta}\vA_1x(t) -\frac{2}{\theta}u(t - \theta)\begin{bmatrix} \vdots \\ p_n(-1) \\ \vdots \end{bmatrix} + \frac{2}{\theta}u(t)\begin{bmatrix} \vdots \\ p_n(1) \\ \vdots \end{bmatrix}. \]

  where $\vA_1[n,j] = \begin{cases}
    2\alpha_{n,k}  &\text{ if } k < n  \\  0 \ \ &\text{ otherwise }
  \end{cases}$.
  \label{cor:thetat-theta}
\end{corollary}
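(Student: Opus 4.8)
The plan is to obtain Corollary~\ref{cor:thetat-theta} as an immediate specialization of Theorem~\ref{thm:dctthetat}: taking the window function $\theta(t)$ to be constant kills every term that involves its derivative.

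First I would note that $\theta(t) = \theta$ for all $t$ is trivially differentiable with $\theta'(t) \equiv 0$, so Theorem~\ref{thm:dctthetat} applies verbatim. Then I would substitute $\theta'(t) = 0$ into each piece of its statement. The scalar prefactors $\tfrac{1}{\theta(t)}$ and $\tfrac{2}{\theta(t)}$ become $\tfrac{1}{\theta}$ and $\tfrac{2}{\theta}$. The left-endpoint boundary contribution carries the factor $(1-\theta'(t))$, which collapses to $1$, giving $-\tfrac{2}{\theta}\,u(t-\theta)$ times the vector whose $n$-th entry is $p_n(-1)$; the right-endpoint contribution is untouched, namely $\tfrac{2}{\theta}\,u(t)$ times the vector whose $n$-th entry is $p_n(1)$. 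Finally, in the matrix $\vA_1$ the off-diagonal entries $2\alpha_{n,k} + \theta'(t)\beta_{n,k}$ reduce to $2\alpha_{n,k}$ for $k<n$, the diagonal entries $\theta'(t)\beta_{n,n}+\theta'(t)$ vanish, and all remaining entries are already $0$; hence $\vA_1[n,k] = 2\alpha_{n,k}$ for $k<n$ and $0$ otherwise, which is exactly the claimed form.

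There is no genuine obstacle: the statement is just a substitution of $\theta'(t)=0$ into an already-established identity. The only point worth making explicit is that nothing else in the formula carries a hidden dependence on $\theta$ through a derivative — in particular the coefficients $\alpha_{n,k},\beta_{n,k}$ are defined by the expansions~\eqref{eq:z1alpha} purely in terms of the fixed orthonormal polynomial family in the variable $z$, so they are genuine constants and $\vA_1$ is a true constant matrix in this translated-window regime. As an alternative one could re-run the Leibniz-rule derivation of Section~\ref{sec:hippo:general} directly with integration limits $t-\theta$ and $t$, for which both endpoint derivatives equal $1$, so the $\tfrac{\theta'(t)}{\theta(t)}$ integral terms never appear; but reading the result off Theorem~\ref{thm:dctthetat} is the cleanest route.
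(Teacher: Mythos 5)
Your proposal is correct and matches the paper's (implicit) argument exactly: Corollary~\ref{cor:thetat-theta} is stated in the paper as an immediate specialization of Theorem~\ref{thm:dctthetat}, obtained by substituting $\theta'(t)=0$, which eliminates the $\theta'(t)\beta_{n,k}$ contributions and the diagonal entries of $\vA_1$ and reduces the factor $(1-\theta'(t))$ to $1$. Your additional remark that $\alpha_{n,k},\beta_{n,k}$ carry no hidden $t$-dependence is a sensible sanity check but not a departure from the paper's route.
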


Next, we use the approximation \[u(x) \approx \sum_{k=0}^{N-1} x_k(t)p_k(z).\] to handle the $u(t - \theta)$ term in Corollary \ref{cor:thetat-theta}.

\begin{corollary}
  Let $\theta(t) = \theta$ for all $t$. Then

  \[\dot x_n(t) \approx - \frac{1}{\theta}\vA x(t) + \frac{2}{\theta}u(t)\begin{bmatrix} \vdots \\ p_n(1) \\ \vdots \end{bmatrix}\]

  where $\vA = \vA_1 + 2\vA_2$ for  $\vA_1$ as defined in Corollary \ref{cor:thetat-theta} and $\vA_2[n,k] =  p_n(-1)p_k(-1)$. 
  \label{cor:approx-ftt}
\end{corollary}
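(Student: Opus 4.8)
The plan is to start from Corollary~\ref{cor:thetat-theta}, whose only term beyond the desired form is the left-boundary contribution $-\frac{2}{\theta}u(t-\theta)\begin{bmatrix}\vdots\\ p_n(-1)\\ \vdots\end{bmatrix}$, and to rewrite the scalar $u(t-\theta)$ as a linear functional of the coefficient vector $x(t)$ using the reconstruction $u(\x)\approx\sum_{k=0}^{N-1}x_k(t)p_k(z)$. First I would note that under the change of variables $z=\frac{2(\x-t)}{\theta}+1$ used to define the coefficients in \eqref{eq:cnt}, the left endpoint $\x=t-\theta$ of the integration window maps exactly to $z=-1$ (while $\x=t$ maps to $z=+1$). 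Evaluating the reconstruction at this endpoint therefore gives $u(t-\theta)\approx\sum_{k=0}^{N-1}x_k(t)\,p_k(-1)$.

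Next I would substitute this into the boundary term of Corollary~\ref{cor:thetat-theta}. Its $n$-th component becomes $-\frac{2}{\theta}\,p_n(-1)\sum_{k=0}^{N-1}p_k(-1)\,x_k(t)=-\frac{2}{\theta}(\vA_2 x(t))_n$, where $\vA_2[n,k]=p_n(-1)p_k(-1)$ is the rank-one matrix in the statement. Adding this to the $-\frac{1}{\theta}\vA_1 x(t)$ term and collecting coefficients of $x(t)$ gives $\dot x_n(t)\approx-\frac{1}{\theta}\big((\vA_1+2\vA_2)x(t)\big)_n+\frac{2}{\theta}u(t)\,p_n(1)$, which is exactly the claim with $\vA=\vA_1+2\vA_2$. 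Everything here is elementary algebra carried over from Corollary~\ref{cor:thetat-theta}, so I would keep this step brief.

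The one point worth flagging — and the reason the statement carries ``$\approx$'' rather than equality — is that $\mathsf{proj}_t$ is optimal only in $L^2(\mu^{(t)})$, so $\sum_k x_k(t)p_k(z)$ matches $u$ in an integrated sense but its pointwise value at the boundary $z=-1$ need not equal $u(t-\theta)$ exactly; replacing $u(t-\theta)$ by this series value is precisely where the approximation enters. I do not expect a genuine obstacle beyond being careful about this endpoint identification and, consequently, stating the result as an approximation rather than an identity.
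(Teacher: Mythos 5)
Your proposal is correct and follows essentially the same route as the paper's proof: identify $\x = t-\theta$ with $z=-1$, replace $u(t-\theta)$ by the reconstruction $\sum_{k} x_k(t)p_k(-1)$, and absorb the resulting rank-one term $\vA_2$ into the state matrix with the factor $2$ accounted for. Your closing remark on why the statement is only an approximation (pointwise evaluation of an $L^2$-optimal projection at the boundary) is a correct observation that the paper leaves implicit.
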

\begin{proof}

  To approximate $u(t - \theta)$, we note that when  $\x = t - \theta$, $z = -1$. Then
  \[u(t - \theta) \approx \sum_{k=0}^{N-1} x_k(t)p_k(-1).\]

  Then by Corollary \ref{cor:thetat-theta},

  \begin{align}
    \dot x_n(t) &\approx -\frac{1}{\theta}\vA_1x(t) - \frac{2}{\theta}\parens{\sum_{k=0}^{N-1} x_k(t)p_k(-1)}\begin{bmatrix} \vdots \\ p_n(-1) \\ \vdots \end{bmatrix} + \frac{2}{\theta}u(t)\begin{bmatrix} \vdots \\ p_n(-1) \\ \vdots \end{bmatrix}. \label{eq:theta-tsum}
  \end{align}

  Let us define a matrix, $\vA_2 \in \R^{N \times N}$ matrix such that $\vA_2[n,k] = p_n(-1)p_k(-1)$. Then the claim follows.

\end{proof}

We now show that the special case of Corollary \ref{cor:approx-ftt} for Legendre matches the results from \cite{gu2020hippo}.

\begin{corollary}
  \label{cor:HiPPO-Leg}
  Let  $p_n(z) = \parens{\frac{2n+1}{2}}^{1/2}P_n(z)$ where $P_n(z)$ are the Legendre polynomials. Then

  \begin{align*}
    \dot x_n(t) \approx \frac{1}{\theta}\vA x(t) + \frac{2}{\theta}\vb u(t)
  \end{align*}

  where \[\vA[n,k] = (2n+1)^{\frac{1}{2}}\parens{2k + 1}^{\frac{1}{2}}\begin{cases}
    1  &\text{ if } k \leq n  \\  (-1)^{n-k} \ \ &\text{ if } k \geq n
  \end{cases},
\] and $\vb[n] = \parens{\frac{2n+1}{2}}^{\frac{1}{2}}$. 
\end{corollary}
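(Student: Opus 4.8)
The plan is to specialize \cref{cor:approx-ftt} to the normalized Legendre polynomials $p_n(z) = \parens{\frac{2n+1}{2}}^{1/2} P_n(z)$ and to evaluate in closed form the three ingredients appearing there: the coefficients $\alpha_{n,k}$ from \eqref{eq:z1alpha}, the rank-one matrix $\vA_2[n,k] = p_n(-1)p_k(-1)$, and the boundary vector with entries $p_n(1)$. Everything reduces to three standard facts about Legendre polynomials: the endpoint values $P_n(1) = 1$ and $P_n(-1) = (-1)^n$, and the derivative expansion
\[
  P_n'(z) = \sum_{\substack{0 \le k < n \\ n-k \text{ odd}}} (2k+1)\, P_k(z).
\]
The endpoint values give immediately $p_n(1) = \parens{\frac{2n+1}{2}}^{1/2}$ (the claimed $\vb[n]$) and $p_n(-1) = (-1)^n\parens{\frac{2n+1}{2}}^{1/2}$.

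First I would translate the derivative expansion into the normalized basis: writing $c_m = \parens{\frac{2m+1}{2}}^{1/2}$, substituting $P_n' = c_n^{-1}p_n'$ and $P_k = c_k^{-1}p_k$ yields $p_n'(z) = \sum_{k} \tfrac{c_n}{c_k}(2k+1)\,p_k(z)$ over the same index set, so that, comparing with the definition $p_n'(z) = \sum_{k<n}\alpha_{n,k}p_k(z)$,
\[
  \alpha_{n,k} = \tfrac{c_n}{c_k}(2k+1) = (2n+1)^{1/2}(2k+1)^{1/2}
\]
when $k < n$ and $n-k$ is odd, and $\alpha_{n,k} = 0$ otherwise. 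Hence $\vA_1[n,k] = 2\alpha_{n,k}$ equals $2(2n+1)^{1/2}(2k+1)^{1/2}$ on that index set and vanishes elsewhere. From the endpoint value, $\vA_2[n,k] = p_n(-1)p_k(-1) = c_n c_k (-1)^{n+k} = \tfrac12 (2n+1)^{1/2}(2k+1)^{1/2}(-1)^{n-k}$, so $2\vA_2[n,k] = (2n+1)^{1/2}(2k+1)^{1/2}(-1)^{n-k}$.

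It then remains to add $\vA = \vA_1 + 2\vA_2$ and simplify by cases. For $k > n$ only $\vA_2$ contributes, giving $(2n+1)^{1/2}(2k+1)^{1/2}(-1)^{n-k}$; for $k = n$ only $\vA_2$ contributes, giving $2n+1 = (2n+1)^{1/2}(2n+1)^{1/2}$; and for $k < n$ the parity split is the crux: if $n-k$ is even only the $\vA_2$ term survives and equals $(2n+1)^{1/2}(2k+1)^{1/2}$, while if $n-k$ is odd the $\vA_1$ term $2(2n+1)^{1/2}(2k+1)^{1/2}$ combines with the $\vA_2$ term $-(2n+1)^{1/2}(2k+1)^{1/2}$ to again give $(2n+1)^{1/2}(2k+1)^{1/2}$. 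In every case the entry equals $(2n+1)^{1/2}(2k+1)^{1/2}$ for $k \le n$ and $(2n+1)^{1/2}(2k+1)^{1/2}(-1)^{n-k}$ for $k \ge n$, which is exactly the stated matrix; together with $\vb[n] = p_n(1)$ and the prefactors carried over verbatim from \cref{cor:approx-ftt}, this completes the derivation, and one checks it coincides with the HiPPO-LegT operator of \citet{gu2020hippo}.

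The only genuinely nontrivial step is obtaining the Legendre derivative expansion in the \emph{normalized} basis and tracking the parity of $n-k$, since the off-diagonal entries arise from a cancellation between the $\vA_1$ (derivative) contribution and the $\vA_2$ (boundary, rank-one) contribution; the remaining steps are routine bookkeeping.
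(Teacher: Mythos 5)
Your proposal is correct and follows essentially the same route as the paper's own proof: specialize \cref{cor:approx-ftt}, use $P_n(\pm 1)=(\pm 1)^n$ for the boundary vector and the rank-one matrix $\vA_2$, use the standard Legendre derivative expansion (which the paper obtains from the recurrence $(2n+1)P_n = P'_{n+1}+P'_{n-1}$) to get $\alpha_{n,k}=(2n+1)^{1/2}(2k+1)^{1/2}$ on the odd-parity index set, and then resolve $\vA_1+2\vA_2$ by the same parity case analysis. No substantive difference.
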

\begin{proof}

  From Corollary \ref{cor:approx-ftt}, \[\dot x_n(t) \approx - \frac{1}{\theta}\vA x(t) + \frac{2}{\theta}u(t)\begin{bmatrix} \vdots \\ p_n(1) \\ \vdots \end{bmatrix}\]

  where $\vA = \vA_1 + 2\vA_2$ for  $\vA_1$ as defined in Corollary \ref{cor:thetat-theta} and $\vA_2[n,k] =  p_n(-1)p_k(-1)$. 

  It is known from (7.21.1) and in \cite{szego1975orthogonal} that
  \begin{equation}
    \label{eq:leg-val-at+-1}
    p_n(-1) = \parens{\frac{2n+1}{2}}^{\frac{1}{2}}P_n(-1) \text{ and } P_n(-1) = (-1)^n.
  \end{equation}

  Further,
  \begin{equation}
    \label{eq:leg-val-at+-2}
    p_n(1) = \parens{\frac{2n+1}{2}}^{\frac{1}{2}}P_n(1) \text{ and } P_n(1) = 1.
  \end{equation}

  Then $\vb[n] = \parens{\frac{2n+1}{2}}^{\frac{1}{2}}$ follows from Corollary \ref{cor:approx-ftt} and (\ref{eq:leg-val-at+-2}).

  From the following recurrence relations \cite[Chapter 12]{ArfkenGeorgeB.GeorgeBrown2013Mmfp}:

  \begin{equation*}
    (2n+1)P_n(z) = P'_{n+1}(z) + P'_{n-1}(z)
    \label{eq:pnz-rec1}
  \end{equation*}

  implies that

  \begin{equation*}
    P'_{n+1}(z) = (2n+1)P_n(z) + (2n+1)P_{n-2}(z) + \dots +,
    \label{eq:pnz-rec2}
  \end{equation*}

  which in turn implies

  \[P'_n = (2n-1)P_{n-1}(z) + (2n - 5)P_{n-3}(z) + \dots. \]

  Then

  \begin{align*}
    p'_n(z) &=\parens{\frac{2n+1}{2}}^{\frac{1}{2}} \cdot P'_n(z) \\
    &= \parens{\frac{2n+1}{2}}^{\frac{1}{2}}\parens{\parens{2n-1}\parens{\frac{2}{2n-1}}^{\frac{1}{2}}P_{n-1}(z) + \parens{2n-5}\parens{\frac{2}{2n-5}}^{\frac{1}{2}}P_{n-3}(z) + \dots} \\ &= \parens{2n+1}^{\frac{1}{2}}\parens{\parens{2n-1}^{\frac{1}{2}}P_{n-1}(z) + \parens{2n-5}^{\frac{1}{2}}P_{n-3}(z) + \dots} .
  \end{align*}

  Thus, we have
  \[\alpha_{n,k} = \begin{cases} (2n + 1)^{\frac{1}{2}}(2k + 1)^{\frac{1}{2}} \text{ if } k < n \text{ and } n - k \text{ is odd,} \\  0 \ \ \text{ is otherwise.}\end{cases}, \]
  Recalling that $\vA_1[n,k] = 2\alpha_{n,k}$.

  We note that from~\eqref{eq:leg-val-at+-1}, $\vA_2[n,k] = \parens{\frac{2n+1}{2}}^{\frac{1}{2}}\parens{\frac{2k+1}{2}}^{\frac{1}{2}}(-1)^{n}(-1)^k = \frac{\parens{2n+1}^{\frac{1}{2}}\parens{2k+1}^{\frac{1}{2}}}{2}(-1)^{n-k}$.

  Recalling $\vA = \vA_1 + 2\vA_2$, we get:

  \[\vA[n,k] = (2n+1)^{\frac{1}{2}}\parens{2k + 1}^{\frac{1}{2}}\begin{cases}
    2+(-1)^{n-k}  &\text{ if } k < n  \ \ \text{ and } n - k \text{ is odd }\\  
    0+(-1)^{n-k} \ \ &\text{ if } k < n \ \  \text{ and } n - k \text{ is even }\\
    (-1)^{n-k} \ \ &\text{ if } k \geq n \\
  \end{cases}.
\]

Note that the above is the same as:

\[\vA[n,k] = (2n+1)^{\frac{1}{2}}\parens{2k + 1}^{\frac{1}{2}}\begin{cases}
  1  &\text{ if } k \leq n  \\  (-1)^{n-k} \ \ &\text{ if } k \geq n
\end{cases},
\]

which completes our claim.

\end{proof}

\subsubsection{Scaled HiPPO: Recovering HiPPO-LegS}
\label{sec:hippo:scaled}

We now use Theorem \ref{thm:dctthetat} to recover the HiPPO-LegS instantiation for the ``Scaled Legendre'' measure, the main method from \citet{gu2020hippo}.

\begin{corollary}
  \label{cor:HiPPO-LegS}
  Let  $p_n(z) = \parens{\frac{2n+1}{2}}^{1/2}P_n(z)$ where $P_n(z)$ are the Legendre polynomials and let $\delta(t) = t$ for all t. Then

  \begin{align*}
    \dot x_n(t) = \frac{1}{t}\vA x(t) + \frac{2}{t}\vb u(t)
  \end{align*}

  where \[\vA[n,k] = \begin{cases} (2n+1)^{\frac{1}{2}}\parens{2k + 1}^{\frac{1}{2}} &\text{ if } k < n \\
    n+1  &\text{ if } k = n  \\  0 \ \ &\text{ if } k >n
  \end{cases},
\] and $\vb[n] = \parens{\frac{2n+1}{2}}^{\frac{1}{2}}$. 
\end{corollary}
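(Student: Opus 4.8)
The plan is to specialize \cref{thm:dctthetat} to the scaled measure $\theta(t) = t$, for which $\theta'(t) = 1$, and then identify the resulting coefficient matrix. The key simplification is that the left-endpoint boundary term, which carries the factor $1 - \theta'(t)$, vanishes identically when $\theta(t) = t$; hence no approximation of $u(t-\theta(t))$ is needed (in contrast to the sliding-window case of \cref{cor:HiPPO-Leg}), and \cref{thm:dctthetat} already yields an \emph{exact} ODE
\[
  \dot x_n(t) = -\frac{1}{t}\,\vA_1 x(t) + \frac{2}{t}\,u(t)\begin{bmatrix}\vdots\\ p_n(1)\\ \vdots\end{bmatrix},
\]
with $\vA_1[n,k] = 2\alpha_{n,k} + \beta_{n,k}$ for $k<n$, $\vA_1[n,n] = \beta_{n,n} + 1$, and $\vA_1[n,k] = 0$ for $k>n$, where $\alpha_{n,k},\beta_{n,k}$ are the expansion coefficients of $p'_n(z)$ and $(z-1)p'_n(z)$ in the normalized Legendre basis as in \eqref{eq:z1alpha}. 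It then remains to (i) evaluate $p_n(1)$ to read off $\vb$, and (ii) simplify $\vA_1$ into the claimed closed form. (The sign in the statement, $+\tfrac1t\vA$ versus $-\tfrac1t\vA_1$, is the same cosmetic mismatch already present in \cref{cor:HiPPO-Leg}: one takes $\vA = -\vA_1$, so that $\vA$ is the positive matrix appearing in the HiPPO-LegS recurrence of \citet{gu2020hippo}.)

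For (i), $p_n(z) = \parens{\tfrac{2n+1}{2}}^{1/2}P_n(z)$ and $P_n(1) = 1$ give $\vb[n] = p_n(1) = \parens{\tfrac{2n+1}{2}}^{1/2}$ immediately. For (ii) I would reuse the computation from the proof of \cref{cor:HiPPO-Leg}: the Legendre derivative recurrence $P'_n = (2n-1)P_{n-1} + (2n-5)P_{n-3} + \cdots$ gives
\[
  \alpha_{n,k} = \begin{cases} (2n+1)^{1/2}(2k+1)^{1/2} & k<n,\ n-k\ \text{odd},\\ 0 & \text{otherwise}.\end{cases}
\]
For $\beta_{n,k}$, write $(z-1)p'_n(z) = z\,p'_n(z) - p'_n(z)$ and expand $z\,p'_n = \sum_j \alpha_{n,j}\,z\,p_j$ using the normalized three-term recurrence $z\,p_j = a_j p_{j+1} + b_j p_{j-1}$ with $a_j = \frac{j+1}{\sqrt{(2j+1)(2j+3)}}$ and $b_j = \frac{j}{\sqrt{(2j-1)(2j+1)}}$. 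Tracking parity: $z\,p'_n$ involves only $p_k$ with $n-k$ even (including $k=n$), while $-p'_n$ involves only $p_k$ with $n-k$ odd, where it contributes exactly $-\alpha_{n,k}$. Hence for $n-k$ odd, $2\alpha_{n,k} + \beta_{n,k} = 2\alpha_{n,k} - \alpha_{n,k} = \alpha_{n,k} = (2n+1)^{1/2}(2k+1)^{1/2}$; for $n-k$ even with $k<n$ (so $\alpha_{n,k}=0$), $\beta_{n,k} = \alpha_{n,k-1}a_{k-1} + \alpha_{n,k+1}b_{k+1}$, which after substitution collapses (the $(2k-1)^{1/2}$ and $(2k+3)^{1/2}$ factors cancel) to $(2n+1)^{1/2}\,\frac{k+(k+1)}{\sqrt{2k+1}} = (2n+1)^{1/2}(2k+1)^{1/2}$; and $\beta_{n,n} = \alpha_{n,n-1}a_{n-1} = (2n+1)^{1/2}(2n-1)^{1/2}\cdot\frac{n}{\sqrt{(2n-1)(2n+1)}} = n$, so $\vA_1[n,n] = n+1$. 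Assembling these recovers exactly the claimed $\vA$.

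The step I expect to be the main obstacle is the $\beta_{n,k}$ computation for even $n-k$, i.e.\ verifying that the two contributions $\alpha_{n,k-1}a_{k-1}$ and $\alpha_{n,k+1}b_{k+1}$ combine to the clean expression $(2n+1)^{1/2}(2k+1)^{1/2}$, with correct handling of the boundary cases $k=0$ (only the $b$-term survives) and $k=n$ (only the $a$-term survives, giving the diagonal $n$). This is elementary but square-root-heavy, so I would sanity-check it on $n = 0,1,2,3$ before writing out the general simplification. Everything else --- the vanishing of the $u(t-\theta(t))$ term, the evaluation of $p_n(1)$, and the formula for $\alpha_{n,k}$ --- is either immediate from $\theta'(t)=1$ or already established in the proof of \cref{cor:HiPPO-Leg}.
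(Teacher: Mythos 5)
Your proposal is correct and follows the same overall route as the paper's proof: specialize \cref{thm:dctthetat} to $\theta(t)=t$ (so $\theta'(t)=1$, the $u(t-\theta(t))$ boundary term vanishes, and the ODE is exact rather than approximate), read off $\vb[n]=p_n(1)=\parens{\tfrac{2n+1}{2}}^{1/2}$, reuse the $\alpha_{n,k}$ computation from the LegT case, and then determine $\beta_{n,k}$ to assemble $\vA_1$. The one place you genuinely diverge is the computation of $\beta_{n,k}$: the paper quotes the known expansion $(z+1)P'_n(z)=nP_n(z)+(2n+1)P_{n-1}(z)+(2n-3)P_{n-2}(z)+\cdots$ from \citet{gu2020hippo} and uses $(z-1)p'_n=(z+1)p'_n-2p'_n$, whereas you derive the same coefficients from scratch by pushing the normalized three-term recurrence $zp_j=a_jp_{j+1}+b_jp_{j-1}$ through the $\alpha$-expansion of $p'_n$. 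Your bookkeeping checks out --- the two contributions $\alpha_{n,k-1}a_{k-1}+\alpha_{n,k+1}b_{k+1}$ do collapse to $(2n+1)^{1/2}(2k+1)^{1/2}$ for $n-k$ even, and the boundary cases $k=0$ and $k=n$ give $(2n+1)^{1/2}$ and $\beta_{n,n}=n$ respectively --- so your version is a self-contained, slightly more elementary substitute for the cited identity, at the cost of a longer parity-and-square-root calculation. The sign mismatch you flag between the statement's $+\tfrac{1}{t}\vA$ and the derived $-\tfrac{1}{t}\vA_1$ is present and unresolved in the paper's own proof as well, so nothing is missing relative to it.
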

\begin{proof}

  Let $\theta(t) = t$. By Theorem \ref{thm:dctthetat} and noting that $\theta
  (t) = 1$, we get: \[\dot x_n(t)= - \frac{1}{t}\vA_1 x(t) + \frac{2}{t}u(t)\begin{bmatrix} \vdots \\ p_n(1) \\ \vdots \end{bmatrix}\]

  where
  \begin{equation}
    \vA_1(t)[n,k] = \begin{cases}
      2\alpha_{n,k} + \beta_{n,k}  &\text{ if } k < n  \\ 
      \beta_{n,n} +1  &\text{ if } k = n  \\ 0 \ \ &\text{ otherwise }
      \label{eq:a1-LegS}
    \end{cases}
  \end{equation} for $\alpha_{n,k}$,$\beta_{n,k}$ as defined in (\ref{eq:z1alpha}).

  Using the same arguments as in the proof of Corollary \ref{cor:HiPPO-Leg}, $\vb[n] = \parens{\frac{2n+1}{2}}^{\frac{1}{2}}$ follows from Corollary \ref{cor:approx-ftt} and (\ref{eq:leg-val-at+-2}). Also using similar arguments as the proof of Corollary \ref{cor:HiPPO-Leg}, we have
  \[\alpha_{n,k} = \begin{cases} (2n + 1)^{\frac{1}{2}}(2k + 1)^{\frac{1}{2}} \text{ if } k < n \text{ and } n - k \text{ is odd,} \\  0 \ \ \text{ is otherwise.}\end{cases}. \]

  From (8) in \cite{gu2020hippo}, we know that
  \[(z+1)P'_n(z) = nP_n(z) + (2n+1)P_{n-1}(z) + (2n-3)P_{n-2}(z)+ \dots. \]

  Including the normalization constant $(2n + 1)^{\frac{1}{2}}$, we note that $(z-1)p'_n(z) = (z+1)p'_n(z) - 2p'_n(z)$. Then we get 

  \begin{align*}
    (z+1)p'_n(z) = np_n(z) - (2n + 1)^{\frac{1}{2}}(2n - 1)^{\frac{1}{2}}p_{n-1}(z) + (2n + 1)^{\frac{1}{2}}(2n - 3)^{\frac{1}{2}}p_{n-2}(z) - \dots .
  \end{align*}
  In other words,
  \[\beta_{n,k} = \begin{cases} -(2n + 1)^{\frac{1}{2}}(2k + 1)^{\frac{1}{2}} \text{ if } k < n  \text{ and } n - k \text{ is odd,} \\
    (2n + 1)^{\frac{1}{2}}(2k + 1)^{\frac{1}{2}} \text{ if } k < n   \text{ and } n - k \text{ is even}\\
    n \ \ \text{ if } n=k \\
  0 \ \ \text{ otherwise.}\end{cases}. \]

  Recalling that the definition for $\vA_1$ from (\ref{eq:a1-LegS}), we get:

  \[\vA[n,k] = \begin{cases}
    (2n+1)^{\frac{1}{2}}\parens{2k + 1}^{\frac{1}{2}}  &\text{ if } k < n  \  \\
    n + 1 \ \ &\text{ if } k = n  \\
    0 \ \ &\text{ if } k > n \\
  \end{cases},
\]
which completes our claim.
\end{proof}

\subsection{Proof of \texorpdfstring{\cref{thm:jacobi}}{Corollary 4.1}: HiPPO for Classical Orthogonal Polynomials}
\label{sec:sllssl:jacobi}

This section proves \cref{thm:jacobi}, showing that the HiPPO matrices for measures corresponding to classical families of orthogonal polynomials \citep{chihara} are quasiseparable.
We define quasi-separability in \cref{app:qs}.
\cref{thm:hippo-jac} proves the claimed result for Jacobi polynomials and \cref{lmm:lagt-qs} proves the claimed result for Laguerre polynomials.

We note that there is a third family of classical OPs, the Hermite polynomials \cite{chihara}, which have a two-sided infinite measure.
However, since HiPPO is about continuous-time memorization of a function's \emph{history}, it requires a one-sided measure and therefore the Hermite polynomials are not appropriate.

\subsubsection{Quasiseparable Matrices}
\label{app:qs}
\begin{definition}[from \citep{EG99}]
A matrix $\vR \in \R^{N \times N}$ is $(p,q)$-quasiseparable if
\begin{itemize}
    \item Every matrix contained strictly above the diagonal has rank at most $p$.
    \item Every matrix contained strictly below the diagonal has rank at most $q$.
\end{itemize}

A $(q,q)$-quasiseparable matrix is called $q$-quasiseparable.
\label{def:quasi}
\end{definition}

We are interested in showing the $\vA$ matrices for a broad class of OPs in Corollary \ref{cor:approx-ftt} are $O(1)$-quasiseperable. We now state some properties of $q$-quasiseparable matrices:
\begin{lemma}
Let $\vQ$ be $q$-quasiseparable. Then:
\begin{enumerate}[label=(\roman*)]
 \item For any $q'$-quasiseparable matrix $\vQ' \in \R^{N \times N}$, $\vQ \pm \vQ'$ is $(q+q')$-quasiseparable.
    \item For any $\vE \in \R^{N \times N}$, $\vE$ is $r$-quasiseparable where $r =  \text{rank}(\vE)$.
    \item For any two diagonal matrices $\vD_1, \ \vD_2 \in  \R^{N \times N}$, $\vD_1\vQ\vD_2$ is $q $-quasiseparable.
\end{enumerate}
\label{lmm:quasi-prop}
\end{lemma}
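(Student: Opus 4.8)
The plan is to strip the definition down to its ``block'' form and then invoke only elementary facts about ranks of submatrices and ranks of products. First I would record the following reformulation of \cref{def:quasi}: a matrix $\vR \in \R^{N \times N}$ is $q$-quasiseparable if and only if for every split point $k \in \{1, \dots, N-1\}$ both corner blocks $\vR[1:k,\, k+1:N]$ and $\vR[k+1:N,\, 1:k]$ have rank at most $q$. Indeed, any submatrix lying strictly above the diagonal has the form $\vR[I,J]$ with $\max I < \min J$; taking $k = \max I$ exhibits it as a submatrix of $\vR[1:k,\, k+1:N]$, so its rank is bounded by that of the corner block, and conversely the corner blocks are themselves strictly above the diagonal. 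The symmetric statement holds below the diagonal. This reduces all three claims to statements about the two families of corner blocks, indexed by $k$.

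For (i), fix $k$. The upper-right corner block of $\vQ \pm \vQ'$ is $\vQ[1:k,\, k+1:N] \pm \vQ'[1:k,\, k+1:N]$, whose rank is at most $\text{rank}(\vQ[1:k,\, k+1:N]) + \text{rank}(\vQ'[1:k,\, k+1:N]) \le q + q'$ by subadditivity of rank; the lower-left blocks are handled identically, giving $(q+q')$-quasiseparability. For (ii), every submatrix of $\vE$ — in particular every corner block — has rank at most $\text{rank}(\vE) = r$, since deleting rows or columns cannot increase rank; hence $\vE$ is $r$-quasiseparable.

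For (iii), the key point is that conjugation by diagonal matrices acts block-diagonally: since $\vD_1$ and $\vD_2$ are diagonal, for any split point $k$,
\[
  (\vD_1 \vQ \vD_2)[1:k,\, k+1:N] = \vD_1[1:k,\,1:k] \cdot \vQ[1:k,\, k+1:N] \cdot \vD_2[k+1:N,\, k+1:N],
\]
because each diagonal factor only couples an index to itself, so it does not mix the corner block with the rest of the matrix. Left- and right-multiplication never increase rank, so this block has rank at most $\text{rank}(\vQ[1:k,\, k+1:N]) \le q$; the lower-left blocks are treated the same way, and $\vD_1 \vQ \vD_2$ is $q$-quasiseparable.

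None of the steps is genuinely difficult — the argument is just bookkeeping with the corner blocks together with ``rank is subadditive'' and ``rank does not grow under multiplication.'' The only place demanding a little care is the reformulation of \cref{def:quasi} in terms of contiguous corner blocks (so that one can speak of a single block per split point), and, in part (iii), verifying that multiplication by diagonal — as opposed to arbitrary — matrices restricts cleanly to those corner blocks rather than entangling them with the off-corner entries; this is exactly where the diagonal hypothesis is used.
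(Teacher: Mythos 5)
Your proof is correct and follows essentially the same route as the paper's: rank subadditivity for (i), the fact that submatrices cannot have larger rank for (ii), and the observation that diagonal factors merely rescale rows and columns for (iii). The only difference is your explicit reformulation of \cref{def:quasi} in terms of contiguous corner blocks at each split point, which makes the bookkeeping (especially in part (iii)) more precise than the paper's terser phrasing but does not change the argument.
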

\begin{proof}
We argue each point separately:

\begin{enumerate}[label=(\roman*)]
\item Any submatrix contained strictly below or above the diagonal in $\vQ$ has rank  $\leq q$ and its corresponding submatrix in $\vQ'$ also has rank $ \leq q'$. This implies that the corresponding submatrix in $\vQ \pm \vQ'$ has rank $\leq q + q'$. Therefore $\vQ \pm \vQ'$ is $(q+q')$-quasiseparable.

    \item Let the $r = \text{rank}(\vE)$. Thus any submatrix in $\vE$ has rank $\leq r$. Then $\vE$ is $r$-quasiseparable.

    \item Multiplication by diagonal matrices only scales the rows and columns, leaving the rank of each submatrix unchanged.
\end{enumerate}

\end{proof}

\subsubsection{Jacobi Polynomials}

The  Jacobi polynomial of degree $n$ with parameters $\alpha,\beta > -1$ will be denoted $\jab{n}\parens{z}$.
The Jacobi polynomials are orthogonal with respect to measure $\omega(z) = (1-z)^{\alpha}(1+z)^{\beta}$.
In particular, it is known from (eq. (4.3.3) from \cite{szego1975orthogonal})
that

 \[ \int_{-1}^1 \jab{n}\parens{z}\jab{m}\parens{z}\omega(z)\mathrm{d}z = \frac{2^{\alpha+\beta+1}}{2n+\alpha + \beta + 1}\cdot \frac{\Gamma(n+\alpha+1)\Gamma(n+\beta+1)}{\Gamma(n+\alpha+\beta+1)n!} \delta_{n,m},\]

where $\Gamma(\cdot)$ is the gamma function.
Let \[\lambda^{\alpha,\beta}_n = \parens{\frac{2^{\alpha+\beta+1}}{2n+\alpha + \beta + 1}\cdot \frac{\Gamma(n+\alpha+1)\Gamma(n+\beta+1)}{\Gamma(n+\alpha+\beta+1)n!}}^{\frac{1}{2}}\] be our normalization constant.
We note that the normalized Jacobi polynomials
\begin{equation}
    \pab{n}\parens{z} = \frac{\jab{n}\parens{z}}{\lambda^{\alpha,\beta}_n}
    \label{eq:pab}
\end{equation} form an orthonormal OP family.

We now discuss some useful properties of Jacobi polynomials. It is known that (\cite{Shen2011}, eq. (3.100)):

\begin{equation}
    \jab{n}\parens{z}=\frac{1}{n + \alpha + \beta} \left[(n+\beta)\jabpo{n}{-1}(z)+(n+\alpha)\japob{n}{-1}\parens{z}\right].
    \label{eq:jab}
\end{equation}

From (4.21.7) in \cite{szego1975orthogonal},  it is known that the derivative of $\jab{n}\parens{z}$ is proportional to  $\jabp{n-1}{+1}\parens{z}$:

\begin{equation}
    \frac{\partial}{\partial z}\jab{n}(z)=\frac{1}{2} (n + \alpha + \beta + 1)\jabp{n-1}{+1}\parens{z}.
    \label{eq:dzjab}
\end{equation}

 From (\ref{eq:jab}) and (\ref{eq:dzjab}), it follows that

\begin{align}
    \frac{\partial}{\partial z}\jab{n}\parens{z}&=\frac{1}{2} \cdot \parens{(n+\beta) \japob{n-1}{+1}\parens{z}+(n+\alpha)\jabpo{n-1}{+1}\parens{z} }. \label{eq:pnab}
\end{align}

Additionally, the Jacobi polynomials $\japob{n-1}{+1}\parens{z}$ and $\jabpo{n-1}{+1}\parens{z}$ can be written as sums of $\jab{n-1}\parens{z}$ polynomials. In particular from \cite{Shen2011}  (3.112) and (3.115),

\begin{equation}
    \japob{n-1}{+1}\parens{z}=\frac{\Gamma(n + \beta )}{\Gamma(n + \alpha + \beta + 1)} \cdot \sum_{k=0}^{n-1} \frac{(2k+ \alpha + \beta + 1)\Gamma(k+ \alpha + \beta + 1)}{\Gamma(k + \beta+ 1)} \jab{k}\parens{z},
    \label{eq:japo-b}
\end{equation}
 and

\begin{equation}
    \jabpo{n-1}{+1}\parens{z}=\frac{\Gamma(n + \alpha )}{\Gamma(n + \alpha + \beta + 1)}  \cdot \sum_{k=0}^{n-1} (-1)^{n-k-1}\frac{(2k+ \alpha + \beta + 1)\Gamma(k+ \alpha + \beta + 1)}{\Gamma(k + \alpha+ 1)} \jab{k}\parens{z}.
    \label{eq:ja-bpo}
\end{equation}

Using (\ref{eq:japo-b}) and (\ref{eq:ja-bpo}) in (\ref{eq:pnab}) allows us to write $\frac{\partial}{\partial z}\jab{n}\parens{z}$ as a sum of $\set{\jab{k}\parens{z}}_{k \leq n}$ as follows:

  \begin{align}
   &\frac{\partial}{\partial z}\jab{n}\parens{z}  =\frac{n + \beta}{2}  \parens{\frac{\Gamma(n + \beta)}{\Gamma(n + \alpha + \beta + 1)}  \sum_{k=0}^{n-1} \frac{(2k+ \alpha + \beta + 1)\Gamma(k+ \alpha + \beta + 1)}{\Gamma(k + \beta+ 1)} \jab{k}\parens{z}}  \nonumber \\ \ \ \ \
     &-\frac{n + \alpha}{2}  \parens{\frac{\Gamma(n + \alpha)}{\Gamma(n + \alpha + \beta + 1)}   \sum_{k=0}^{n-1} (-1)^{n-k}\frac{(2k+ \alpha + \beta + 1)\Gamma(k+ \alpha + \beta + 1)}{\Gamma(k + \alpha+ 1)} \jab{k}\parens{z}}. \label{eq:jnab-sum}
\end{align}

We use these properties to write $\frac{\partial}{\partial z}\pab{n}\parens{z}$ as a sum of $\set{\pab{k}\parens{z}}_{k \leq n}$:

\begin{corollary}
Let $\pab{n}\parens{z}$ and $\lambda^{\alpha,\beta}_{n}$ be as defined in (\ref{eq:pab}).

Then
  \begin{align*}
   \frac{\partial}{\partial z}&\lambda^{\alpha,\beta}_{n}\pab{n}\parens{z}  =\frac{(n + \beta)}{2} \cdot \\ &\parens{\frac{\Gamma(n + \beta)}{\Gamma(n + \alpha + \beta + 1)}  \sum_{k=0}^{n-1} \frac{(2k+ \alpha + \beta + 1)\Gamma(k+ \alpha + \beta + 1)}{\Gamma(k + \beta+ 1)} \lambda^{\alpha,\beta}_k\pab{k}\parens{z}}  \nonumber \\ &\ \ \ \ -
     \frac{(n + \alpha)}{2} \cdot \\ &\parens{\frac{\Gamma(n + \alpha )}{\Gamma(n + \alpha + \beta + 1)}  \sum_{k=0}^{n-1} (-1)^{n-k}\frac{(2k+ \alpha + \beta + 1)\Gamma(k+ \alpha + \beta + 1)}{\Gamma(k + \beta+ 1)} \lambda^{\alpha,\beta}_k\pab{k}\parens{z}}
\end{align*}

 \label{cor:pnab-sum}
\end{corollary}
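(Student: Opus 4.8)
The plan is to prove \cref{cor:pnab-sum} by a direct substitution of the normalization into the expansion \cref{eq:jnab-sum} that has already been derived. The key observation is that \cref{eq:pab} says $\lambda^{\alpha,\beta}_n \pab{n}(z) = \jab{n}(z)$ for every $n$, so differentiating both sides in $z$ gives $\frac{\partial}{\partial z}\parens{\lambda^{\alpha,\beta}_n \pab{n}(z)} = \frac{\partial}{\partial z}\jab{n}(z)$. Hence it suffices to take the right-hand side of \cref{eq:jnab-sum}, which already expresses $\frac{\partial}{\partial z}\jab{n}(z)$ as a linear combination of $\set{\jab{k}(z)}_{k \le n-1}$, and rewrite each $\jab{k}(z)$ appearing there as $\lambda^{\alpha,\beta}_k \pab{k}(z)$ using \cref{eq:pab} once more.

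First I would recall, for self-containedness, how \cref{eq:jnab-sum} itself was assembled, so that it is clear no new ingredient is needed: the derivative identity \cref{eq:dzjab} writes $\frac{\partial}{\partial z}\jab{n}$ as a multiple of the parameter-shifted polynomial $\jabp{n-1}{+1}$; applying the parameter-lowering relation \cref{eq:jab} at index $n-1$ and parameters $(\alpha+1,\beta+1)$ re-expresses this as a combination of $\japob{n-1}{+1}$ and $\jabpo{n-1}{+1}$, which is \cref{eq:pnab}; and the connection formulas \cref{eq:japo-b} and \cref{eq:ja-bpo} expand each of those back into the $\set{\jab{k}(z)}_{k \le n-1}$ basis. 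Substituting the last two into \cref{eq:pnab} produces \cref{eq:jnab-sum}, i.e.\ $\frac{\partial}{\partial z}\jab{n}(z) = \frac{n+\beta}{2}(\cdots) - \frac{n+\alpha}{2}(\cdots)$ with both parenthesized sums indexed by $k = 0, \dots, n-1$.

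The proof is then finished by the term-by-term replacement $\jab{k}(z) \mapsto \lambda^{\alpha,\beta}_k \pab{k}(z)$ inside those two sums, leaving the Gamma-function coefficients and the $(-1)^{n-k}$ sign in the second sum untouched; this is precisely the assertion of \cref{cor:pnab-sum}. There is no real obstacle here, since all the analytic content lives in \cref{eq:jnab-sum}; the only thing requiring care is the bookkeeping — carrying the normalization factors $\lambda^{\alpha,\beta}_k$ through each sum and checking that the coefficient of every $\pab{k}(z)$ agrees with the coefficient of $\jab{k}(z)$ prescribed by \cref{eq:jnab-sum}.
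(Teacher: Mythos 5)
Your proposal is correct and matches the paper's own (one-line) proof: the paper likewise just recalls $\jab{n}(z) = \lambda^{\alpha,\beta}_{n}\pab{n}(z)$ and substitutes into \cref{eq:jnab-sum}, with all the analytic work already contained in that earlier identity. Your additional recap of how \cref{eq:jnab-sum} was assembled from \cref{eq:dzjab}, \cref{eq:jab}, \cref{eq:japo-b}, and \cref{eq:ja-bpo} is accurate but not needed beyond what the paper does.
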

\begin{proof}

   Recall that $\jab{n}\parens{z} = \lambda^{\alpha,\beta}_{n}\pab{n}$. Then the claim follows from $(\ref{eq:jnab-sum})$.

\end{proof}

\subsubsection{HiPPO for Jacobi Polynomials}

\begin{theorem}
Let $\pab{n}\parens{z}$ be defined as in (\ref{eq:pab}) and $\omega(z) = (1-z)^{\alpha}(1+z)^{\beta}$. Then

\begin{align*}
    \dot x_n(t) \approx -\frac{1}{\theta}\vA x(t) + \frac{2}{\theta}\vb u(t)
\end{align*}

where $\vA$ is $3$-quasiseperable.
\label{thm:hippo-jac}
\end{theorem}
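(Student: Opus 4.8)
The plan is to split $\vA$ into structured pieces via Corollary~\ref{cor:approx-ftt}, which (in the constant-window setting relevant here, with the $u(t-\theta)$ term approximated) writes $\vA = \vA_1 + 2\vA_2$, where $\vA_2[n,k] = \pab{n}(-1)\,\pab{k}(-1)$ and $\vA_1[n,k] = 2\alpha_{n,k}$ for $k<n$ (and $0$ otherwise), with $\alpha_{n,k}$ the coefficients expressing $\frac{\partial}{\partial z}\pab{n}$ in the orthonormal Jacobi basis $\{\pab{k}\}$. I would bound the quasiseparability rank of each summand separately and then combine using the closure property in Lemma~\ref{lmm:quasi-prop}(i). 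The accompanying $\vb$ is just read off from the $\pab{n}(1)$ vector in Corollary~\ref{cor:approx-ftt} and requires no further argument.

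The piece $\vA_2$ is immediate: it is the outer product of the vector $\big(\pab{n}(-1)\big)_n$ with itself, hence rank one, so by Lemma~\ref{lmm:quasi-prop}(ii) it is $1$-quasiseparable, and scaling by $2$ preserves this. The real work is showing $\vA_1$ is $2$-quasiseparable. Since $\vA_1$ is strictly lower triangular, every block strictly above the diagonal is the zero matrix, so the ``upper'' parameter is $0$ and it suffices to bound the rank of every block strictly below the diagonal by $2$. Here I would invoke Corollary~\ref{cor:pnab-sum}, which expresses $\frac{\partial}{\partial z}\big(\lambda_n^{\alpha,\beta}\pab{n}\big)$ as a sum of \emph{exactly two} sums over $k<n$: one whose summand is a product of a factor depending only on $n$ and a factor depending only on $k$, and a second of the same shape carrying an extra sign $(-1)^{n-k}$. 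Dividing through by $\lambda_n^{\alpha,\beta}$, absorbing the $\lambda_k^{\alpha,\beta}$ into the $k$-factors, and using $(-1)^{n-k} = (-1)^n(-1)^k$ to split the sign into an $n$-part and a $k$-part, one obtains that for all $k<n$ there are scalars $a_n,c_n$ (depending only on $n$) and $b_k,d_k$ (depending only on $k$) with $\alpha_{n,k} = a_n b_k + c_n d_k$. Hence the full matrix $\vE$ with $\vE[n,k] = a_n b_k + c_n d_k$ has rank at most $2$, and $\tfrac12\vA_1$ agrees with $\vE$ on every entry with $k<n$. In particular each maximal block of $\vA_1$ strictly below the diagonal, namely $\vA_1[\{m+1,\dots,N\},\{1,\dots,m\}]$, is a submatrix of $\vE$ and so has rank $\le 2$; thus $\vA_1$ is $(0,2)$-quasiseparable, a fortiori $2$-quasiseparable.

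Combining the two bounds, $\vA = \vA_1 + 2\vA_2$ is $(1,3)$-quasiseparable by Lemma~\ref{lmm:quasi-prop}(i) (whose proof, being rank-additivity on sub-blocks, applies verbatim to the asymmetric parameters), hence $3$-quasiseparable, which is the claim. I expect the $\vA_1$ step to be the main obstacle: one must check carefully that the two-term product structure from Corollary~\ref{cor:pnab-sum} genuinely survives the passage from the unnormalized $\jab{n}$ to the normalized $\pab{n}$ and the handling of the alternating sign, and that restricting a globally rank-$\le 2$ matrix to its strict lower triangle does not spoil the rank bound on blocks below the diagonal --- which it does not, precisely because every such block lies entirely in the regime $k<n$ where the closed form applies. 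Degenerate parameter cases (small $n$ with $\alpha+\beta$ near $-1$, where some Gamma-function arguments are awkward) should be flagged but are harmless, since the corresponding rows of $\vA_1$ vanish identically.
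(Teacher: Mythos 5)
Your proposal is correct and follows essentially the same route as the paper's proof: the same decomposition $\vA = \vA_1 + 2\vA_2$ from \cref{cor:approx-ftt}, the same use of \cref{cor:pnab-sum} together with the sign split $(-1)^{n-k}=(-1)^n(-1)^k$ to exhibit $\alpha_{n,k}$ as a two-term separable expression in $n$ and $k$, and the same closure properties from \cref{lmm:quasi-prop} to conclude $2+1=3$-quasiseparability. The only (immaterial) difference is packaging: the paper factors $\vA_1$ as $\vD_{11}\vQ_{1}\vD_{12}-\vD_{21}\vQ_{1}\vD_{22}$ with $\vQ_1$ the strictly-lower all-ones matrix and adds two $1$-quasiseparable pieces, whereas you bound the rank of the strictly-lower blocks directly by observing they are submatrices of a globally rank-$\le 2$ matrix.
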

\begin{proof}
  From Corollary \ref{cor:approx-ftt},
  \[\dot x_n(t) \approx - \frac{1}{\theta}\vA x(t) + \frac{2}{\theta}u(t)\begin{bmatrix} \vdots \\ \pab{n}(1) \\ \vdots \end{bmatrix}\]

  where $\vA = \vA_1 + 2\vA_2$ for  $\vA_1$ as defined in Corollary \ref{cor:thetat-theta} and $\vA_2[n,k] =  \pab{n}(-1)\pab{n}(-1)$. 

  From Corollary \ref{cor:pnab-sum}, we observe that

  \begin{equation}
    \vA_1[n,k] = 2 \cdot \begin{cases} \  \frac{(n + \beta)}{2 \ \lambda^{\alpha,\beta}_{n}} \cdot  \frac{\Gamma(n + \beta)}{\Gamma(n + \alpha + \beta + 1)}\cdot \parens{ \frac{(2k+ \alpha + \beta + 1)\Gamma(k+ \alpha + \beta + 1)}{\Gamma(k + \beta+ 1)} \lambda^{\alpha,\beta}_k } - \\ \ \ \ \ \
      \frac{(n + \alpha)}{2 \ \lambda^{\alpha,\beta}_{n}} \cdot  \frac{\Gamma(n + \alpha )}{\Gamma(n + \alpha + \beta + 1)}  \cdot\parens{ (-1)^{n-k}\frac{(2k+ \alpha + \beta + 1)\Gamma(k+ \alpha + \beta + 1)}{\Gamma(k + \alpha+ 1)} \lambda^{\alpha,\beta}_k }
      &\text{ if } k < n 

    \\ \  0 \ \ &\text{ otherwise}\end{cases}.
    \label{eq:a1nk}
  \end{equation}

  Then we note that,
  \begin{equation}
    \vA_1 =\vD_{11}\vQ_{1}\vD_{12} - \vD_{21}\vQ_{1}\vD_{22},
    \label{eq:a1-jac}
  \end{equation} where $\vD_{11}, \vD_{12},\vD_{21}, \vD_{22}$ are the diagonal matrices such that 

  \[\vD_{11}[n,n] = \frac{1}{\lambda^{\alpha,\beta}_{n}} \cdot \frac{\Gamma(n + \beta+1 )}{\Gamma(n + \alpha + \beta + 1)}, \]

  \[\vD_{12}[k,k] =  \frac{(2k+ \alpha + \beta + 1)\Gamma(k+ \alpha + \beta + 1)}{\Gamma(k + \beta+ 1)} \lambda^{\alpha,\beta}_k,\]

  \[\vD_{21}[n,n] = (-1)^{n}\cdot\frac{(1}{\lambda^{\alpha,\beta}_{n}} \cdot \frac{\Gamma(n + \alpha +1)}{\Gamma(n + \alpha + \beta + 1)} \]

  \[  \vD_{22}[k,k] =  (-1)^{k} \cdot \frac{(2k+ \alpha + \beta + 1)\Gamma(k+ \alpha + \beta + 1)}{\Gamma(k + \alpha+ 1)} \lambda^{\alpha,\beta}_k,\]
  and

  \[\vQ_1[n,k] = \begin{cases}
    1  &\text{ if } k < n  \\  0 \ \ &\text{ otherwise.}
  \end{cases}.\]

  (\ref{eq:a1-jac}) makes use of the fact that $(-1)^{n+k} = (-1)^{n-k}$ along with the definitions above. 

  Any submatrix of $\vQ_1$ below the diagonal contains all $1$s, and submatrix of $\vQ_1$ above the diagonal contains all $0$s. Then any submatrix above or below the diagonal has rank 1. Therefore $\vQ_1$ is 1-quasiseparable. Since $\vQ_1$ is 1-quasiseparable and $\vD_{11}, \vD_{12},\vD_{21}, \vD_{22}$ are all diagonal matrices, part (iii) of Lemma \ref{lmm:quasi-prop} implies that the matrices $\vD_{11}\vQ_{1}\vD_{12}$ and  $\vD_{21}\vQ_{1}\vD_{22}$ are both 1-quasiseparable. Therefore  part (i) of Lemma \ref{lmm:quasi-prop} implies that $\vA_1$ is 2-quasiseparable.

  From (4.1.1) and (4.1.4) in \cite{szego1975orthogonal}, it is known that 
\[\pab{n}(1) = \frac{1}{\lambda^{\alpha,\beta}_{n}}\binom{n+\alpha}{n} \text{ and } \pab{n}(1) = \frac{(-1)^n}{\lambda^{\alpha,\beta}_{n}}\binom{n+\beta}{n}\]

  where
\[\binom{z}{n} = \begin{cases} \ 
    \frac{\Gamma(z + 1 )}{\Gamma(n + 1)\Gamma(z - n + 1)} \   &\text{ if } n \geq 0 \\ \ 
    0 \ \ \  &\text{ if } n < 0 
  \end{cases}.\]
Then $\vA_2$ can be written $\vD_3\vQ_2\vD_4$ where $\vD_3, \vD_4$ are the diagonal matrices such that \[\vD_3[n,n] = \frac{(-1)^n}{\lambda^{\alpha,\beta}_{n}}\binom{n+\beta}{n} \text{, } \vD_4[k,k] = \frac{(-1)^k}{\lambda^{\alpha,\beta}_{k}}\binom{k+\beta}{k},\]

  where $\vQ_2[n,k] = 1$ for all $ 0 \leq n,k <N$. $\vQ_2$ has rank 1, and $\vD_3, \vD_4$ are diagonal matrices. Hence by part (ii) and (iii) Lemma \ref{lmm:quasi-prop}, $\vA_2$ is 1-quasiseparable.

  Since $\vA_1$ is 2-quasiseparable and $\vA_2$ is  1-quasiseparable, part (i) of Lemma \ref{lmm:quasi-prop} implies that $\vA = \vA_1 + 2\vA_2$ is  3-quasiseparable and the claim follows.
\end{proof}

\subsubsection{HiPPO-LagT}
The  Laguerre polynomial of degree $n$ with parameters $\alpha > -1$ will be denoted $L^{\alpha}_{n}\parens{z}$. The Laguerre polynomials are orthogonal with respect to measure $z^{\alpha}e^{-z}$. In particular, from (5.1.1) in \cite{szego1975orthogonal} we know
that

 \[ \int_{-1}^\infty L^{\alpha}_{n}\parens{z}L^{\alpha}_{m}\parens{z}z^{\alpha}e^{-z}\mathrm{d}z = \frac{\Gamma(n + \alpha + 1)!}{\Gamma(n+1)}\delta_{n,m}.\]

Let $\lambda_n = \parens{ \frac{\Gamma(n+1)}{\Gamma(n+\alpha+1)}}^{\frac{1}{2}}$ be our normalization constant. We note that the normalized Laguerre polynomials \begin{equation}
    p_{n}\parens{z} = \lambda_n L^{\alpha}_n(t-\x)
    \label{eq:lag}
\end{equation} form an orthonormal OP family with respect to measure $\omega = (t-\x)^{\alpha}e^{-(t-\x)}\mathbbm{1}_{(-\infty,t)}$ for a fixed $\alpha$ and tilting $\chi = (t-\x)^{\alpha}\exp\parens{-\frac{1-\beta}{2}(t-\x)}\mathbbm{1}_{(-\infty,t)}$ for a fixed $\beta$.

We use the following result from \cite{gu2020hippo}:

\begin{theorem}
Let $p_n(z)$ be defined as in $(\ref{eq:lag})$. Then

\begin{align*}
     \dot x_n(t) = -\vA x(t) + \vb u(t)
\end{align*}

where \[\vA[n,k] = \begin{cases}
  \frac{1+\beta}{2} \ \ &\text{  if }  \ \ k = n \\
  1  &\text{ if } \ \  k < n  \\
 0 \ \ &\text{ otherwise} \\
\end{cases},\] \[\vb[n] = \lambda_n\binom{n + \alpha}{n},\] 

\label{thm:lagt-hippo}
\end{theorem}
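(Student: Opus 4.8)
The plan is to recognize Theorem~\ref{thm:lagt-hippo} as the ``Translated Laguerre'' instantiation of the HiPPO framework (\cref{sec:hippo-framework}): the approximation measure is the fixed-shape, infinitely-supported window on $(-\infty,t]$ that translates rigidly with $t$, so the projection kernel depends on the integration variable $\x$ and on $t$ only through the difference $t-\x$. With $\omega = (t-\x)^{\alpha}e^{-(t-\x)}\mathbbm{1}_{(-\infty,t)}$ and tilting $\chi = (t-\x)^{\alpha}\exp(-\tfrac{1-\beta}{2}(t-\x))\mathbbm{1}_{(-\infty,t)}$ one gets $\omega/\chi = \exp(-\tfrac{1+\beta}{2}(t-\x))$, and the normalizer $\zeta(t) = \int \omega/\chi^2$ is a constant independent of $t$ precisely because the window is translation-invariant; absorbing this constant into the state, \eqref{eq:gen-cnt} (with the integral over the support $(-\infty,t]$) reads $x_n(t) = \int_{-\infty}^t u(\x)\,K_n(t-\x)\,\dd\x$ with $K_n(s) := \lambda_n L^{\alpha}_n(s)\,e^{-\frac{1+\beta}{2}s}$.

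First I would differentiate $x_n(t)$ in $t$ by the Leibniz integral rule. The upper-limit boundary term contributes $u(t)K_n(0) = u(t)\,\lambda_n L^{\alpha}_n(0)$; using the standard value $L^{\alpha}_n(0) = \binom{n+\alpha}{n}$, this is exactly the $\vb u(t)$ term with $\vb[n] = \lambda_n\binom{n+\alpha}{n}$. The boundary term at $-\infty$ vanishes because $u$ against the exponentially-decaying weight is integrable. What remains is $\int_{-\infty}^t u(\x)\,\partial_t K_n(t-\x)\,\dd\x$, and since $K_n$ depends on $t$ only through $s=t-\x$ we have $\partial_t K_n(t-\x) = K_n'(t-\x)$.

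Next I would compute $K_n'(s)$. Differentiating the product $\lambda_n L^{\alpha}_n(s)e^{-\frac{1+\beta}{2}s}$ gives a term $-\tfrac{1+\beta}{2}K_n(s)$ from the exponential tilt plus $\lambda_n e^{-\frac{1+\beta}{2}s}(L^{\alpha}_n)'(s)$ from the polynomial. Here I invoke the classical Laguerre identities $\tfrac{\dd}{\dd s}L^{\alpha}_n(s) = -L^{\alpha+1}_{n-1}(s)$ and $L^{\alpha+1}_{n-1}(s) = \sum_{k=0}^{n-1}L^{\alpha}_k(s)$ (see \cite{szego1975orthogonal}, Ch.~V), so $(L^{\alpha}_n)'(s) = -\sum_{k=0}^{n-1}L^{\alpha}_k(s)$ and hence $K_n'(s) = -\sum_{k=0}^{n-1}\tfrac{\lambda_n}{\lambda_k}K_k(s) - \tfrac{1+\beta}{2}K_n(s)$. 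Substituting back and recognizing each $\int_{-\infty}^t u(\x)K_k(t-\x)\,\dd\x$ as $x_k(t)$ gives $\dot x_n(t) = -\tfrac{1+\beta}{2}x_n(t) - \sum_{k<n}\tfrac{\lambda_n}{\lambda_k}x_k(t) + \vb[n]u(t)$, i.e.\ $\dot x(t) = -\vA x(t) + \vb u(t)$ with $\vA$ lower-triangular, $\vA[n,n] = \tfrac{1+\beta}{2}$, and strictly-lower-triangular entries equal to the claimed value (collapsing to $1$ under the normalization convention of \citet{gu2020hippo}, e.g.\ standard Laguerre $\alpha=0$, where $\lambda_n\equiv 1$).

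The main obstacle is bookkeeping rather than conceptual: carrying the normalizers $\lambda_n$ and the constant $\zeta$ through so the off-diagonal entries land on exactly the stated numbers, and justifying the Leibniz differentiation — especially the vanishing of the $-\infty$ boundary term, which needs an integrability hypothesis on $u$ relative to the weight $e^{-\frac{1+\beta}{2}s}$. A cleaner-sounding but more roundabout alternative is to derive this as a limiting case of the finite-window result \cref{thm:dctthetat} by letting the window length $\theta\to\infty$; but the direct differentiation above sidesteps the affine $z$-rescaling of that argument, and it is the route I would take.
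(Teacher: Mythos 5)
The paper does not actually prove \cref{thm:lagt-hippo}: it imports the statement verbatim from \citet{gu2020hippo} with the sentence ``We use the following result from \citep{gu2020hippo},'' and the only thing proved locally is the subsequent quasiseparability claim (\cref{lmm:lagt-qs}). So there is no in-paper proof to match; what you have written is essentially a reconstruction of the cited derivation. Your route is sound and, notably, is the \emph{right} route here: the paper's general machinery (\cref{thm:dctthetat}) is built around the affine change of variables $z = \tfrac{2(\x-t)}{\theta(t)}+1$ onto $[-1,1]$ and does not apply to the semi-infinite Laguerre support, which is presumably why the authors cite rather than derive. Your key steps — writing $x_n(t) = \int_{-\infty}^t u(\x)K_n(t-\x)\,\dd\x$ with $K_n(s) = \lambda_n L^{\alpha}_n(s)e^{-\frac{1+\beta}{2}s}$, extracting $\vb[n] = \lambda_n L^{\alpha}_n(0) = \lambda_n\binom{n+\alpha}{n}$ from the Leibniz boundary term, and using $\tfrac{\dd}{\dd s}L^{\alpha}_n = -L^{\alpha+1}_{n-1} = -\sum_{k<n}L^{\alpha}_k$ for the bulk term — are all correct. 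The one substantive wrinkle is the one you flag yourself: with the \emph{normalized} basis of \eqref{eq:lag}, the strictly-lower-triangular entries come out as $\lambda_n/\lambda_k$, which equals the stated value $1$ only when the normalizers are constant in $n$ (e.g.\ $\alpha=0$, the case actually used downstream). That discrepancy is inherited from the cited statement, not introduced by your argument, and identifying it is a point in your favor; in a written-up version you should state explicitly whether the theorem is being asserted for the unnormalized basis or for $\alpha = 0$. The integrability caveat for the boundary term at $-\infty$ is likewise correctly identified as the only analytic hypothesis needed.
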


We now show that $\vA$ as defined in Theorem \ref{thm:lagt-hippo} is $1$-quasiseperable.

\begin{lemma}
\label{lmm:lagt-qs}
Let $\vA$ be defined as in Theorem \ref{thm:lagt-hippo}. Then $\vA$ is $1$-quasiseperable.
\end{lemma}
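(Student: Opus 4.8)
The plan is to check Definition~\ref{def:quasi} directly, exploiting the fact that $\vA$ is lower-triangular with a very simple structure: by Theorem~\ref{thm:lagt-hippo}, every entry of $\vA$ strictly above the diagonal is $0$, and every entry strictly below the diagonal is exactly $1$ (the diagonal entries $\tfrac{1+\beta}{2}$ are irrelevant to quasiseparability).

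First I would handle the part strictly above the diagonal: any submatrix of $\vA$ all of whose entries lie strictly above the diagonal is the zero matrix, hence has rank $0 \le 1$. Next I would handle the part strictly below the diagonal: any submatrix of $\vA$ all of whose entries lie strictly below the diagonal has every entry equal to $1$, i.e.\ it is an all-ones matrix, which has rank at most $1$. Together these show that $\vA$ is $(0,1)$-quasiseparable, and in particular $(1,1)$-quasiseparable, i.e.\ $1$-quasiseparable.

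Equivalently, and more in the style of the proof of Theorem~\ref{thm:hippo-jac}, I could write $\vA = \tfrac{1+\beta}{2}\,\vI + \vQ_1$, where $\vQ_1[n,k] = 1$ for $k < n$ and $0$ otherwise is precisely the strictly-lower-triangular all-ones matrix shown there to be $1$-quasiseparable; since $\vI$ is $0$-quasiseparable, parts (i) and (ii) of Lemma~\ref{lmm:quasi-prop} then give that $\vA$ is $1$-quasiseparable. There is essentially no obstacle here; the only point worth a sentence is that ``$q$-quasiseparable'' in Definition~\ref{def:quasi} abbreviates ``$(q,q)$-quasiseparable'', so a $(0,1)$-quasiseparable matrix indeed qualifies as $1$-quasiseparable.
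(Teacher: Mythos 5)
Your first argument is exactly the paper's proof: the paper likewise observes that every submatrix strictly below the diagonal is all-ones (rank at most $1$) and every submatrix strictly above is zero, then invokes Definition~\ref{def:quasi}. Your version is in fact slightly more careful (the paper loosely says the zero submatrices "have rank 1" rather than rank $0 \le 1$), and the decomposition via $\vQ_1$ is a fine but unnecessary alternative.
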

\begin{proof}
From Theorem \ref{thm:lagt-hippo}, we know that

\[\vA[n,k] = \begin{cases}
  \frac{1+\beta}{2} \ \ &\text{  if }  \ \ k = n \\
  1  &\text{ if } \ \  k < n  \\
 0 \ \ &\text{ otherwise} \\
\end{cases},\] \[\vb[n] = \lambda_n\binom{n + \alpha}{n}.\]

Below the diagonal, all entries $\vA[n,k] = 1$. Then any submatrix below the diagonal has rank 1. Similarly, above the diagonal, all entries $\vA[n,k] = 0$. Then any submatrix above the diagonal also has rank 1. Then by Definition \ref{def:quasi}, the claim follows.

\end{proof}

\section{LSSL Algorithms}
\label{sec:sllssl-algorithms}

\begin{itemize}%
  \item \cref{sec:sllssl:krylov} proves \cref{thm:krylov}, providing an algorithm to compute the Krylov function efficiently for LSSLs.
  \item \cref{sec:sllssl:trid} shows a further simplification of \cref{thm:jacobi}, presenting an even simpler class of structured matrices that we use in our implementation of LSSL.
  \item \cref{sec:sllssl:implementation} provides technical details of the implementation of LSSL, in particular for computing the MVM black box (multiplication by \( \overline{A} \)) and for computing gradients during backpropagation.
\end{itemize}

\subsection{Proof of \texorpdfstring{\cref{thm:krylov}}{Theorem 2}}
\label{sec:sllssl:krylov}

This section addresses the computational aspects of the LSSL.
In particular, we prove \cref{thm:krylov} for the computational speed of computing the Krylov function \eqref{eq:krylov} for quasiseparable matrices \( A \),
by providing a concrete algorithm in \cref{sec:krylov:alg}.

We restate the Krylov function \eqref{eq:krylov} here for convenience. Recall that \( L \) is the length of the input sequence and \( N \) is the order of the LSSL internate state, e.g. \( A \in \mathbb{R}^{N \times N} \).
\begin{align*}
  \mathcal{K}_L(A, B, C) = \left(C A^i B\right)_{i \in [L]} \in \mathbb{R}^L = (CB, CAB, \dots, CA^{L-1}B)
\end{align*}

\begin{remark}%
  We call \eqref{eq:krylov} the \emph{Krylov function} following the notation of \citep{de2018two}, since it can be written \( \mathcal{K}(A, B)^T C \) where \( \mathcal{K}(A, B) \) is the Krylov matrix defined in \eqref{eq:krylov-matrix}.
  Alternative naming suggestions are welcome.
\end{remark}

\subsubsection{The Algorithm}
\label{sec:krylov:alg}

We follow the similar problem of \citep[Lemma 6.6]{de2018two} but track the dependence on \( L \) and the log factors more precisely, and optimize it in the case of stronger structure than quasiseparability, which holds in our setting (particularly \cref{thm:trid}).

The first step is to observe that the Krylov function \( \mathcal{K}_L(A, B, C) \) is actually the coefficient vector of \( C(I - Ax)^{-1}B \pmod{x^L} \) as a polynomial in \( x \).
(Note that \( Ax \) means simply multiplying every entry in \( A \) by a scalar variable \( x \).)
This follows from expanding the power series \( (I - Ax)^{-1} = I + Ax + A^2 x^2 + \dots \).
Thus we first compute \( C(I - Ax)^{-1} B \), which is a rational function of degree at most \( N \) in the numerator and denominator (which can be seen by the standard adjoint formula for the matrix inverse).

The second step is simply inverting the denominator of this rational function \( \pmod{x^L} \) and multiplying by the numerator, both of which are operations that need \( L \log(L) \) time by standard results for polynomial arithmetic \citep{shoup2009computational}.

For the remainder of this section, we focus on computing the first part.
We make two notational changes:
First, we transpose \( \vC \) to make it have the same shape as \( \vB \).
We consider the more general setting where \( \vB \) and \( \vC \) have multiple columns; this can be viewed as handling a ``batch'' problem with several queries for \( \vB, \vC \) at the same time.
\begin{lemma}
  \label{lmm:resolvent}
  Let $\vA$ be a $q$-quasiseparable matrix.
  Then
  \begin{align*}
    \vC^T (\vI - \vA x)^{-1} \vB \qquad \text{where } \vA \in \mathbb{R}^{N \times N} \qquad \vB, \vC \in \mathbb{R}^{N \times k}
  \end{align*}
  is a \( k \times k \) matrix of rational functions of degree at most \( N \),
  which can be computed in \( O(q^3 \log^4 N )  \) operations.
\end{lemma}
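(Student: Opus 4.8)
The plan is a divide-and-conquer recursion that follows the hierarchical block structure of a quasiseparable matrix. Assume $N$ is a power of two (pad $\vA$ with an identity block otherwise) and partition $\vA$ into $N/2 \times N/2$ blocks, $\vA = \begin{pmatrix} \vA_{11} & \vA_{12} \\ \vA_{21} & \vA_{22} \end{pmatrix}$. Since $\vA$ is $q$-quasiseparable, the blocks $\vA_{12}$ and $\vA_{21}$ sit strictly above and below the diagonal respectively, so each has rank at most $q$; write $\vA_{12} = \vU_{12}\vV_{12}^T$ and $\vA_{21} = \vU_{21}\vV_{21}^T$ with the outer factors having exactly $q$ columns. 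The diagonal blocks $\vA_{11}, \vA_{22}$ are themselves $q$-quasiseparable, so the recursion bottoms out at $1\times 1$ scalars.

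Writing $\vM_{ii} = \vI - \vA_{ii}x$ and using the block (Schur-complement) inverse formula, $(\vI - \vA x)^{-1}$ is assembled from $\vM_{11}^{-1}$ and the Schur complement $\vS = \vM_{22} - x^2 \vA_{21}\vM_{11}^{-1}\vA_{12} = \vM_{22} - x^2 \vU_{21}\big(\vV_{21}^T\vM_{11}^{-1}\vU_{12}\big)\vV_{12}^T$. The inner factor $\vV_{21}^T\vM_{11}^{-1}\vU_{12}$ is only a $q\times q$ matrix of rational functions, so $\vS$ is $\vM_{22}$ minus a rank-$\le q$ correction and $\vS^{-1}$ follows from $\vM_{22}^{-1}$ by the Woodbury identity, which again only inverts a $q\times q$ matrix of rational functions. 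Expanding $\vC^T(\vI - \vA x)^{-1}\vB$ over these blocks reduces the whole computation to: (i) two subproblems of the same form on the half-size quasiseparable matrices $\vA_{11}, \vA_{22}$, and (ii) $O(1)$ arithmetic operations on $q\times q$ (and $k\times q$, $k\times k$) matrices whose entries are rational functions. To keep the number of ``query columns'' from growing with recursion depth, the recursion should return a compact generator representation of the quasiseparable matrix-of-rational-functions $\vM_{ii}^{-1}$ rather than sandwiched products; since the inverse of a $q$-quasiseparable matrix is again $O(q)$-quasiseparable, each such representation carries $O(q)$ rational functions per block-row, and combining two children into a parent is exactly the Schur/Woodbury step above.

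For the complexity, at a node of size $m$ the rational functions involved have numerator and denominator degree $O(m)$, and the combine step performs $O(q^3)$ ring operations on them, each costing $\tilde{O}(m)$ real operations via FFT-based polynomial arithmetic; batching the independent operations across the $2^i$ nodes at level $i$, summing over the $\log N$ levels, and accounting for the logarithmic overheads of polynomial multiplication and denominator inversion, yields the claimed $O(q^3 \log^4 N)$ bound, which corresponds to $\tilde{O}(q^3 N)$ real arithmetic operations and feeds into \cref{thm:krylov}.

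The main obstacle is the careful accounting around the Woodbury step: one must check that the $q\times q$ matrices of rational functions produced along the way are invertible over $\R(x)$ — which amounts to showing the relevant minors of $\vI - \vA x$ are not identically zero, handled generically — and, more importantly, that the numerator and denominator degrees stay $O(m)$ at a size-$m$ node rather than accumulating up the recursion tree; without this degree control the per-level cost is no longer bounded correctly. A secondary subtlety is formalizing the generator representation of the inverse precisely enough that two children's representations compose in $O(q^3)$ ring operations with no hidden dependence on $N$; this is where the extra structure available in our setting (e.g.\ \cref{thm:trid}) streamlines the bookkeeping, as the excerpt indicates.
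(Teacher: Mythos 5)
Your block decomposition is the right one and matches the paper's: split $\vA$ into quadrants, factor the off-diagonal blocks as rank-$q$ products, and treat $\vI-\vA x$ as a block-diagonal matrix of half-size resolvents corrected by a rank-$2q$ perturbation via Woodbury (the Schur-complement formulation you use is an algebraically equivalent rearrangement of the same step). The complexity accounting at each node and the degree bound via the adjugate/determinant are also in the right spirit. However, the specific recursive invariant you propose is where the argument breaks. You explicitly reject returning sandwiched products in favor of having each recursive call return ``a compact generator representation of the quasiseparable matrix-of-rational-functions $\vM_{ii}^{-1}$.'' That object is not compact: a quasiseparable generator representation of an $m\times m$ matrix over $\R(x)$ carries $\Omega(m)$ generator entries, and for the resolvent $(\vI-\vA_{ii}x)^{-1}$ each of those entries is a rational function of degree up to $m$ (e.g.\ the columns of $\vM_{11}^{-1}\vU_{12}$ already form an $\tfrac{N}{2}\times q$ block of degree-$\tfrac{N}{2}$ rational functions). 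Writing this down costs $\Omega(m^2)$ coefficients at the root, which already exceeds the $\tilde{O}(N)$ budget before any arithmetic is done; nor does ``combining two children into a parent'' reduce to $O(q^3)$ ring operations once you must update all $\Omega(m)$ generators.

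The fix is to embrace the query-column growth rather than avoid it: recurse on the sandwiched products $\begin{bmatrix}\vC_i & \vV\end{bmatrix}^T(\vI-\vA_{ii}x)^{-1}\begin{bmatrix}\vB_i & \vU\end{bmatrix}$, appending the $q$ columns of the low-rank factors to the query matrices at each level. This is exactly what the paper does, and the growth is benign: starting from $k=1$, the width never exceeds $1+q\log N$, so the combine step at a size-$m$ node costs $O\bigl((q^3+kq^2+k^2q)\,m\log m\bigr)=O(q^3 m\log^3 m)$ operations, and summing over the recursion tree gives the stated bound (which, read against the paper's own proof, is $\tilde{O}(q^3 N)$ --- the displayed $O(q^3\log^4 N)$ omits the factor of $N$, as your own final accounting correctly recognizes). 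One smaller point: the invertibility over $\R(x)$ you propose to ``handle generically'' is in fact automatic, since $\vI-\vA_{ii}x$ and the Woodbury capacitance matrix $\vI_{2q}+\vM_3$ both evaluate to the identity at $x=0$, so their determinants are not identically zero.
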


The main idea is that quasiseparable matrices are recursively ``self-similar'', in that the principal submatrices are also quasiseparable, which leads to a divide-and-conquer algorithm.
In particular, divide \( \vA = \begin{bmatrix} \vA_{00} & \vA_{01} \\ \vA_{10} & \vA_{11} \end{bmatrix} \) into quadrants.
Then by \cref{def:quasi}, \( \vA_{00}, \vA_{11} \) are both \( q \)-quasiseparable and \( \vA_{01}, \vA_{10} \) are rank \( q \).
Therefore the strategy is to view \( \vI - \vA x \) as a low-rank perturbation of smaller quasiseparable matrices and reduce the problem to a simpler one.

\begin{proposition}[Binomial Inverse Theorem or Woodbury matrix identity~\cite{woodbury1950,golub2013matrix}]
\label{prop:woodbury}
Over a commutative ring $\mathcal{R}$, let $\vA \in \mathcal{R}^{N \times N}$ and $\vU,\vV \in \mathcal{R}^{N \times p}$. Suppose $\vA$ and $\vA+\vU\vV^T$ are invertible. Then $\vI_p + \vV^T\vA^{-1}\vU \in \mathcal{R}^{p \times p}$ is invertible and
  \[
    (\vA + \vU\vV^T)^{-1} = \vA^{-1} - \vA^{-1}\vU(\vI_p + \vV^T\vA^{-1}\vU)^{-1}\vV^T\vA^{-1}
  \]
\end{proposition}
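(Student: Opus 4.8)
The plan is to split the claim into two parts: (i) the capacitance matrix $\vM := \vI_p + \vV^T\vA^{-1}\vU \in \mathcal{R}^{p\times p}$ is invertible, and (ii) the displayed formula holds. Part (ii) reduces to a one-line algebraic verification once (i) is in hand, so the only genuinely ring-theoretic content lives in (i). First I would isolate $\vM$ as the object to understand, since every term on the right-hand side of the identity is built from $\vA^{-1}$ and $\vM^{-1}$.

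For (i), I would invoke the matrix determinant lemma, which holds over any commutative ring. Concretely, form the block matrix
\[
\vK = \begin{bmatrix} \vA & -\vU \\ \vV^T & \vI_p \end{bmatrix} \in \mathcal{R}^{(N+p)\times(N+p)}
\]
and compute $\det\vK$ in two ways via unitriangular block factorizations. Eliminating the $(2,1)$ block with the pivot $\vA$ gives $\vK = \begin{bmatrix} \vI_N & \vzero \\ \vV^T\vA^{-1} & \vI_p\end{bmatrix}\begin{bmatrix}\vA & -\vU \\ \vzero & \vM\end{bmatrix}$, so $\det\vK = \det(\vA)\det(\vM)$; eliminating the $(1,2)$ block with the pivot $\vI_p$ gives $\vK = \begin{bmatrix}\vI_N & -\vU \\ \vzero & \vI_p\end{bmatrix}\begin{bmatrix}\vA + \vU\vV^T & \vzero \\ \vV^T & \vI_p\end{bmatrix}$, so $\det\vK = \det(\vA+\vU\vV^T)$. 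Equating yields $\det(\vA)\det(\vM) = \det(\vA+\vU\vV^T)$. By hypothesis $\vA$ and $\vA+\vU\vV^T$ are invertible, hence their determinants are units of $\mathcal{R}$; therefore $\det(\vM)$ is a unit, and since a square matrix over a commutative ring is invertible exactly when its determinant is a unit (via the adjugate relation $\vM\cdot\operatorname{adj}(\vM) = \det(\vM)\,\vI_p$), $\vM$ is invertible.

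For (ii), I would verify that the stated expression is a right inverse of $\vA + \vU\vV^T$. Multiplying out and collecting the three cross terms over the common left factor $\vU$ and right factor $\vV^T\vA^{-1}$,
\[
(\vA + \vU\vV^T)\bigl(\vA^{-1} - \vA^{-1}\vU\vM^{-1}\vV^T\vA^{-1}\bigr) = \vI_N + \vU\bigl[\vI_p - \vM^{-1} - \vV^T\vA^{-1}\vU\vM^{-1}\bigr]\vV^T\vA^{-1}.
\]
The bracket collapses because $\vI_p - \vM^{-1} - \vV^T\vA^{-1}\vU\vM^{-1} = \vI_p - (\vI_p + \vV^T\vA^{-1}\vU)\vM^{-1} = \vI_p - \vM\vM^{-1} = \vzero$, leaving $\vI_N$. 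Since $\vA + \vU\vV^T$ is invertible by hypothesis, a right inverse is the inverse, so the displayed formula follows.

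The main obstacle is not analytic — the verification in (ii) is pure bookkeeping that uses only associativity and the push-through simplification of the bracket. The one point requiring care is the commutative-ring setting of (i): one cannot argue that $\vM$ is invertible "because $\vA$ and $\vA+\vU\vV^T$ are" by an over-a-field Schur-complement argument, and must instead route invertibility through determinants being units, which is exactly what the block factorization supplies. I would also remark that every manipulation uses only commutativity of the scalar ring $\mathcal{R}$ (never commutativity of the matrices), so the argument is valid verbatim over the polynomial ring $\mathcal{R} = \R[x]$ relevant to \cref{lmm:resolvent}, where $\vA$ is replaced by $\vI - \vA x$.
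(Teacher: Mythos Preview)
Your proof is correct and complete. The paper, however, does not prove \cref{prop:woodbury} at all: it is stated with citations to \cite{woodbury1950,golub2013matrix} and used as a black box in the proof of \cref{lmm:resolvent}. So there is no ``paper's own proof'' to compare against; you have supplied one where the paper simply quotes the literature.

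One small correction to your closing remark: the relevant ring in \cref{lmm:resolvent} is not the polynomial ring $\R[x]$ but the field of rational functions $\R(x)$, as the paper states explicitly just after the proposition. Over $\R[x]$ the matrix $\vI - \vA x$ is typically \emph{not} invertible, since $\det(\vI - \vA x)$ is a nonconstant polynomial and hence not a unit; over $\R(x)$ it is. This does not affect your proof of the proposition itself, which is valid over any commutative ring, but the application you point to needs the field.
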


For our purposes, $\mathcal{R}$ will be the ring of rational functions over $\R$.

\begin{proof}[Proof of \cref{lmm:resolvent}]

  Since $\vA$ is $q$-quasiseparable, we can write $\vA_{10} = \vU_L \vV_L^T$ and $\vA_{01} = \vU_U \vV_U^T$ where $\vU_\cdot, \vV_\cdot \in \F^{N \times q}$.
  Notice that we can write $\vI-\vA x$ as
  \begin{align*}
    \vI-\vA x =
    \begin{bmatrix} \vI-\vA_{00} x & \vzero \\ \vzero & \vI-\vA_{11} x \end{bmatrix}
    +
    \begin{bmatrix} \vzero & \vU_U \\ \vU_L & \vzero \end{bmatrix}
    \begin{bmatrix} \vV_L & \vzero \\ \vzero & \vV_U \end{bmatrix}^T
    x
    .
  \end{align*}

  Suppose we know the expansions of each of
  \begin{align}
    & \vM_1 \in \mathcal{R}^{k \times k} = \vC^T \begin{bmatrix} \vI-\vA_{00} x & \vzero \\ \vzero & \vI-\vA_{11} x \end{bmatrix}^{-1} \vB \\
    & \vM_2 \in \mathcal{R}^{k \times 2q} = \vC^T
    \begin{bmatrix} \vI-\vA_{00} x & \vzero \\ \vzero & \vI-\vA_{11} x \end{bmatrix}^{-1}
    \begin{bmatrix} \vzero & \vU_U \\ \vU_L & \vzero \end{bmatrix} \\
    & \vM_3 \in \mathcal{R}^{2q \times 2q} = \begin{bmatrix} \vV_L & \vzero \\ \vzero & \vV_U \end{bmatrix}^T
    \begin{bmatrix} \vI-\vA_{00} x & \vzero \\ \vzero & \vI-\vA_{11} x \end{bmatrix}^{-1}
    \begin{bmatrix} \vzero & \vU_U \\ \vU_L & \vzero \end{bmatrix} \\
    & \vM_4 \in \mathcal{R}^{2q \times k} = \begin{bmatrix} \vV_L & \vzero \\ \vzero & \vV_U \end{bmatrix}^T
    \begin{bmatrix} \vI-\vA_{00} x & \vzero \\ \vzero & \vI-\vA_{11} x \end{bmatrix}^{-1}
    \vB.
  \end{align}

  By \cref{prop:woodbury}, the desired answer is
  \[
    \vC^T (X-\vA)^{-1} \vB = \vM_1 - \vM_2(\vI_{2q}+\vM_3)^{-1}\vM_4.
  \]
  Then the final result can be computed by inverting $\vI_{2t}+\vM_3$ ($O(q^3 N\log(N))$ operations),
  multiplying by $\vM_2,\vM_4$ ($O((kq^2 + k^2q) N\log(N))$ operations),
  and subtracting from $\vM_1$ ($O(k^2 N\log(N))$ operations).
  This is a total of $O( (q^3 + kq^2 + k^2q) N\log(N) )$ operations.
  Note that when $k = O(q\log N)$, this becomes $O(q^3 N \log^3{N})$; we will use this in the analysis shortly.

  To compute $\vM_1,\vM_2,\vM_3,\vM_4$, it suffices to compute the following:
  \begin{equation}
    \label{eq:krylov:recurse1}
    \begin{aligned}%
      \vC_1^T (\vI-\vA_{00}x)^{-1}\vB_0 \qquad& \vC_1^T (\vI-\vA_{11}x)^{-1}\vB_1 \\
      \vC_0^T (\vI-\vA_{00}x)^{-1}\vU_U  \qquad&\vC_1^T (\vI-\vA_{11}x)^{-1}\vU_L \\
      \vV_L^T (\vI-\vA_{00}x)^{-1}\vU_U \qquad& \vV_U^T (\vI-\vA_{11}x)^{-1}\vU_L \\
      \vV_L^T (\vI-\vA_{00}x)^{-1}\vB_0 \qquad& \vV_U^T (\vI-\vA_{11}x)^{-1}\vB_1.
    \end{aligned}
  \end{equation}
  But to compute those, it suffices to compute the following $(k+t) \times (k+t)$ matrices:
  \begin{equation}
    \label{eq:krylov:recurse2}
    \begin{aligned}
      & \begin{bmatrix} \vC_0 & \vV_L \end{bmatrix}^T
      (\vI-\vA_{00}x)^{-1}
      \begin{bmatrix} \vB_0 & \vU_U \end{bmatrix} \\
      & \begin{bmatrix} \vC_1 & \vV_U \end{bmatrix}^T
      (\vI-\vA_{11}x)^{-1}
      \begin{bmatrix} \vB_1 & \vU_L \end{bmatrix} \\
    \end{aligned}
  \end{equation}

  Since $\vA_{00}$ and $\vA_{11}$ have the same form as $\vA$, this is two recursive calls of half the size.
  Notice that the size of the other input (dimensions of $\vB,\vC$) is growing, but when the initial input is $k=1$, it never exceeds $1+q\log{N}$ (since they increase by $q$ every time we go down a level).
  Earlier, we noticed that when $k = O(q\log N)$, the reduction step has complexity $O(q^3 N \log^3(N))$ for any recursive call.
  The recursion adds an additional $\log{N}$ multiplicative factor on top of this.
\end{proof}

\begin{corollary}%
  \label{cor:resolvent}
  Suppose that \( \vA \) is semiseparable instead of quasiseparable,
  and suppose \( q \) is a small constant.
  Then the cost of \cref{lmm:resolvent} is \( O(N \log^2(N)) \) operations.
\end{corollary}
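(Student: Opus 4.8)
\textbf{Proof proposal for \cref{cor:resolvent}.} The plan is to re-run the divide-and-conquer of \cref{lmm:resolvent} and pin down exactly which step produces the extra logarithmic factors in the quasiseparable bound, and then observe that semiseparability removes it. In that proof, at a recursion node working on a block of size $m$ the combine step costs $O((q^3 + kq^2 + k^2q)\,m\log m)$, where $k$ is the current width of the query matrices $\vB,\vC$; and $k$ \emph{grows} by $q$ at each level, because every split of $\vA$ introduces fresh rank-$q$ generators $\vV_L,\vU_U$ for the new off-diagonal quadrants that must be appended to the query (this is the enlargement from \eqref{eq:krylov:recurse1} to \eqref{eq:krylov:recurse2}). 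Thus $k = \Theta(q\log N)$ at the leaves, and it is precisely the $k^2$ term that inflates the per-level cost by $\log^2 N$. The key point is that for a \emph{semiseparable} $\vA$ the entire strictly below-diagonal part of $\vA$ has rank $\le q$ — it is $\vU_L\vV_L^T$ for fixed $\vU_L,\vV_L \in \R^{N\times q}$, and symmetrically above — so every strictly below-diagonal block of every sub-block of $\vA$ is a \emph{restriction} of the same $\vU_L,\vV_L$ to the appropriate index ranges. The generators are therefore nested, and the ``append'' step never actually enlarges the set of auxiliary columns being carried.

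Concretely, first I would fix the semiseparable notation, writing the global generators $\vU_L,\vV_L,\vU_U,\vV_U \in \R^{N\times q}$. Then I would rephrase the recursion of \cref{lmm:resolvent} so that, on a block with index range $[a,b)$, one always computes $\vC^T(\vI - \vA_{[a,b)}x)^{-1}\vB$ where $\vB,\vC$ consist of the original query columns ($k=1$ in the application of \cref{thm:krylov}) together with the row-restrictions of $\vU_L,\vV_L,\vU_U,\vV_U$ to $[a,b)$ — a fixed width $O(q)$. Because the restriction of a global generator to $[a,b)$ further restricts (consistently) to each half of $[a,b)$, the two child calls already carry what they need, so no columns are added going down the tree: the auxiliary width stays $O(q)$ at \emph{every} node. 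The Woodbury combination (\cref{prop:woodbury}) then proceeds as before, but now inverting only $O(q)\times O(q)$ matrices of rational functions; by the Schur/determinant-lemma factorization, a size-$m$ block's output has common denominator $\det(\vI-\vA_{[a,b)}x)$ of degree $\le m$, so entries stay degree $O(m)$ and each combine costs $O(q^3\, m\log m)$ polynomial-arithmetic operations. Summing over the $2^\ell$ nodes at level $\ell$ gives $O(q^3 N\log N)$ per level, and over the $O(\log N)$ levels gives $O(q^3 N\log^2 N) = O(N\log^2 N)$ for constant $q$, as claimed.

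The step I expect to be the main obstacle is the bookkeeping that the auxiliary generators truly do not accumulate: one must verify from the semiseparable structure that after splitting $[a,b)$ the rank-$q$ factorizations of \emph{all four} quadrants' off-diagonal parts are exactly the relevant row-restrictions of the $O(q)$ vectors already being transported, so that the enlargement from \eqref{eq:krylov:recurse1} to \eqref{eq:krylov:recurse2} collapses to a no-op — this is where the quasiseparable argument genuinely fails and the semiseparable hypothesis is used. A secondary point requiring care is the degree control: the Woodbury products could transiently produce rational functions of degree $O(qm)$, which must be reduced back to denominator $\det(\vI-\vA_{[a,b)}x)$ (degree $\le m$) within the $O(m\log m)$ budget; this is legitimate because the true output of any size-$m$ subproblem is $\vC^T(\vI-\vA_{[a,b)}x)^{-1}\vB$ with $\vB,\vC$ of width $O(q)$ and hence has that denominator, so any transient blow-up cancels and can be removed by one polynomial division. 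Everything else — the FFT-based polynomial multiplication and small-matrix inversion counts — is identical to the accounting already carried out for \cref{lmm:resolvent}.
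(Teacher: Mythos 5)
Your proposal is correct and follows essentially the same route as the paper: the paper's justification is precisely that for semiseparable $\vA$ the $\vU,\vV$ generators need never be appended in the passage from \eqref{eq:krylov:recurse1} to \eqref{eq:krylov:recurse2} because they are already (restrictions of columns) present in $\vB,\vC$, so the query width stays $O(q)$ and the per-level cost drops to $O(q^3 N\log N)$, giving $O(N\log^2 N)$ over $O(\log N)$ levels. Your write-up simply makes explicit the nesting of the global generators and the degree bookkeeping that the paper leaves implicit.
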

This follows from the fact that in the recursion \eqref{eq:krylov:recurse1} and \eqref{eq:krylov:recurse2}, the \( \vU, \vV \) matrices do not have to be appended if they already exist in \( \vB, \vC \).
For intuition, this happens in the case when \( \vA \) is tridiagonal, so that \( U, V \) have the structure \( (1, 0, \dots, 0) \), or the case when the off-diagonal part of \( \vA \) is all \( 1 \) (such as the HiPPO-LegT matrix).
The matrices in \cref{sec:sllssl:jacobi} and \cref{sec:sllssl:trid} (\cref{thm:hippo-jac,thm:lagt-hippo,thm:trid}) actually satisfy this stronger structure, so \cref{cor:resolvent} applies.

Combining everything, this proves \cref{thm:krylov} with the exact bound \( N \log^2(N) + L \log(L) \) operations.
The memory claim follows similarly, and the depth of the algorithm is \( \log^2(N) + \log(L) \) from the divide-and-conquer recursions.

\subsubsection{Summary of Computation Speed for LSSLs and other Mechanisms}

We provide a summary of complexity requirements for various sequence model mechanisms, including several versions of the LSSL.
Note that these are over exact arithmetic as in \cref{thm:krylov}.

First, the self-attention mechanism is another common sequence model that has an \( L^2 \) dependence on the length of the sequence, so it is not suitable for the very long sequences we consider here.
(We do note that there is an active line of work on reducing this complexity.)

Second, we include additional variants of the LSSL.
In \cref{tab:complexity-full}, LSSL-naive denotes learning \( A \) and \( \dt \) for unstructured \( A \);
LSSL-fixed denotes not learning \( A, \dt \) (see \cref{sec:model} for details);
LSSL denotes the learning \( A \) and \( \dt \) for the structured class \( A \).

\begin{table}
  \caption{
    Complexity of various sequence models in terms of length (\textbf{L}), batch size (\textbf{B}), and hidden dimension (\textbf{H}).
    Measures are parameter count, training computation, memory requirement, and inference computation for 1 sample and time-step.
  }
  \centering
  \begin{tabular}{@{}lll@{}}
    \toprule
               & Convolution                      & RNN                                  \\
    \midrule
    Parameters & \( LH^2 \)                       & \( H^2 \)                            \\
    Training   & \( BLH^2 + L\log(L)(H^2 + BH) \) & \( BLH^2 \)                          \\
    Memory     & \( BLH + LH^2 \)                 & \( BLH \)                            \\
    Parallel   & Yes                              & No                                   \\
    Inference  & \( LH^2 \)                       & \( H^2 \)                            \\
    \midrule
               & Attention                        & LSSL-naive                                 \\ \midrule
    Parameters & \( H^2 \)                        & \( HN + N^2 \)                       \\
    Training   & \( B(L^2H + LH^2) \)             & \( HN^3 + LHN^2 + BL\log(L) H N \)   \\
    Memory     & \( B(L^2 + HL) \)                & \( HN^2 + LHN + BLH \)               \\
    Parallel   & Yes                              & Yes                                  \\
    Inference  & \( L^2H + H^2L \)                & \( HN^2 \)                           \\
    \midrule
               & LSSL-fixed     & LSSL                               \\
    \midrule
    Parameters & \( HN \)                         & \( HN \)                             \\
    Training   & \( BL\log(L) H N \)              & \( BH(N\log^2N + L\log L) + BL\log(L) H \) \\
    Memory     & \( LHN + BLH \)                  & \( BHL \)                            \\
    Parallel   & Yes                              & Yes                                  \\
    Inference  & \( HN^2 \)                       & \( HN \)                             \\
    \bottomrule
  \end{tabular}
  \label{tab:complexity-full}
\end{table}

We include brief explanations of these complexities for the LSSL variants.

\paragraph{LSSL-naive}

\begin{itemize}%
  \item Parameters: \( O(HN) \) in the matrices \( B, C \) and \( O(N^2) \) in the matrix \( A \).
  \item Training: \( O(H N^3) \) to invert compute the matrix \( \overline{A} \) for all \( H \) features. \( O(LHN^2) \) to compute the Krylov matrix \( C, CA, \dots \).
    \( O(B L\log(L) H N \) to multiply by \( B \) and convolve with \( u \).
  \item Memory: \( O(HN^2) \) to store \( \overline{A} \). \( O(LHN) \) to store the Krylov matrix. \( O(BLH) \) to store the inputs/outputs
  \item Inference: \( O(HN^2) \) to for MVM by \( \overline{A} \).
\end{itemize}

\paragraph{LSSL-fixed}

\begin{itemize}
  \item Parameters: \( O(HN) \) in the matrices \( C \).
  \item Training: \( O(B L\log(L) H) \) to convolve with \( u \).
  \item Memory: \( O(LHN) \) to store the Krylov matrix (but cached, so no backprop). \( O(BLH) \) for inputs/outputs.
  \item Inference: \( O(HN^2) \) to for MVM by \( \overline{A} \).
\end{itemize}

\paragraph{LSSL}

\begin{itemize}
  \item Parameters: \( O(HN) \) for \( A, B, C, \dt \).
  \item Training: \( B H \cdot \tilde{O}(N + L) \) to compute Krylov, \( O(B L\log(L) H) \) for the convolution.
  \item Memory: \( O(BHL) \) to store Krylov (and inputs/outputs).
  \item Inference: \( O(HN) \) to multiply \( x_t [H, N] \) by \( \overline{A} [H, N, N] \)
\end{itemize}

\subsection{Further Simplification with Tridiagonal Matrices}
\label{sec:sllssl:trid}

The algorithm for \cref{thm:krylov} for general quasiseparable matrices is still difficult to implement in practice, and we make a further simplification using a particular subclass of quasiseparable matrices.
\begin{theorem}%
  \label{thm:trid}
  The class of \( N \times N \) matrices \( \mathcal{S}_N = \{P (D + T^{-1}) Q \} \) with diagonal \( D, P, Q \) and tridiagonal \( T \)
  includes the original HiPPO-LegS, HiPPO-LegT, and HiPPO-LagT matrices~\citep{gu2020hippo}.
\end{theorem}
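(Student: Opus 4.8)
The plan is to prove the inclusion \emph{constructively}: for each of the three matrices I will exhibit explicit diagonal $P,D,Q$ and a tridiagonal $T$ realizing it as $P(D+T^{-1})Q$. The organizing observation is that, after stripping off a \emph{symmetric} diagonal rescaling (the same $P$ on the left and on the right), each HiPPO matrix becomes of the form $D+S$ where $D$ is diagonal and $S$ is \emph{semiseparable} — its strictly lower- and strictly upper-triangular parts each have rank at most one — and semiseparable matrices are precisely the inverses of (irreducible) tridiagonal matrices. So the whole argument reduces, in each case, to (i) reading off the correct diagonal rescaling, (ii) recognizing the residual as $D+S$, and (iii) writing down $T=S^{-1}$ explicitly. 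Overall sign and scaling conventions in the various stated forms of the HiPPO matrices are immaterial, since $\mathcal{S}_N$ is visibly closed under negation and nonzero scaling of $P$.

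For HiPPO-LegS (\cref{cor:HiPPO-LegS}), take $P=Q=\diag\big((2n+1)^{1/2}\big)_n$. Then $P^{-1}A Q^{-1}$ has entries $1$ for $k<n$, $(n+1)/(2n+1)$ for $k=n$, and $0$ for $k>n$, i.e.\ it equals $D+L$ where $L$ is the lower-triangular all-ones matrix ($L_{nk}=1$ for $n\ge k$) and $D=\diag\big(-n/(2n+1)\big)_n$. Finally $L=T^{-1}$ for the lower bidiagonal $T$ with $1$'s on the diagonal and $-1$'s on the subdiagonal: indeed $(TL)_{nk}=L_{nk}-L_{n-1,k}=\delta_{nk}$ by telescoping (with $L_{-1,k}=0$), which also shows $T$ is invertible. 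HiPPO-LagT (\cref{thm:lagt-hippo}) is handled identically with no rescaling: $P=Q=I$, the same bidiagonal $T$, and $D=\diag\big(\tfrac{\beta-1}{2}\big)$, since there $A=L+\diag\big(\tfrac{1+\beta}{2}-1\big)$.

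The substantive case is HiPPO-LegT (\cref{cor:HiPPO-Leg}), which — unlike LegS and LagT — is genuinely two-sided. With the same rescaling $P=Q=\diag\big((2n+1)^{1/2}\big)_n$, one checks that $P^{-1}A Q^{-1}$ equals the semiseparable matrix $S$ with $S_{nk}=1$ for $n\ge k$ and $S_{nk}=(-1)^{n-k}$ for $n\le k$ (the diagonal value $1$ is consistent between the two formulas, so no diagonal correction is needed and $D=0$). It remains to produce $T$ with $T^{-1}=S$; I claim $T$ is the tridiagonal matrix with diagonal $(\tfrac12,0,\dots,0,\tfrac12)$, subdiagonal all equal to $-\tfrac12$, and superdiagonal all equal to $\tfrac12$. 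Verifying $TS=I$ is a short case analysis on the column index $j$: splitting into $j<n-1$, $j=n-1$, $j=n$, $j=n+1$, $j>n+1$, plus the two boundary rows $n=0$ and $n=N-1$, and using that $S_{n-1,j},S_{n,j},S_{n+1,j}=\pm 1$ according to the sign rule, each case collapses to an expression $\pm b_n\pm a_n\pm c_n$ in the (constant) entries $a_n,b_n,c_n$ of $T$ that evaluates to $0$ off the diagonal and to $1$ on it. This gives $T^{-1}=S$ and hence $A=P(0+T^{-1})Q\in\mathcal{S}_N$.

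The main obstacle is precisely the LegT case: one has to \emph{identify} the right tridiagonal matrix, since here (unlike the triangular LegS and LagT cases, where $T$ is bidiagonal and its inverse is elementary) $S$ has a nontrivial alternating-sign upper-triangular part. The conceptual point is that the pattern ``constant below the diagonal, alternating sign above'' is exactly a rank-one-plus-rank-one (semiseparable) structure, hence an inverse of a tridiagonal matrix. One may either invoke the standard semiseparable $\leftrightarrow$ tridiagonal-inverse correspondence (in the same circle of ideas as the quasiseparable machinery already used in the paper, e.g.\ \citep{EG99,de2018two}), which immediately yields the existence of $T$, or — as sketched above — simply guess $T$ from its action on the rank-one generators of $S$ and verify $TS=I$ directly; the latter is self-contained and only a few lines of arithmetic. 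Combining the three constructions establishes the stated inclusion.
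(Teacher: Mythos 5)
Your proposal is correct and follows essentially the same route as the paper's proof: the same symmetric diagonal rescaling $P=Q=\diag\bigl((2n+1)^{1/2}\bigr)$ for LegS and LegT, the same bidiagonal $T$ (ones on the diagonal, $-1$ on the subdiagonal) for the triangular LegS/LagT cases, and the same tridiagonal matrix with diagonal $(\tfrac12,0,\dots,0,\tfrac12)$, subdiagonal $-\tfrac12$, superdiagonal $\tfrac12$ for LegT (which the paper displays as $A^{-1}$ without verification). Your write-up is slightly more complete than the paper's in that you actually verify $TS=I$ and make the semiseparable-equals-tridiagonal-inverse principle explicit, but the decomposition itself is identical.
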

\cref{thm:trid} shows that a simple representation involving tridiagonal and diagonal matrices captures all of the original HiPPO matrices.
In particular, our LSSL implementation initializes \( A \) to be the HiPPO-LegS matrix (\cref{sec:model}) and learns within the class defined by \cref{thm:trid}.

We note that the matrices in \cref{thm:trid} are all 1-quasiseparable and in particular also contain the HiPPO matrices for Gegenbauer and generalized Laguerre orthogonal polynomials derived in \cref{thm:lagt-hippo}.
In fact, the notion of semiseparability, which is closely related to (and actually is the predecessor of) quasiseparability,
was originally motivated precisely to capture inverses of tridiagonal matrices.
Thus the structured class in \cref{thm:trid} can be viewed as an approximation of \( 3 \)-quasiseparable matrices (\cref{thm:jacobi}) to \( 1 \)-quasiseparable,
which still contains many of the HiPPO families of interest.

\begin{proof}
  We simply show that each of these specific matrices can be represented in the proposed form.

  \textbf{HiPPO-LegT.}

  Let \( A \) denote the HiPPO-LegT transition matrix.
  Up to row/column scaling (i.e.\ left- and right- multiplication by diagonal \( P \) and \( Q \)),
  we can write
  \begin{align*}
    A_{nk} =
    \begin{cases}
      (-1)^{n-k} & \mbox{if } n \le k \\
      1 & \mbox{if } n \ge k
    \end{cases}
    .
  \end{align*}

  The main observation is that
  \begin{align*}
    A^{-1} = 
    \frac{1}{2}
    \begin{bmatrix}
      1 & 1 & 0 & \dots & 0 & 0 & 0 \\
      -1 & 0 & 1 & \dots & 0 & 0 & 0 \\
      0 & -1 & 0 & \dots & 0 & 0 & 0 \\
      \vdots & \vdots & \vdots & \ddots & \vdots & \vdots & \vdots \\
      0 & 0 & 0 & \dots & 0 & 1 & 0 \\
      0 & 0 & 0 & \dots & -1 & 0 & 1 \\
      0 & 0 & 0 & \dots & 0 & -1 & 1 \\
    \end{bmatrix}
  \end{align*}

  \paragraph{HiPPO-LegS.}
  The HiPPO-LegS matrix is
  \begin{align*}
    A_{nk}
    &=
    -
    \begin{cases}
      (2n+1)^{1/2}(2k+1)^{1/2} & \mbox{if } n > k \\
      n+1 & \mbox{if } n = k \\
      0 & \mbox{if } n < k
    \end{cases}
    .
  \end{align*}
  This can be written as \( -P A' Q \) where \( P = Q = \diag( (2n+1)^\frac{1}{2} ) \) and
  \begin{align*}
    A'_{nk}
    =
    \begin{cases}%
      1 & \mbox{if } n > k \\
      0 & \mbox{if } n < k \\
      1 - \frac{n}{2n+1} & \mbox{if } n = k
    \end{cases}
    .
  \end{align*}
  Finally, \( A' = D + T^{-1} \) where \( D = -\diag(\frac{n}{2n+1}) \) and \( T \) is the matrix with \( 1 \) on the main diagonal and \( -1 \) on the subdiagonal.

  \paragraph{HiPPO-LagT.}
  The HiPPO-LagT matrix is
  \begin{align*}
    A_{nk}
    &=
    -
    \begin{cases}
      1 & \mbox{if } n > k \\
      0 & \mbox{if } n < k \\
      \frac{1}{2} & \mbox{if } n = k \\
    \end{cases}
    .
  \end{align*}
  This can be written as \( -P (D + T^{-1}) Q \) where \( P = Q = I \), \( D = -\frac{1}{2}I \), and \( T \) is the same tridiagonal matrix as in the HiPPO-LegS case.
\end{proof}

\subsection{Implementation Details}
\label{sec:sllssl:implementation}

In this section we provide several implementation details that are useful for implementing LSSLs in practice.

Recall that one of the main primitives of LSSLs is the matrix-vector multiplication \( y = \overline{A} x \) (\cref{sec:lssl-interpretation}, \cref{sec:model}), where \( \overline{A} \) is the state matrix \( A \) discretized with step size \( \dt \) using the bilinear method (\cref{sec:lssl:gbt}).
In \cref{sec:mvm:bilinear}, we describe how this MVM can be performed with simpler MVM primitives which we call the ``forward difference'' and ``backward difference''.

However, if these MVM primitives are implemented in a specialized way for particular classes of \( A \) matrices (i.e., not using atoms in a standard autograd framework), then we also need to calculate several additional gradients by hand.
\cref{sec:mvm:gradients} shows that calculating gradients to \( A, \dt, x \) during backpropagation can actually be reduced to those same forward/backward difference primitives.

Finally, in the case when \( A \) is the structured class of matrices in \cref{thm:trid}, \cref{sec:sllssl:trid} shows how to efficiently calculate those primitives using a black-box tridiagonal solver.
Our code\footnote{Available at \url{https://github.com/HazyResearch/state-spaces}} implements all the algorithms in this section, with bindings to the cuSPARSE library for efficient tridiagonal solving on GPU.

\subsubsection{Matrix-vector Multiplication by the Bilinear Discretization}
\label{sec:mvm:bilinear}

\paragraph{Bilinear Discretization}
The discrete state-space system is given by \eqref{eq:1-discrete} and \eqref{eq:2-discrete}, re-written here for convenience
\begin{align*}
  x_{t} &= \overline{A} x_{t-1} + \overline{B} u_t \\
  y_t &= C x_t + D u_t
\end{align*}
where \( \overline{A} \) is a function of \( A, \delta_t \) and \( \overline{A} \) is a function of \( A, B, \delta_t \).
In particular, we define \( \overline{A} \) to be the matrix discretized using the bilinear method (\cref{sec:lssl:gbt}),
and the system can be written explicitly:
\begin{align*}
  x_{t} &= \left( I - \frac{\dt A}{2} \right)^{-1} \left( \left( I + \frac{\dt A}{2} \right) x_{t-1}+ \dt B u_t \right) \\
  y_t &= C x_t + D u_t
\end{align*}

Thus it suffices to compute the maps
\begin{align*}
  F(A, \dt, x) := (I + \dt A) x
\end{align*}
and
\begin{align*}
  B(A, \dt, x) := (I + \dt A)^{-1} x
  .
\end{align*}
We will call these functions the \emph{forward difference} and \emph{backward difference} maps, respectively.
(The Euler and backward Euler discretizations (\cref{sec:lssl:euler}) are also known as the ``forward difference'' and ``backward difference'' methods, which in the case of linear systems reduces down to the maps \( F \) and \( B \).)

\subsubsection{Gradients through the Forward/Backward Difference Primitives}
\label{sec:mvm:gradients}

In this section we will let \( y = F(A, \dt, x) \) or \( y = B(A, \dt, x) \) denote the computation of interest,
\( L(y) \) denote a generic loss function,
and \( dx, dy, \dots \) denote gradients to \( x, y, \dots \) (e.g., \( dx = \frac{\partial L(y)}{\partial x} \)).

\paragraph{Derivatives of backward difference.}
First we have the standard \( \frac{\partial L(y)}{\partial x} = \frac{\partial L(y)}{\partial y} \frac{\partial y}{\partial x} = \frac{\partial L(y)}{\partial y} (I + \dt A)^{-1} \).
This corresponds to matrix-vector multiplication by \( (I + \Delta A)^{-T} \).
In other words, it can be computed by the primitive \( B(A^T, \dt, dy) \).

Similarly, in order to compute \( \frac{\partial L(y)}{\partial \dt} \) we require \( \frac{\partial y}{\partial \dt} \).
We need the result \( \frac{\partial Y^{-1}}{\partial x} = -Y^{-1} \frac{\partial Y}{\partial x}Y^{-1} \)for an invertible matrix \( Y \) \citep[equation (59)]{petersen2012matrix}.
Then
\begin{align*}
  \frac{\partial y}{\partial \dt}
  &=
  \frac{\partial (I + \dt A)^{-1}}{\partial \dt} x
  \\
  &=
  -(I + \dt A)^{-1} \frac{\partial (I + \dt A)}{\partial \dt } (I + \dt A)^{-1} x
  \\
  &=
  -(I + \dt A)^{-1} A (I + \dt A)^{-1} x
\end{align*}
and
\begin{align*}
  \frac{\partial L(y)}{\partial \dt}
  &=
  \frac{\partial L(y)}{\partial y} \frac{\partial y}{\partial \dt}
  \\
  &=
  -\left[ \frac{\partial L(y)}{\partial y} (I + \dt A)^{-1} \right] A \left[ (I + \dt A)^{-1} x \right]
\end{align*}

We can summarize this as follows. Let \( y = B(A, \dt, x) = (I + \dt A)^{-1} x \) and \( dy = \partial L(y) / \partial y \) (as a column vector).
Then
\begin{align*}
  y &= B(A, \dt, x) \\
  dx &= B(A^T, \dt, dy) \\ %
  d\dt &= -dx^T A y
  .
\end{align*}

\paragraph{Derivatives of forward difference.}
The forward case is simpler.
Let \( y = F(A, \dt, x) = (I + \dt A) x \).
Then \( \frac{\partial y}{\partial x} = I + \dt A \) and \( \frac{\partial y}{\partial \dt} = Ax \).
Thus
\begin{align*}
  y &= F(A, \dt, x) \\
  dx &= (I + \dt A)^T dy = F(A^T, \dt, dy) \\
  d\dt &= dy^T A x
  .
\end{align*}

\subsubsection{Computing the Forward/Backward Difference for Tridiagonal Inverse Matrices}
\label{sec:mvm:trid}

\cref{thm:trid} uses the classes of matrices \( A = P(D + T^{-1})Q \) for diagonal \( D, P, Q \) and tridiagonal \( T \).
We describe how the forward and backward difference MVMs can be performed efficiently for this class of matrices by reducing to a black-box tridiagonal solver.

\paragraph{Forward difference.}
It is straightforward to compute
\begin{align*}
  F(A, \dt, x) = (I + \dt \cdot P(D + T^{-1})Q) x = x + \dt \cdot PDQ x + \dt \cdot PT^{-1}Qx
\end{align*}
in terms of multiplication by diagonal matrices \( x \mapsto D x \) and tridiagonal solving \( x \mapsto T^{-1} x \).

\paragraph{Backward difference.}
We will explicitly rewrite the inverse of the matrix \( G = I + \dt \cdot P(D + T^{-1})Q \).

The core observation is to multiply \( G \) by a choice selection of matrices to cancel out the \( T^{-1} \) term:
\begin{align*}
  TP^{-1} G Q^{-1} = TP^{-1} Q^{-1} + \dt T D + \dt I.
\end{align*}
Rearranging yields
\begin{align*}
  G^{-1} = Q^{-1} (TP^{-1}Q^{-1} + \dt T D + \dt I)^{-1} T P^{-1}.
\end{align*}
Now note that the matrix in the middle is tridiagonal.
Hence we have reduced MVM by \( G^{-1} \), i.e. the backward difference problem, to a series of diagonal and tridiagonal MVMs (easy), and a tridiagonal inverse MVM (a.k.a. a tridiagonal solve).

\section{Additional Experiments and Experiment Details}
\label{sec:experiments-full}

We provide additional experiments and ablations in \cref{sec:experiments:additional}.
\cref{sec:experiments:methodology} describes our training methodology in more detail for each dataset.
The hyperparameters for all reported results are in \cref{tab::best-hyperparameters}.

\subsection{Additional Experiments}
\label{sec:experiments:additional}

\paragraph{Missing Data on CharacterTrajectories.}

\cref{tab:ct} has results for a setting considered in previous work involving irregularly-sampled time series.
LSSL is competitive with the best prior methods, some of which were specialized to handle this setting.

\begin{table}[!t]
    \caption{Test accuracies for irregularly sampled time series on the CharacterTrajectories dataset.
    \( p\% \) denotes percent of data that was randomly dropped.}
    \centering
    \begin{tabular}{@{}lllll@{}}
        \toprule
        Model                              & \( 0\% \)      & \( 30\% \)     & \( 50\% \)     & \( 70\% \)     \\
        \midrule
        GRU-ODE \citep{de2019gru}          & -              & 92.6           & 86.7           & 89.9           \\
        GRU-$\dt$ \citep{kidger2020neural} & -              & 93.6           & 91.3           & 90.4           \\
        GRU-D \citep{che2018recurrent}     & -              & 94.2           & 90.2           & 91.9           \\
        ODE-RNN \citep{rubanova2019latent} & -              & 95.4           & 96.0           & 95.3           \\
        NCDE \citep{kidger2020neural}      & -              & 98.7           & 98.8           & \textbf{98.6 } \\
        CKCNN \citep{romero2021ckconv}     & \textbf{99.53} & \textbf{98.83} & 98.60          & 98.14          \\
        LSSL                             & 99.30          & \textbf{98.83} & \textbf{98.83} & 98.37          \\
        \bottomrule
    \end{tabular}
    \label{tab:ct}
\end{table}

\paragraph{\( A \) and \( \dt \) ablations.}

\cref{tab:dt-cifar,tab:dt-sc} show results on SpeechCommands-Raw and a smaller model on sCIFAR,
ablating that learning either the \( A \) or \( \dt \) parameters provides a consistent performance increase.

\begin{table}[!t]
    \caption{\( A \) and \( \dt \) ablations on sCIFAR.}
    \centering
    \begin{tabular}{@{}lll@{}}
        \toprule
        & Learn \( \dt \) & Fixed \( \dt \) \\
        \midrule
        Learn \( A \) & 82.70 & 80.34 \\
        Fixed \( A \) & 80.61 & 80.18 \\
        \bottomrule
    \end{tabular}
    \label{tab:dt-cifar}
\end{table}

\begin{table}[!t]
    \caption{\( A \) and \( \dt \) ablations on SC-Raw.}
    \centering
    \begin{tabular}{@{}lll@{}}
        \toprule
        & Learn \( \dt \) & Fixed \( \dt \) \\
        \midrule
        Learn \( A \) & 96.07 & 95.20 \\
        Fixed \( A \) & 91.59 & 90.51 \\
        \bottomrule
    \end{tabular}
    \label{tab:dt-sc}
\end{table}

Finally, \cref{fig:dt-visualization} plots the \( \dt \) values at the beginning and end of training on the SpeechCommands-Raw dataset, confirming that training \( \dt \) does noticeably change their values to better model the data.
In particular, the \( \dt \) values spread over time to cover a larger range of timescales.

\begin{figure}[ht]
  \centering
  \begin{subfigure}[t]{0.5\linewidth}
    \centering
    \includegraphics[width=\textwidth]{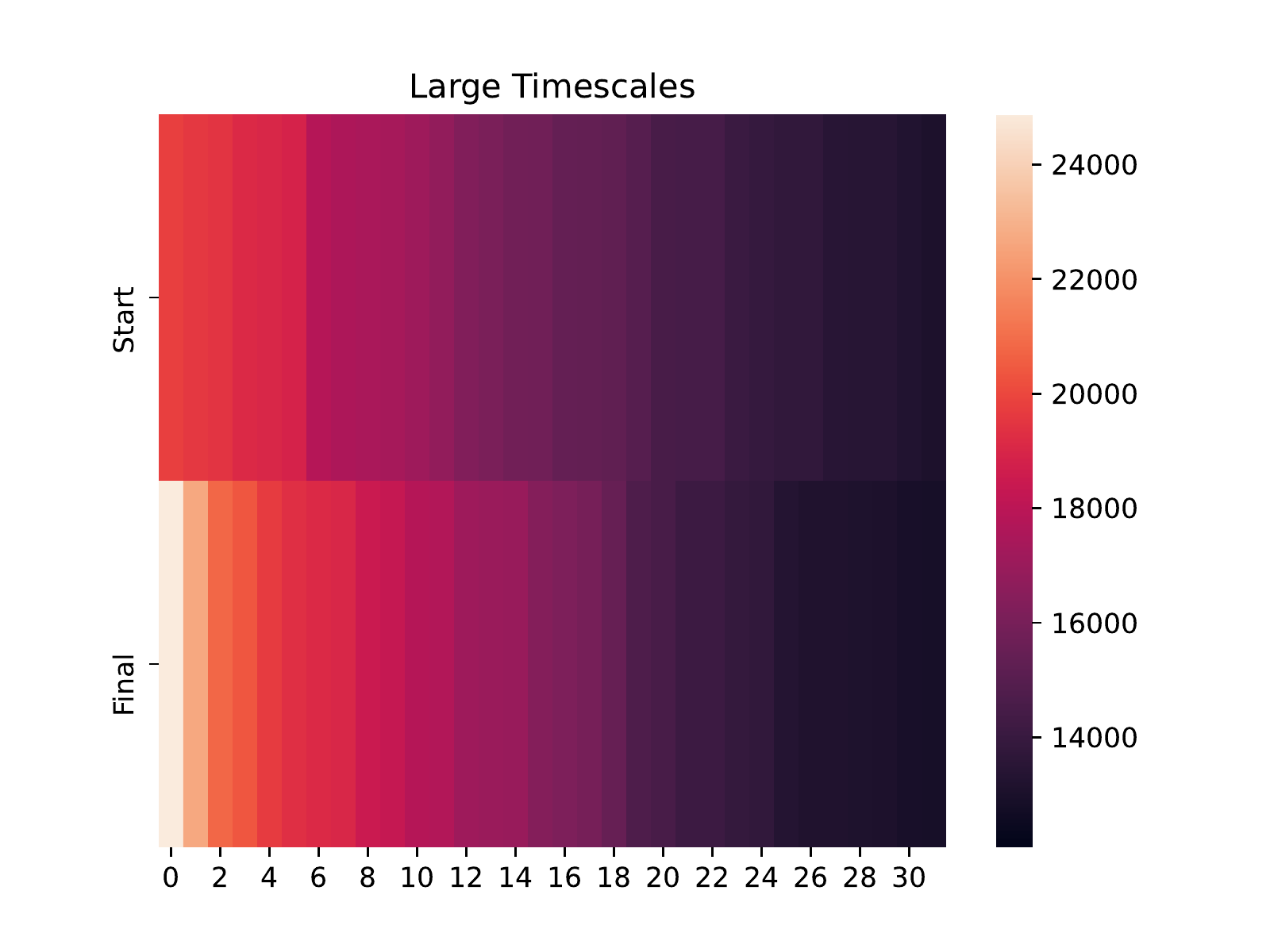}
  \end{subfigure}
  \begin{subfigure}[t]{0.5\linewidth}
    \centering
    \includegraphics[width=\textwidth]{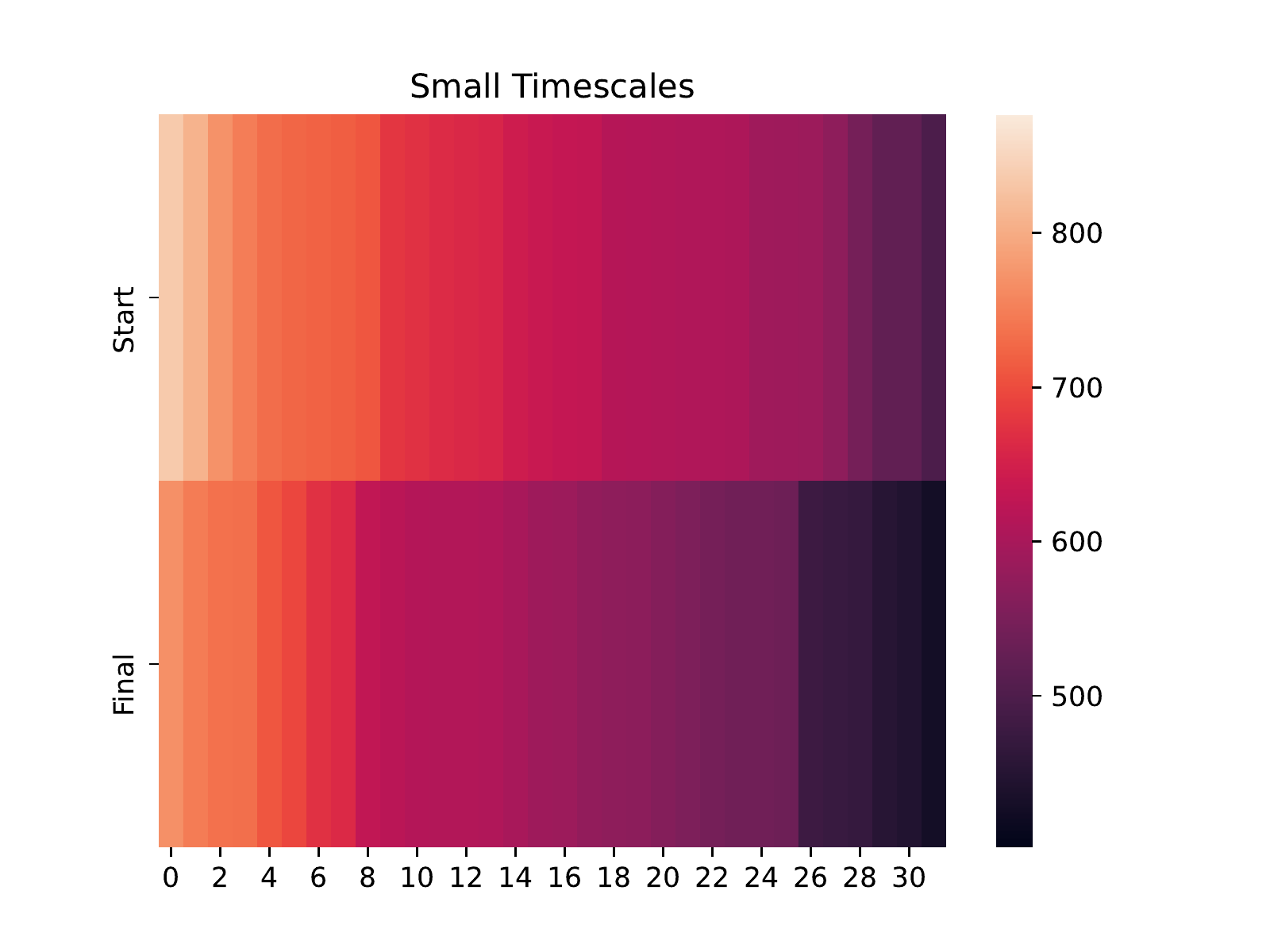}
  \end{subfigure}
  \caption{
    We visualize the 32 largest and smallest \( \dt \) values at the start and end of training for the first layer of our state-of-the-art LSSL model on the Speech Commands Raw dataset.
    The plots visualize $\frac{1}{\dt}$, which can be interpreted as the timescale at which they operate (\cref{sec:background}).
    The plots confirm that LSSL does modify the dt values in order to more appropriately model the speech data.
  }
  \label{fig:dt-visualization}
\end{figure}

\subsection{Methodology}
\label{sec:experiments:methodology}

We describe our training procedure on each dataset for our model and any relevant baselines.

\paragraph{General}
All models and datasets used the Adam optimizer with a LR decay scheduler that reduced LR by 5x upon validation plateau for 10 or 20 epochs.
We fixed the batch size to \( 50 \) for the MNIST/CIFAR datasets and \( 32 \) for other datasets, reducing if necessary to fit in memory.

For all models, we chose the hyperparameters that achieved the highest validation accuracy/RMSE (values in Table \ref{tab::best-hyperparameters}).

\paragraph{Error Bars}
We note that the results in \cref{sec:experiments} do not include standard deviations for formatting reasons, since most of the baselines were best results reported in previous papers without error bars.
As \cref{sec:discussion} noted, the LSSL was actually quite stable in performance and not particularly sensitive to hyperparameters.
We note that for every result in \cref{sec:experiments}, the LSSL with error bars was at least one standard deviation above the baseline results.

\subsubsection{Sequential and Permuted MNIST}

The model architecture of LSSL(-f) was fixed to the small architecture with 200K parameters (\cref{sec:model}).
Following \citep{romero2021ckconv}, we fixed the learning rate scheduler to decay on plateau by with a factor of \( 0.2 \), and the number of epochs to 200.
We searched hyperparameters over the product of the following learning rate values: $\{0.001, 0.002, 0.004, 0.01\}$, and dropout values: $\{0.1, 0.2\}$.

\subsubsection{Sequential CIFAR}

The model architecture of LSSL(-f) was fixed to the large architecture with 2M parameters (\cref{sec:model}).
We searched over the product of the following learning rate values: $\{0.001, 0.002, 0.004, 0.01, 0.02\}$, and dropout values: $\{0.2, 0.3, 0.4\}$.

\subsubsection{BIDMC Healthcare}

The BIDMC tasks aim at predicting three vital signs of a patient, respiratory rate (RR), heart rate (HR), and oxygen saturation (SpO2), based on PPG and ECG signals.
The clinical data is provided by the Beth Israel Deaconess Medical Center.
The PPG and ECG signals were sampled at $125$Hz and have a sequence length of $4000$.

For this dataset, we fixed the small LSSL(-f) model (\cref{sec:model}).
Following \citep{rusch2021unicornn}, we changed the scheduler to a multistep scheduler that decays on fixed epochs, and trained for 500 epochs.

For our methods, we searched over the product of the following learning rate values: $\{0.004, 0.01, 0.02\}$, and dropout values: $\{0.1, 0.2\}$.

\paragraph{Baseline parameters.}
For CKConv, we searched over \( \omega_0 \in [10, 50] \) following the guidelines of \citet{romero2021ckconv} (best value \( \omega_0=20 \)).
Since we tuned the sensitive \( \omega_0 \), we fixed the learning rate to $0.001$ and dropout to $0.1$ which was the default used in \citep{romero2021ckconv}.

The transformer model we used was a vanilla transformer with a hidden dimension of $256$, $8$ attention heads, $4$ layers, and a feedforward dimension of $1024$.
We used a learning rate of $0.001$ and a dropout of $0$.
We tried a few variants, but no transformer model was effective at all.

\subsubsection{CelebA}

For these larger datasets, we reduced the size of the order \( N \) and did not tie it to \( H \).
These experiments were computationally heavy and we did not do any tuning (i.e., \cref{tab::best-hyperparameters} are the only runs).
The model size was picked to train in a reasonable amount of time,
and the learning rate for the first attribute was picked based on general best hyperparameters for other datasets,
and then reduced for subsequent experiments on the other attributes.

\textbf{Baseline parameters.}
For ResNet-18, we used the standard implementation with a learning rate of $0.001$.

\subsubsection{Speech Commands}
For Speech Commands, we use the same dataset and preprocessing code from \citet{kidger2020neural,romero2021ckconv}.
We consider the two settings from \citet{kidger2020neural}:
SC-Raw uses very long time-series raw speech signals of 16000 timesteps each,
while SC-MFCC uses standard MFCC features of 161 timesteps.

For our models trained over the raw data, we searched over the product of the following learning rate values: $\{0.002, 0.004, 0.01\}$, and dropout values: $\{0.1, 0.2\}$.
For our models trained over the MFCC features, we searched over the product of the following learning rate values: $\{0.0001, 0.001, 0.002, 0.004, 0.01\}$, and dropout values: $\{0.1, 0.2, 0.3, 0.4\}$.

\textbf{Baseline parameters.}
To get more results for the strongest baselines on very long sequences in the literature,
we ran the UniCORNN~\citep{rusch2021unicornn} baseline on both Raw and MFCC variants,
and the Neural Rough Differential Equations~\citep{morrill2021neural} baseline on the Raw variant.

For UniCORNN trained over the raw data, we searched over multiple hyperparameters.
Specifically, we searched over alpha: $\{0, 10, 20, 30, 40\}$, \( \dt \) values: $\{0.00001, 0.0001, 0.001, 0.01\}$, and learning rate values: $\{0.0001, 0.0004, 0.001, 0.004\}$.
However, since the method was not able to generalize to the validation set for any hyperparameter combination, we used the authors' reported hyperparameters for the Eigenworms dataset as it also contains very long sequences (\( \approx 18000 \)).
In particular, we used a learning rate of $0.02$, hidden dimension of $256$, $3$ layers with dt values $[0.0000281, 0.0343, 0.0343]$, dropout of $0.1$, and alpha of $0$.

For UniCORNN trained over the MFCC features, we used the authors' reported hyperparameters for the MNIST dataset (again due to similarly sized sequence lengths), and further tuned the learning rate over the values: $\{0.0001, 0.001, 0.005, 0.01, 0.02\}$, \( \dt \) values: $\{0.01, 0.1\}$, and alpha values: $\{10, 20, 30\}$.

The best model used a learning rate of $0.02$, hidden dimension of $256$, $3$ layers with dt values of $0.19$, dropout of $0.1$, and alpha of $30.65$.

For NRDE on SC-Raw, we used depth $2$, step size $4$, hidden dimension $32$, and $3$ layers.
Our results were better than unofficial numbers reported in correspondence with the authors,
so we did not tune further.

\subsubsection{Convergence Speed \texorpdfstring{(\cref{tab:speed-training})}{}}

The convergence table compared against logs directly from the corresponding baseline's SoTA models \citep{rusch2021unicornn,romero2021ckconv}, which were either released publicly or found in direct correspondence with the authors.
To generate the wall clock numbers, we ran the baseline models on the same hardware as our models and extrapolated to the target epoch.

\subsection{Hyperparameters}

Best hyperparameters for all datasets are reported in \cref{tab::best-hyperparameters}.

\begin{table*}[!t]
  \caption{The values of the best hyperparameters found for each dataset.}
  \label{tab::best-hyperparameters}
  \centering
  \resizebox{\textwidth}{!}{%
    \begin{tabular}{@{}lllllllllll@{}}
      \toprule
      \multirow{1}{*}{\bf Dataset} & \multicolumn{7}{c}{\bf Hyperparameters} \\
      \midrule
                                   & {\bf Learning Rate}                      & {\bf Dropout} & {\bf Batch Size} & {\bf Epochs} & {\bf Depth} & {\bf Hidden Size} \( H \) & {\bf Order} \( N \) & {\bf Channels \( M \)}  &  \\
      \cmidrule(l){2-9}
      {\bf sMNIST}                 & $0.004$                                  & $0.2$         & $50$             & $200$        & $6$         & $128$                     & $128$               & $1$                    \\
      {\bf pMNIST}                 & $0.001$                                  & $0.2$         & $50$             & $200$        & $6$         & $128$                     & $128$               & $1$                    \\
      {\bf sCIFAR}                 & $0.02$                                   & $0.3$         & $50$             & $200$        & $4$         & $256$                     & $256$               & $4$                    \\
      {\bf BIDMC-RR}               & $0.004$                                  & $0.1$         & $32$             & $500$        & $6$         & $128$                     & $128$               & $1$                    \\
      {\bf BIDMC-HR}               & $0.01$                                   & $0.2$         & $32$             & $500$        & $6$         & $128$                     & $128$               & $1$                    \\
      {\bf BIDMC-SpO2}             & $0.01$                                   & $0.1$         & $32$             & $500$        & $6$         & $128$                     & $128$               & $1$                    \\
      {\bf SC Raw}                 & $0.01$                                   & $0.2$         & $16$             & $50$         & $4$         & $256$                     & $128$               & $2$                    \\
      {\bf SC MFCC}                & $0.004$                                  & $0.4$         & $32$             & $100$        & $6$         & $128$                     & $128$               & $1$                    \\
      {\bf sCelebA-Att.}           & $0.002$                                  & $0.1$         & $32$             & $200$        & $3$         & $256$                     & $128$               & $4$                    \\
      {\bf sCelebA-MSO}            & $0.002$                                  & $0.1$         & $32$             & $200$        & $3$         & $256$                     & $128$               & $4$                    \\
      {\bf sCelebA-Smil.}          & $0.01$                                   & $0.1$         & $32$             & $200$        & $3$         & $256$                     & $128$               & $4$                   \\
      {\bf sCelebA-WL}             & $0.002$                                  & $0.1$         & $32$             & $200$        & $3$         & $256$                     & $128$               & $4$                    \\
      \bottomrule
    \end{tabular}%
  }
\end{table*}

\end{document}